\renewcommand{\P}{\mathbf{P}}
\newcommand{\E}{\mathbf{E}}
\newtheorem{assumption}{Assumption}
\DeclareMathOperator{\trace}{Tr}
\begin{document}

\title{Optimal Statistical Rates for Decentralised Non-Parametric\\Regression with Linear Speed-Up}

\author{\name Dominic Richards \email dominic.richards@spc.ox.ac.uk \\
       \addr Department of Statistics\\
       University of Oxford\\
       24-29 St Giles', Oxford, OX1 3LB
       \AND
       \name Patrick Rebeschini \email patrick.rebeschini@stats.ox.ac.uk \\
       \addr Department of Statistics\\
       University of Oxford\\
       24-29 St Giles', Oxford, OX1 3LB
}

\editor{}

\maketitle

\thispagestyle{plain}

\begin{abstract}%
We analyse the learning performance of Distributed Gradient Descent in the context of multi-agent decentralised non-parametric regression with the square loss function when i.i.d.\ samples are assigned to agents. We show that if agents hold sufficiently many samples with respect to the network size, then Distributed Gradient Descent achieves optimal statistical rates with a number of iterations that scales, \emph{up to a threshold}, with the inverse of the spectral gap of the gossip matrix divided by the number of samples owned by each agent raised to a problem-dependent power. The presence of the threshold comes from statistics. It encodes the existence of a ``big data'' regime where the number of required iterations does \emph{not} depend on the network topology. In this regime, Distributed Gradient Descent achieves optimal statistical rates with the \emph{same} order of iterations as gradient descent run with all the samples in the network. Provided the communication delay is sufficiently small, the distributed protocol yields a \emph{linear} speed-up in runtime compared to the single-machine protocol. This is in contrast to decentralised optimisation algorithms that do not exploit statistics and only yield a linear speed-up in graphs where the spectral gap is bounded away from zero. Our results exploit the statistical concentration of quantities held by agents and shed new light on the interplay between statistics and communication in decentralised methods. Bounds are given in the standard non-parametric setting with source/capacity assumptions.
\end{abstract}

\section{Introduction}
In machine learning a canonical goal is to use training data sampled independently from an unknown distribution to fit a model that performs well on unseen data from the same distribution. With a loss function measuring the performance of a model on a data point, a common approach is to find a model that minimises the average loss on the training data with some form of \emph{explicit} regularisation to control model complexity and avoid overfitting. Due to the increasingly large size of datasets and high model complexity, direct minimisation of the regularised problem is posing more and more computational challenges. This has led to growing interest in approaches that improve models incrementally using gradient descent methods \citep{bousquet2008tradeoffs}, where model complexity is controlled through forms of \emph{implicit/algorithmic} regularisation such as early stopping and step-size tuning \citep{yao2007early,ying2008online,lin2017optimal}. 
 
The growth in the size of modern datasets has also meant that the coordination of multiple machines is often required to fit machine learning models.  
In the centralised server-clients setup, a single machine (server) is responsible to aggregate and disseminate information to other machines (clients) in what is an effective star topology. In some settings, such as ad-hoc wireless and peer-to-peer networks, network instability, bandwidth limitation and privacy concerns make centralised approaches less feasible. This has motivated research into scalable methods that can avoid the bottleneck and vulnerability introduced by the presence of a central authority. Such solutions are called ``decentralised'', as no single entity is responsible for the collection and dissemination of information: machines communicate with neighbours in a network structure that encodes communication channels. 

Since the early works \cite{tsitsiklis1986distributed,tsitsiklis1984problems} to the more recent work \cite{johansson2007simple,nedic2009distributed,Nedic2009,johansson2009randomized,lobel2011distributed,
matei2011performance,Boyd:2011:DOS:2185815.2185816,DAW12,shi2015extra,mokhtari2016dsa}, problems in decentralised multi-agent optimisation have often been treated as a particular instance of consensus optimisation. In this framework, a network of machines or agents collaborate to minimise the average of functions held by individual agents, hence ``reaching consensus'' on the solution of the global problem. In this setting the performance of the chosen protocol naturally depends on the network topology, since to solve the problem each agent \emph{has to} communicate and receive information from all other agents. In particular, the number of iterations required by decentralised iterative gradient methods typically scales with the inverse of the spectral gap of the communication matrix (a.k.a. gossip or consensus matrix) \citep{DAW12,scaman2017optimal,scaman2018optimal}, which reflects the performance of gossip protocols in the problem of distributed averaging \citep{boyd2006randomized,dimakis2008geographic,NET-014,benezit2010order}. 

Many distributed machine learning problems, in particular those involving empirical risk minimisation, have been framed in the context of consensus optimisation. However, as highlighted in \cite{shamir2014distributed} and more recently in \cite{2018arXiv180906958R}, often these problems have more structure than consensus optimisation due to the statistical regularity of the data.
When the agents' functions are the empirical risk of their local data, in the setting where the local data comes from the \emph{same} unknown distribution (homogeneous setting), the functions held by each agent are similar to one another by the phenomenon of statistical concentration. In particular, in the limit of an infinite amount of data per agent, the local functions are the same and agents do \emph{not} need to communicate to solve the problem. This phenomenon highlights the existence of a natural trade-off between statistics and communication. While statistical similarities of local objective functions and the statistics/communication trade-off have been investigated and exploited in centralised server-clients setup, typically in the analysis and design of divide-and-conquer schemes 
\citep{zhang2015divide,lin2017distributed,
guo2017learning,mucke2018parallelizing,lin2018optimal,agarwal2011distributed,zhang2012communication,
shamir2014distributed,shamir2014fundamental,zhang2015disco,arjevani2015communication}, only recently there has been some investigation into the interplay between statistics and communication/network-topology in the decentralised setting. 
The authors in \cite{bijral2017data} investigate the interplay between the spectral norm of the data-generating distribution and the inverse spectral gap of the communication matrix for Distributed Stochastic Gradient Descent in the case of strongly convex losses. As most of the literature on decentralised machine learning, this work also focuses on minimising the training error and not the test/prediction error (numerical experiments are given for the test error).
Some works have investigated the performance on the test loss in the single-pass/online stochastic setting where agents use each data point only once. The authors in \cite{Rabbat15,tsianos2016efficient} investigate a distributed regularised online learning setting \citep{xiao2010dual} and obtain guarantees for a ``multi-step'' Distributed Stochastic Mirror Descent algorithm where agents reach consensus on their stochastic gradients in-between computation steps. 
The works \cite{lian2017can} and \cite{assran2018stochastic} consider the performance of Distributed Stochastic Gradient Descent algorithms in the non-convex smooth case. They investigate the average performance of the agents over the network in terms of convergence to a stationary point of the test loss \citep{ghadimi2013stochastic} and show that a linear speed-up in computational time can be achieved provided the number of samples seen, equivalently the number of iterations performed, exceeds the network size times the inverse of the spectral gap, each raised to a certain power.
The work \cite{2018arXiv180906958R} seems to be the first to have considered minimisation of the test error in the multi-pass/offline stochastic setting that more naturally relates to the classical literature on consensus optimisation. The authors investigate stability of Distributed Stochastic Gradient Descent on the test error and show that for smooth and convex losses the number of iterations required to achieve optimal statistical rates scales with the inverse of the spectral gap of the gossip matrix, a term that captures the noise of the gradients' estimates, and a term that controls the statistical proximity of the local empirical losses. 

\subsection{Contributions}
In this work we investigate the implicit-regularisation learning performance of full-batch Distributed Gradient Descent \citep{Nedic2009} on the test error in the context of non-parametric regression with the square loss function. In the homogeneous setting where agents hold independent and identically distributed data points, we investigate the choice of step size and number of iterations that guarantee each agent to achieve optimal statistical rates with respect to all the samples in the network. We build a theoretical framework that allows to directly and explicitly exploit the statistical concentration of quantities (i.e.\ batched gradients) held by agents. On the one hand, exploiting concentration yields savings on computation, i.e.\ it allows to achieve faster convergence rates compared to methods that do not exploit concentration in their parameter tuning. On the other hand, it yields savings on communication, as it allows to take advantage of the trade-off between statistical power and communication costs. Firstly, we show that if agents hold sufficiently many samples with respect to the network size, then Distributed Gradient Descent achieves optimal statistical rates up to poly-logarithmic factors with a number of iterations that scales with the inverse of the spectral gap of the communication matrix divided by the number of samples owned by each agent raised to a problem-dependent power, up to a statistics-induced threshold. Previous results for decentralised iterative gradient schemes in the context of consensus optimisation do not take advantage of the statistical nature of decentralised empirical risk minimisation problems. In the statistical setting that we consider, these methods would require a larger number of iterations that scales only with respect to the inverse of the spectral gap. Secondly, we show that if agents additionally hold sufficiently many samples with respect to the inverse of the spectral gap, then the \emph{same} order of iterations allows Distributed Gradient Descent and Single-Machine Gradient Descent (i.e.\ gradient descent run on a single machine that holds all the samples in the network) to achieve optimal statistical rates up to poly-logarithmic factors. Provided the communication delay is sufficiently small, this yields a \emph{linear} speed-up in runtime over Single-Machine Gradient Descent, with a ``single-step'' method that performs a single communication round per local gradient descent step. Single-step methods that do not exploit concentration can only achieve a linear speed-up in runtime in graphs with spectral gap bounded away from zero, i.e.\ expanders or the complete graph. Our results demonstrate how the increased statistical similarity between the local empirical risk functions can make up for a decreased connectivity in the graph topology, showing that a linear speed-up in runtime can be achieved in \emph{any} graph topology by exploiting concentration. 
To the best of our knowledge, we seem to be the first to isolate this type of phenomena.

We prove our results under the standard ``source'' and ``capacity'' assumptions in non-parametric regression. These assumptions relate, respectively, to the projection of the optimal predictor on the hypothesis space and to the effective dimension of this space \citep{zhang2005learning,caponnetto2007optimal}. A contribution of this work is to show that proper tuning yields, up to poly-logarithmic terms, optimal non-parametric rates in decentralised learning. As far as we aware, in the distributed setting such guarantees have been established only for centralised divide-and-conquer methods \citep{zhang2015divide,lin2017distributed,
guo2017learning,mucke2018parallelizing,lin2018optimal}.

To prove our results we build upon previous work for Single-Machine Gradient Descent applied to non-parametric regression, in particular the line of works \cite{yao2007early,rosasco2015learning,lin2017optimal}.
Exploiting that in our setting the iterates of Distributed Gradient Descent can be written in terms of products of linear operators depending on the data held by agents, we decompose the excess risk into bias and sample variance terms for Single-Machine Gradient Descent plus an additional quantity that captures the error incurred by using a decentralised protocol over the communication network. We analyse this network error term by further decomposing it into a term that behaves similarly to the consensus error previously considered in \cite{DAW12,Nedic2009}, and a new higher-order term. We control both terms by using the structure of the gradient updates, which allows us to analyse the interplay between statistics, via concentration, and network topology, via mixing of random walks related to the gossip matrix.

The work is structured as follows. Section \ref{sec:DecentralisedLearning} presents the setting, assumptions, and algorithm that we consider. Section \ref{sec:Results:Main} states the main convergence result and discusses implications from the point of view of statistics, computation and communication. Section \ref{sec:Results:Detailed} presents the error decomposition into bias, variance, and network error, and it illustrates the implicit regularisation strategy that we adopt. Section \ref{sec:Conclusion} highlights some of the features of our contribution in the light of future research directions. The appendix in the supplementary material is structured as follows. Section \ref{sec:remarks} includes some remarks about our results. Section \ref{sec:ProofSketch} illustrates the main scheme of the proofs, highlighting the interplay between statistics and network topology. Section \ref{Appendix:Proofs} contains the full details of the proofs.

\section{Setup}
\label{sec:DecentralisedLearning}
In this section we describe the learning problem, assumptions and algorithm that we consider.

\subsection{Learning problem: decentralised non-parametric least-squares regression}
\label{sec:DecentralisedLearning:Setup}
We adopt the setting used in \cite{rosasco2015learning,lin2017optimal}, which involves regression in abstract Hilbert spaces. This setting is of relevance for applications related to the Reproducing Kernel Hilbert Space (RKHS). See the work in \cite{yao2007early} and references therein.

Let $H$ be a separable Hilbert Space with inner product and induced norm denoted by $\langle \,\cdot\,,\,\cdot\, \rangle_{H}$ and $\|\cdot\|_{H}$, respectively. Let $X \subseteq H$ be the input space and $Y \subset \mathbb{R}$ be the output space. Let $\rho$ be an unknown probability measure on $Z = X \times Y$, $\rho_{X}(\,\cdot\,)$ be the marginal on $X$, and $\rho(\,\cdot\,|x)$ be the conditional distribution on $Y$ given $x \in X$. Assume that there exists a constant $\kappa \in [1,\infty)$ so that 
\begin{align}
\label{Assumption:BoundedProduct}
	\langle x,x^{\prime}\rangle_{H} \leq \kappa^2, \quad 
	\forall x, x^{\prime} \in X. 
\end{align}
Let the network of agents be modelled by a simple, connected, undirected, finite graph $G=(V,E)$, with $|V|=n$ nodes joined by edges $E \subseteq V \times V$. Edges represent communication constraints: agents $v,w \in V$ can only communicate if they share an edge $(v,w) \in E$. We consider the homogeneous setting where each agent $v \in V$ is given $m$ data points $\mathbf{z}_{v} := \{\mathbf{x}_{v},\mathbf{y}_{v} \}$  sampled independently from $\rho$, where  
$\mathbf{x}_v = \{x_{i,v}\}_{i=1,\dots,m}$ and $\mathbf{y}_v = \{y_{i,v}\}_{i=1,\dots,m}$, and each pair $(x_{i,v},y_{i,v})$ is sampled from $\rho$.
The problem under study is the minimisation of the test/prediction risk with the square loss:
\begin{align}
\label{Equ:Objective}
\inf_{\omega \in H} \mathcal{E}(\omega), \qquad 
\mathcal{E}(\omega) 
= \int_{X \times Y} (\langle \omega,x\rangle_{H} - y)^2 d\rho(x,y),
\end{align}
The quality of an approximate solution $\widehat{\omega} \in H$ is measured by the excess risk
$
\mathcal{E}(\widehat{\omega}) - \inf_{\omega \in H} \mathcal{E}(\omega).
$
\paragraph{Notation}
Given a matrix $A \in \mathbb{R}^{n \times n}$, let $A_{vw}$ denote the 
$(v,w)\text{-th}$ element and $A_{v} = (A_{vw})_{w=1,\dots,n}$ denote the $v \text{-th}$ row. Let $O(\,\cdot\,)$ denote orders of magnitudes up to constants in $n$ and $m$, and  $\widetilde{O}(\,\cdot\,)$ denote orders of magnitudes up to both constants and poly-logarithmic terms in $n$ and $m$. Let $\lesssim,\gtrsim,\simeq$ denote inequalities and equalities modulo constants and poly-logarithmic terms in $n, m$. We use the notation $a \vee b = \max\{a,b\}$ and $a \wedge b = \min\{a,b\}$.

\subsection{Assumptions}
\label{sec:Results:Assumptions}

The assumptions that we consider are standard in non-parametric regression \citep{lin2017optimal,pillaud2018statistical}. The first assumption is a control on the even moments of the response.
\begin{assumption}
\label{Assumption:Moments}
There exist $M \in (0,\infty)$ and $\nu \in (1,\infty)$ such that for any $\ell \in \mathbb{N}$ we have \\ $
\int_{Y} y^{2\ell} d\rho(y|x) \leq \nu \ell! M^{\ell}
\text{ $\rho_{X}$-almost surely}.
$
\end{assumption}

Let $L^2(H,\rho_{X})$ be the Hilbert space of square-integrable functions from $H$ to $\mathbb{R}$ with respect to $\rho_{X}$, with norm $
\|f\|_{\rho} := ( 
\int_{X} |f(x)|^2 d\rho_{X}(x)
)^{1/2}
$.
Let $\mathcal{L}_{\rho} : L^{2}(H,\rho_{X}) \rightarrow L^{2}(H,\rho_{X})$ be the operator defined as $\mathcal{L}_{\rho}(f) := \int_{X} \langle x,\,\cdot\, \rangle_{H} f(x) d\rho_{X}(x)$. 
Under Assumption \ref{Assumption:BoundedProduct} the operator $\mathcal{L}_{\rho}$ can be proved to be in the class of positive trace operators \citep{cucker2007learning}, and therefore the $r$-th power $\mathcal{L}^{r}_{\rho}$, with $r \in \mathbb{R}$, can be defined by using spectral theory. Let us also define the operator $\mathcal{T}_{\rho}:H \rightarrow H$ as $\mathcal{T}_{\rho} := \int_{X} \langle x,\,\cdot\,\rangle_{H} x d\rho_{X}(x)$ and its operator norm $\|\mathcal{T}_\rho\| := \sup_{\omega \in H, \|\omega\|_{H} = 1} \|\mathcal{T}_\rho \omega \|_{H}$. 
The function minimising the expected squared loss \eqref{Equ:Objective} over all measurable functions  $f:H \rightarrow \mathbb{R}$ is known to be the conditional expectation 
$f_{\rho}(x) := \int_{Y} y d\rho(y|x)$ for $x \in \mathcal{X}$.
Let $H_{\rho} := \{f : X \rightarrow \mathbb{R}\,|\,\exists \omega \in H \text{ with } f(x) = \langle w,x\rangle_{H}, \rho_{X}\text{-almost surely}\}$ be the hypothesis space that we consider. The optimal $f_{\rho}$ may not be in $H_{\rho}$ as under Assumption \ref{Assumption:BoundedProduct} the space of functions searched $H_{\rho}$ is a subspace of $L^{2}(H,\rho_{X})$. Let $f_{H}$ denote the projection of $f_{\rho}$ onto the closure of $H_{\rho}$ in $L^{2}(H,\rho_{X})$. Searching for a solution to \eqref{Equ:Objective} is equivalent to searching for a linear function in $H_{\rho}$ that approximates $f_{H}$.\\
The following assumption quantifies how well the target function $f_{H}$ can be approximated in $H_{\rho}$. 
\begin{assumption}
\label{Assumption:Source}
There exist $r > 0$ and $R > 0$ such that $\|\mathcal{L}_{\rho}^{-r} f_{H}\|_{\rho} \leq R$.
\end{assumption}
This assumption is often called the ``source'' condition \citep{caponnetto2007optimal}. Representing $f_{H}$ in the eigenspace of $\mathcal{L}_{\rho}$, this condition can be related to the rate at which the coefficients of this representation decay. The bigger $r$ is, the faster the decay, and more stringent the assumption is. In particular, if $r \geq 1/2$ then the target function is in the hypothesis space $f_{H} \in H_{\rho}$. 
The last assumption is on the capacity of the hypothesis space.
\begin{assumption}
\label{Assumption:Capacity}
There exist $\gamma \in (0,1],c_{\gamma} > 0$ such that 
$
\trace ( \mathcal{L}_{\rho}\big( \mathcal{L}_{\rho} \!+\! \lambda I\big)^{-1}) \leq
c_{\gamma} \lambda^{-\gamma}$ for all $\lambda > 0$.
\end{assumption}
Assumption \ref{Assumption:Capacity} relates to the effective dimension of the underlying regression problem  \citep{zhang2005learning,caponnetto2007optimal} and is often called the ``capacity'' assumption. This assumption is always satisfied for $\gamma =1$ and $c_{\gamma} = \kappa^2$ since $\mathcal{L}_{\rho}$ is a trace class operator. This case is called the capacity-independent setting. Meanwhile, this assumption is satisfied for $\gamma \in (0,1]$ if, for instance, the eigenvalues of $\mathcal{L}_\rho$, denoted by $\{\tau_i\}_{i \geq 1}$, decay sufficiently quickly, i.e.\  $\tau_i = O(i^{-1/\gamma})$.  This case allows improved rates to be obtained.
For more details on the interpretation of these assumptions we refer to the work in
\cite{rosasco2015learning,lin2017optimal,pillaud2018statistical}.

\subsection{Algorithm: distributed gradient descent}
\label{sec:DecentralisedLearning:DistributedGD}
We now describe the Distributed Gradient Descent 
 algorithm \citep{Nedic2009} and its application to the problem of non-parametric regression. Let $P \in \mathbb{R}^{n \times n}_{\geq 0}$ be a symmetric doubly-stochastic matrix, i.e.\ $P = P^{\top}$ and $P \mathbf{1} = \mathbf{1}$ where $\mathbf{1} = (1,\dots,1) \in \mathbb{R}^{n}$ is the vector of all ones. Let $P$ be supported on the graph, i.e.\ for any $v\neq w$, $P_{vw} \not= 0 $ only if $(v,w) \in E$. 
 The matrix $P$ encodes local averaging on the network: when each agent has a real number represented by the vector $a = (a_{v})_{v \in V} \in \mathbb{R}^{n}$, the vector $(P a)_{v} = \sum_{w \in V} P_{vw} a_{w}$ for $v \in V$ encodes what each agent computes after taking a weighted average of its own and neighbours' numbers. 
Distributed Gradient Descent is implemented by communication on the network through the gossip matrix $P$. Initialised at $w_{1,v} = 0$ for $v \in V$, the iterates of the Distributed Gradient Descent  are defined as follows, for $v \in V$ and $t \geq 1$:
\begin{align}
\label{alg:DistributedGD}
\omega_{t+1,v} = 
\sum_{w \in V} 
P_{vw} 
\Big( 
\omega_{t,w} 
- 
\eta_{t} 
\frac{1}{m} 
\sum_{i=1}^{m}
\big(
\langle \omega_{t,w},x_{i,w} \rangle_H - y_{i,w}
\big)x_{i,w}
\Big),
\end{align}
where $\{\eta_{t}\}_{t \ge 1}$ is the sequence of positive step sizes. The iterates \eqref{alg:DistributedGD} can be seen as a combination of two steps: first, each agent $w \in V$ performs a local gradient descent step 
$\omega_{t+1/2,w} = \omega_{t,w} 
- 
\eta_{t} 
\frac{1}{m} 
\sum_{i=1}^{m}
\big(
\langle \omega_{t,w},x_{i,w}\rangle_H - y_{i,w} 
\big)x_{i,w}$; second, each agent performs local averaging 
through the consensus step\footnote{
We note, while this assumes agents communicate infinite dimensional quantities in the general non-parametric setting, the framework we consider accommodates finite approximations of infinite dimensional quantities whilst accounting for the statistical precision \cite{carratino2018learning}.
}
$\omega_{t+1,v} = \sum_{w \in V} P_{vw}\omega_{t+1/2,w}$. We treat gradient descent as a statistical device. We are interested in tuning the parameters of the algorithm to bound the expected value of the excess risk $\E[\mathcal{E}(\omega_{t+1,v})] - \inf_{\omega \in H} \mathcal{E}(\omega),$ where $\E[\,\cdot\,]$ denotes expectation with respect to the data $\{\mathbf{z}_{v}\}_{v\in V}$.

\paragraph{Network dependence}
Let $\sigma_2$ be the second largest eigenvalue in magnitude of the communication matrix $P$. Specifically, given the spectral decomposition of the gossip matrix $P = \sum_{l=1}^{n} \lambda_{l} u_{l} u_{l}^{\top}$ where $1= \lambda_1 \geq \lambda_2 \geq,\dots,\geq \lambda_n >-1$ are the ordered real eigenvalues of $P$ and $\{u_{l}\}_{l=1,\dots,n}$ the associated eigenvectors, we have $\sigma_2 := \max\{|\lambda_2|,|\lambda_{n}|\}$. 
In many settings, the spectral gap scales with the size of the network raised to a certain power depending on the topology. For instance, supposing $G$ is a finite regular graph and the communication matrix is the random walk matrix, then the inverse of the spectral gap $(1-\sigma_2)^{-1}$ scales as $\Theta(1)$ for a complete graph, $\Theta(n)$ for a grid, and $\Theta(n^2)$ for a cycle \citep{chung1997spectral,levin2017markov,DAW12}. The question of designing gossip matrices $P$ that yield better (smaller) scaling for the quantity $(1-\sigma_2)^{-1}$ has been investigated \citep{Xiao2004}, and it has been found numerically that the rates mentioned above can not be improved unless lifted graphs are considered \citep{NET-014}.

\section{Main result: optimal statistical rates with linear speed-up in runtime}
\label{sec:Results:Main}
We now state and highlight the main contribution of this work in the context of decentralised statistical optimisation. The result that we are about to state in Theorem \ref{Cor:Main} showcases the interplay between statistics and communication that arise from the statistical regularities of the problem. This result shows the existence of a ``big data'' regime where Distributed Gradient Descent can achieve a linear (in the number of agents $n$) speed-up in runtime compared to Single-Machine Gradient Descent.

\begin{theorem}
\label{Cor:Main}
Let Assumptions \ref{Assumption:Moments}, \ref{Assumption:Source}, \ref{Assumption:Capacity} hold with $r \geq 1/2$ and  $2r+\gamma > 2$. Let $t$ be the smallest integer greater than the quantity
\begin{align*}
    \underbrace{ (nm)^{1/(2r+\gamma)} }_{\text{Single-Machine Iterations}} 
    \times \begin{cases}
     \Big( \frac{(nm)^{2r/(2r+\gamma)}}{ m (1-\sigma_2)^{\gamma} } \Big)^{1/\gamma}  
      \vee 1  & \text{ if } m \geq n^{2r/\gamma} \\
     \frac{(nm)^{r/(2r+\gamma)}}{\sqrt{m}(1-\sigma_2)}
     & \text{otherwise}
    \end{cases}
\end{align*}
Let $\eta_s \equiv \eta = \frac{\kappa^{-2} (nm)^{1/(2r+\gamma)}}{t}$ $\forall s\ge 1$.
If $m \geq n^{\frac{2r + 2 +\gamma}{2r +\gamma -2}}$ and $n \geq 2 (1+r) 
\log ( \frac{n}{1-\sigma_2} )$, then $\forall v \in V$:
\begin{align*}
& \E[\mathcal{E}(\omega_{t+1,v})] 
- \inf_{\omega \in H} \mathcal{E}(\omega)
\leq 
C (nm)^{-2r/(2r+\gamma)},
\end{align*}
where $C$ depends on  $\kappa^2,\|\mathcal{T}_{\rho}\|,M,\nu,r,R,\gamma,c_{\gamma}$, and polynomials of $\log(nm)$ and $\log(\frac{1}{1-\sigma_2})$. 
\end{theorem}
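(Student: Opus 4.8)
The plan is to follow the bias, variance, and network decomposition announced in the introduction. Write $S_\rho\colon H\to L^2(H,\rho_X)$ for the inclusion $S_\rho\omega=\langle\omega,\cdot\,\rangle_H$, so that $\mathcal{T}_\rho=S_\rho^\ast S_\rho$, $\mathcal{L}_\rho=S_\rho S_\rho^\ast$ and $\mathcal{E}(\omega)-\inf_{\omega\in H}\mathcal{E}(\omega)=\|S_\rho\omega-f_H\|_\rho^2$. For each agent $w$ set $\widehat T_w=\frac1m\sum_{i=1}^m\langle\cdot\,,x_{i,w}\rangle_H x_{i,w}$ and $\widehat b_w=\frac1m\sum_{i=1}^m y_{i,w}x_{i,w}$, so that \eqref{alg:DistributedGD} reads $\omega_{t+1,v}=\sum_w P_{vw}\big((I-\eta_t\widehat T_w)\omega_{t,w}+\eta_t\widehat b_w\big)$. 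Introduce the network average $\bar\omega_t=\frac1n\sum_v\omega_{t,v}$ and the pooled empirical operators $\widehat T=\frac1n\sum_w\widehat T_w$, $\widehat b=\frac1n\sum_w\widehat b_w$, which are exactly the objects driving Single-Machine Gradient Descent on all $nm$ samples. Since $P$ is doubly stochastic, $\bar\omega_{t+1}=\bar\omega_t-\eta_t(\widehat T\bar\omega_t-\widehat b)-\eta_t\frac1n\sum_w\widehat T_w(\omega_{t,w}-\bar\omega_t)$, i.e.\ the average follows single-machine gradient descent up to a perturbation that is itself of ``network'' type. One then bounds, for each $v$,
\[
\|S_\rho\omega_{t+1,v}-f_H\|_\rho\le\|S_\rho\mu_{t+1}-f_H\|_\rho+\|S_\rho(\bar\omega_{t+1}-\mu_{t+1})\|_\rho+\|S_\rho(\omega_{t+1,v}-\bar\omega_{t+1})\|_\rho,
\]
where $\mu_t$ is the genuine Single-Machine Gradient Descent iterate with step size $\eta$ on $nm$ samples; the first term is the single-machine bias plus variance, the last two are network errors.

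For the single-machine term I would reuse the analysis of spectral/gradient filters of \cite{yao2007early,rosasco2015learning,lin2017optimal}: the effective regularisation is $\lambda\simeq(\eta t)^{-1}=\kappa^2(nm)^{-1/(2r+\gamma)}$, and in squared $\|\cdot\|_\rho$-norm the term splits into a deterministic bias $\lesssim(\eta t)^{-2r}\simeq(nm)^{-2r/(2r+\gamma)}$ controlled by Assumption~\ref{Assumption:Source}, and a sample-variance term $\lesssim\mathcal N(\lambda)/(nm)+(\text{lower order})$ with $\mathcal N(\lambda):=\trace(\mathcal{L}_\rho(\mathcal{L}_\rho+\lambda I)^{-1})\le c_\gamma\lambda^{-\gamma}$ by Assumption~\ref{Assumption:Capacity}, giving again $(nm)^{-2r/(2r+\gamma)}$ because $\lambda$ is the balancing scale. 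This step uses Assumption~\ref{Assumption:Moments} and operator/Hilbert--Bernstein concentration of $\widehat T$ around $\mathcal{T}_\rho$ and of the gradient noise, which needs the total sample count $nm$ and is responsible for the first polynomial lower bound on $m$ (so that the scale $\lambda$ is resolved and the step-size-versus-iteration discretisation residuals are negligible). Crucially only the product $\eta t$ enters, so it is legitimate to take $t\gg(nm)^{1/(2r+\gamma)}$ provided $\eta$ is scaled down to keep $\eta t$ fixed.

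The core of the proof is the network error. Unrolling $\omega_{t+1,v}-\bar\omega_{t+1}=\sum_w(P_{vw}-\tfrac1n)\big((I-\eta_t\widehat T_w)\omega_{t,w}+\eta_t\widehat b_w\big)$ recursively, applying $S_\rho(\mathcal{T}_\rho+\lambda I)^{-1/2}$ and inserting resolvents, one gets two kinds of contributions: (i) a consensus term, linear in the per-agent perturbations $\widehat T_w-\widehat T$, $\widehat b_w-\widehat b$, of the type appearing in \cite{Nedic2009,DAW12}; and (ii) higher-order terms with products of these perturbations interleaved with powers of $P-\tfrac1n\mathbf{1}\mathbf{1}^\top$. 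Every factor $(P-\tfrac1n\mathbf{1}\mathbf{1}^\top)^k$ contributes geometric mixing $\sigma_2^k$, so the lag-sums collapse to a factor $(1-\sigma_2)^{-1}$; the partial products $\prod_s(I-\eta\widehat T_w)$ and the factors $\eta\widehat T_w$ are controlled by $0\preceq\widehat T_w\preceq\kappa^2 I$ together with $\eta t=\kappa^{-2}(nm)^{1/(2r+\gamma)}$; and the per-agent deviations are bounded, via Hilbert--Bernstein inequalities under Assumptions~\ref{Assumption:Moments}--\ref{Assumption:Capacity}, by $m^{-1/2}$ times $\lambda$-dependent effective-dimension factors (such as $\sqrt{\mathcal N(\lambda)/(m\lambda)}$, with a capacity-free alternative $\lesssim m^{-1/2}$ when the per-agent sample size is comparatively small). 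The consensus term is then of the form $(1-\sigma_2)^{-1}(\eta t)\cdot(\text{per-agent deviation})$ up to poly-logarithmic factors, and the higher-order terms are its iterated analogues; requiring the whole network error to be $\lesssim(nm)^{-2r/(2r+\gamma)}$ and optimising $\eta$ (equivalently $t$, since $\eta t$ is pinned) produces exactly the two-case expression for $t$. The ``$\vee\,1$'' is the big-data threshold: once $m$ is large relative to $(1-\sigma_2)^{-1}$ the per-agent deviation is already small enough that the single-machine iteration count $(nm)^{1/(2r+\gamma)}$ suffices; the split at $m=n^{2r/\gamma}$ is precisely the threshold below which this regime is unreachable even for a complete graph, so some network-dependent slowdown is unavoidable, and the two functional forms correspond to which per-agent concentration estimate is used (one exploiting the capacity assumption, hence carrying the $\gamma$-th power, and one capacity-free). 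The remaining conditions---$m\ge n^{(2r+2+\gamma)/(2r+\gamma-2)}$, finite exactly because $2r+\gamma>2$, and $n\ge2(1+r)\log(\tfrac n{1-\sigma_2})$---are what make the higher-order network terms and the discretisation residuals strictly lower order than $(nm)^{-2r/(2r+\gamma)}$ and translate the spectral mixing of $P$ into bounds that are poly-logarithmic in $n$ and in $(1-\sigma_2)^{-1}$.

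The main obstacle is step (ii), the higher-order network term. Because the operators $\widehat T_w$ do not commute with one another or with $P$, one cannot diagonalise and must instead propagate operator inequalities through alternating products of data operators and shifted gossip matrices while simultaneously tracking (a) the geometric mixing in the $P$-factors, (b) the $m^{-1/2}$ statistical smallness in the perturbation factors, and (c) the boundedness $\eta t\simeq(nm)^{1/(2r+\gamma)}$ that keeps the leftover $(I-\eta\widehat T_w)$ products from growing. Making these three effects combine into a bound that is genuinely of lower order than the consensus term---rather than merely of the same order---is the delicate point, and is what forces the extra assumption $2r+\gamma>2$ together with the polynomial lower bounds on $m$ in $n$ and in $(1-\sigma_2)^{-1}$.
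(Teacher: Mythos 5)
Your plan follows the same architecture as the paper's proof: a three-way split into bias, single-machine sample variance and network error; a further split of the network error into a first-order ``consensus'' piece and a higher-order piece built from products of the deviations $\mathcal{T}_{\mathbf{x}_w}-\mathcal{T}_\rho$; control of these via geometric mixing in $\sigma_2$, per-agent concentration at rate $m^{-1/2}$ (with the capacity-dependent and capacity-free alternatives that produce the two cases for $t$), and contraction of the population products $\Pi_{t:k+1}(\mathcal{T}_\rho)$; and finally the tuning with $\eta t$ pinned at $\kappa^{-2}(nm)^{1/(2r+\gamma)}$. The one structural difference is your choice of intermediate process: you compare $\omega_{t,v}$ to the network average $\bar\omega_t$, whereas the paper compares to the auxiliary iterates $\xi_{t,v}$ generated with the complete-graph gossip matrix $\tfrac1n\mathbf{1}\mathbf{1}^\top$, which coincide \emph{exactly} with Single-Machine Gradient Descent on all $nm$ samples. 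Your $\bar\omega_t$ does not: as you note, it obeys the single-machine recursion only up to the cross term $\eta_t\frac1n\sum_w\widehat T_w(\omega_{t,w}-\bar\omega_t)$, so bounding $\|\mathcal{S}_\rho(\bar\omega_t-\mu_t)\|_\rho$ requires feeding the consensus error back into the average's recursion. This is workable (it is the standard route in the consensus-optimisation literature) but adds a coupling the paper's decomposition avoids for free, since there the middle term is literally the single-machine sample variance.

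Where the proposal falls short of a proof is exactly where you say it does: the two estimates that make the network error close are asserted rather than established. First, to get the squared consensus term to decay as $1/m$ (rather than as a union bound over agents would give), the paper expands the square, takes expectations, and uses that the $N_{k,w}$ are zero-mean and independent across agents so that only pairs of paths meeting at their final node survive; your sketch never invokes this cancellation, and without it (or some substitute) the claimed form of the consensus term does not follow. Second, for the higher-order term you correctly identify the obstruction (non-commuting $\widehat T_w$'s interleaved with powers of $P$) but do not give the mechanism that resolves it; the paper's resolution is the telescoping identity $\Pi_{t:k+1}(\mathcal{T}_{\mathbf{x}_{w_{t:k+1}}})-\Pi_{t:k+1}(\mathcal{T}_\rho)=\sum_{j=k+1}^{t}\eta_j\,\Pi_{t:j+1}(\mathcal{T}_\rho)(\mathcal{T}_\rho-\mathcal{T}_{\mathbf{x}_{w_j}})\Pi_{j-1:k+1}(\mathcal{T}_{\mathbf{x}_{w_{j-1:k+1}}})$, which deliberately places a \emph{population} product on the left of each summand so that its contraction (Lemma~\ref{Lem:OperatorNorm}) can be combined with the concentration of $\mathcal{T}_\rho-\mathcal{T}_{\mathbf{x}_{w_j}}$ to bring the squared term down to order $(\eta t)^{2+\gamma}/m^2$, which is precisely what the conditions $2r+\gamma>2$ and $m\ge n^{(2r+2+\gamma)/(2r+\gamma-2)}$ are calibrated against. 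Supplying these two ingredients would turn your outline into the paper's argument.
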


Theorem \ref{Cor:Main} shows that when agents are given sufficiently many samples ($m$) with respect to the number of agents ($n$), $m \geq n^{\frac{2r +2 +\gamma}{2r + \gamma -2}} $, proper tuning of the step size and number of iterations (a form of implicit regularisation) allows Distributed Gradient Descent to recover the optimal statistical rate $O((nm)^{-2r/(2r+\gamma)})$ for $r \in (1/2,1)$ \citep{caponnetto2007optimal} up to poly-logarithmic terms. 

Single-Machine Gradient Descent run on all of the observations has been previously shown to reach optimal statistical accuracy with a number of iterations of the order \\ 
$t_{\text{Single-Machine}} \sim O((nm)^{1/(2r+\gamma)})$ \citep{lin2017optimal}. The number of iterations $t\equiv t_{\text{Distributed}}$ prescribed by Theorem \ref{Cor:Main} scales like  $t_{\text{Single-Machine}}$ times a network-dependent factor that is a function of the inverse of the spectral gap $(1-\sigma_2)^{-1}$.
The fact that the number of iterations required to reach a prescribed level of error accuracy is inversely proportional to the spectral gap is a standard feature of iterative gradient methods applied to generic decentralised consensus optimisation problems \citep{DAW12,scaman2017optimal,scaman2018optimal}. This dependence encodes the fact that in the case of \emph{generic} objective functions assigned to agents, agents \emph{have to} share information with everyone to solve the global problem and minimise the sum of the local functions; hence, more iterations are required in graph topologies that are less well-connected. In the present homogeneous setting, however, the statistical nature of the problem allows to exploit concentration of random variables to characterise the existence of a (network-dependent) ``big data'' regime where the number of iterations does \emph{not} depend on the network topology. The trade-off between statistics and communication is encoded by the dependence of the tuning parameters (stopping time and step size) on the number of samples $m$ assigned to each agent. 
Observe that the factor 
$( \frac{(nm)^{2r/(2r+\gamma)}}{ m (1-\sigma_2)^{\gamma} })^{1/\gamma} \vee 1$  is a decreasing function of $m$, up to the threshold $1$. 
When $m \geq \frac{ n^{2r/\gamma}}{(1-\sigma_2)^{2r+\gamma}}  \vee n^{\frac{2r +2 +\gamma}{2r + \gamma -2}}$ this factor becomes $1$ and Theorem \ref{Cor:Main} guarantees that the \emph{same} order of iterations allows both Distributed and Single-Machine Gradient Descent to achieve the optimal statistical rates up to poly-logarithmic factors.  This regime represents the case when the increased statistical similarity between the local empirical risk functions assigned to each agent (increasing as a function of $m$, as described by the non-asymptotic Law of Large Numbers) makes up for the decreased connectivity in the graph topology (typically decreasing with the spectral gap $1-\sigma_2$) to yield a linear speed-up in runtime over Single-Machine Gradient Descent when the communication delay between agents is sufficiently small. See Section \ref{sec:LinearSpeedUp} below.

The result of Theorem \ref{Cor:Main} depends on some other requirements which we now briefly discuss. The requirement $n \geq 2 (1+r) \log ( \frac{n}{1-\sigma_2} )$ is technical and arises from the need to perform sufficiently many iterations to reach the mixing time of the gossip matrix $P$, i.e.\ $t \gtrsim (1-\sigma_2)^{-1}$.
Noting that the number of iterations $t$ depends on the number of agents, samples and spectral gap.
The requirement $2r+\gamma > 2$ relates to the difficulty of the estimation problem and is stronger than a similar condition seen for single-machine gradient methods where $2r+\gamma > 1$, see for instance the works \cite{lin2017optimal,pillaud2018statistical}. This requirement, alongside $m \geq n^{ \frac{ 2r+2+\gamma}{2r+\gamma -2}}$, ensures that the higher-order error terms arising from considering a decentralised protocol decay sufficiently quickly with respect to the number of samples owned by agents $m$. The condition $m \geq n^{ \frac{ 2r+2+\gamma}{2r+\gamma -2}}$ can be removed if the covariance operator $\mathcal{T}_{\rho}$ is assumed to be known to agents, which aligns with the additive noise oracle in single-pass Stochastic Gradient Descent \citep{dieuleveut2017harder} or fixed-design regression in finite-dimensional settings \cite{gyorfi2006distribution}.
The condition $m \ge n^{2r/\gamma}$ corresponds to the case when the rate of concentration of the batched gradients held by agents (i.e. $1/m$) is faster than the optimal statistical rate, i.e.\ $\frac{1}{m} \leq (nm)^{-2r/(2r+\gamma)}$.
This condition becomes more stringent  (i.e.\ more data per agent is needed) as the problem becomes easier from a statistical point of view and $r$ and $1/\gamma$ increase (see discussion in Section \ref{sec:Results:Assumptions}).
This is due to the fact that as $r$ and $1/\gamma$ increase, only the statistical rate improves while the rate of concentration in the network error stays the same, implying that more data is needed to balance the two terms.

\subsection{Linear speed-up in runtime}
\label{sec:LinearSpeedUp}
Let gradient computations cost $1$ unit of time and communication delay between agents be $\tau$ units of time.\footnote{
For details on this communication model as well as comparison to \cite{tsianos2012communication} see remarks within Appendix  \ref{sec:remarks}.
}
Denote the number of iterations required by Single-Machine Gradient Descent and Distributed Gradient Descent to achieve the optimal statistical rate by $t_{\text{Single-Machine}}$ and $t_{\text{Distributed}}$, respectively. The speed-up in computational time obtained by running the distributed protocol over the single-machine protocol is of the order $\frac{ t_{\text{Single-Machine}}}{t_{\text{Distributed}}} \frac{ nm}{m + \tau + \text{Deg}(P)}$, where $\text{Deg}(P)=\max_{v\in V} |\{P_{vw}\neq 0, w\in V\}|$ is the maximum degree of the communication matrix $P$.
Theorem \ref{Cor:Main} implies that when  $m \geq \frac{ n^{2r/\gamma}}{(1-\sigma_2)^{2r+\gamma}}  \vee n^{\frac{2r +2 +\gamma}{2r + \gamma -2}}$  then $t_{\text{Distributed}} \sim t_{\text{Single-Machine}}$, and if $\tau  + \text{Deg}(P) $ grows as $O(m)$ then the speed-up in computational time is of order $n$, linear in the number of agents. Classical ``single-step'' decentralised methods that alternate single communication rounds per local gradient computation, such as the methods inspired by \cite{Nedic2009}, do not exploit concentration and have a runtime that scales with the inverse of the spectral gap, without any threshold. As a result, these methods only yield a linear speed-up in graphs with spectral gap bounded away from zero, i.e.\ expanders or the complete graph. See below for more details. On the other hand, ``multi-step'' methods that alternate multiple communication rounds per local gradient computation, such as the ones considered in \cite{Rabbat15,tsianos2016efficient,scaman2017optimal,scaman2018optimal}, display a runtime that scales with a factor of the form $m+\frac{\tau+\text{Deg}(P)}{1-\sigma_2}$ in our setting. Thus, while these methods can achieve a linear speed-up in any graph topology in the ``big data'' regime $m\gtrsim\frac{\tau+\text{Deg}(P)}{1-\sigma_2}$ without exploiting concentration, they require an additional amount of communication rounds that is network-dependent and scales with the inverse of the spectral gap. For a cycle graph, for instance, this means an extra $O(n^2)$ communication steps per iteration (or $O(n)$ for gossip-accelerated methods). Hence, classical decentralised optimisation methods that do not exploit concentration suffer from a trade-off between runtime and communication cost: if you reduce the first you increase the second, and viceversa. 
Our results show that single-step methods can achieve a linear speed-up in runtime in \emph{any} graph topology by exploiting concentration: statistics allows to find a regime where it is possible to simultaneously have a linear speed-up in runtime without increasing communication.

\paragraph{Comparison to single-step decentralised methods that do not exploit concentration}
Decentralised optimisation methods that do not consider statistical concentration rates in their parameter tuning can not exploit the statistics/communication trade-off encoded by the presence of the factor $( \frac{(nm)^{2r/(2r+\gamma)}}{ m (1-\sigma_2)^{\gamma} })^{1/\gamma} \vee 1$ in Theorem \ref{Cor:Main}, and they typically require a smaller step size and more iterations to achieve optimal statistical rates. The convergence rate typically achieved by classical consensus optimisation methods, e.g. \cite{DAW12}, is recovered in Theorem \ref{Cor:Main} when  $m = n^{2r/\gamma}$ as in this case the number of iterations required becomes $t  \sim \frac{ (nm)^{1/(2r+\gamma)}}{1-\sigma_2}$, which corresponds to $t_{\text{Single-Machine}}$ scaled by a certain power of $1/(1-\sigma_2)$ (in our setting the power is $1$). This represents the setting where the choice of step size aligns with the choice in the single-machine case scaled by $(1-\sigma_2)$, and a linear speed-up occurs when $(1-\sigma_2)^{-1} = O(1)$. Since the network error is decreasing in $m$ in our case (due to concentration), larger step sizes can be chosen for $m > n^{2r/\gamma}$. Specifically, the single-machine step size is now scaled by  $[ (1-\sigma_2) ( \frac{m}{n^{2r/\gamma}} )^{1/(2r+\gamma)}] \vee 1$, yielding a linear speed-up when $(1-\sigma_2)^{-1}  = O(  ( \frac{m}{n^{2r/\gamma}} )^{1/(2r+\gamma)} )$, which, as $m$ increases, is a weaker requirement on the network topology over the standard consensus optimisation setting.

\section{General result: error decomposition and implicit regularisation}
\label{sec:Results:Detailed} 
Theorem \ref{Cor:Main} is a corollary of the next result, which explicitly highlights the interplay between statistics and network topology and the implicit regularisation role of the step size and number of iterations.
\begin{theorem}
\label{thm:MainResult}
Let Assumptions \ref{Assumption:Moments}, \ref{Assumption:Source}, \ref{Assumption:Capacity} hold with $r \geq 1/2$. Let $\eta_{s} = \eta s^{-\theta}$ $\forall s\ge 1$ with $\theta \in (0,3/4)$ and $\eta \in (0,\kappa^{-2}]$. If $t/2 \geq  \lceil \frac{ (r+1)\log(t) }{1-\sigma_2}\rceil  =:  t^{\star}$, then for all $v \in V$, $\alpha \in [0,1/2]$ and $\gamma^{\prime} \in [1,\gamma]$:
\begin{align}
&\!\E[\mathcal{E}(\omega_{t+1,v})] 
- \inf_{\omega \in H} \mathcal{E}(\omega) \nonumber \\
&\!\leq 
\Big[ 
q_1(\eta t^{1-\theta} )^{-2r} \!+ q_2 
(nm)^{-2r/(2r+\gamma)}
\Big(1\!\vee\! (nm)^{-2/(2r+\gamma)} (\eta t^{1-\theta} )^2
\!\vee\! t^{-2}(\eta t^{1-\theta})^2 \Big)
\Big] \log^2(t)\! \label{equ:MainTheorem:1}\\
 &\quad+ q_3 \frac{ \log^2(n) \log^2(t^{\star})}{m} 
\Big(  \eta^2 t^{-2r} 
\vee (m^{-1} (\eta t^{\star})^{1 + 2\alpha})  \vee (\eta t^{\star})^{\gamma^{\prime} + 2 \alpha}
\Big) 
\label{equ:MainTheorem:2} \\
& \quad + 
q_4 
\frac{ \log^4(n) \log^2(t) }{m^2} 
\Big( 1 \vee (\eta t^{1-\theta})^{2} \vee t^{-2}(\eta t^{1-\theta})^{4} \Big)
\Big( (m^{-1} \eta t^{1-\theta}) \vee (\eta t^{1-\theta})^{\gamma}\Big)
\label{equ:MainTheorem:3}
\end{align}
where $q_1,q_2,q_3,q_4$ are all constants depending on $\kappa^2,\|\mathcal{T}_{\rho}\|,M,\nu,r,R,\gamma,c_{\gamma}$.
\end{theorem}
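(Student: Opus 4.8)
The plan is to pass to an operator reformulation of the iterates, split the excess risk into a single-machine term and a network term, bound the former by adapting the analysis of Single-Machine Gradient Descent for RKHS regression, and bound the latter by unrolling the consensus recursion and combining mixing estimates for the gossip matrix with concentration estimates for the agents' empirical operators. For each agent $w$ introduce the empirical covariance $\widehat{T}_w = \frac1m\sum_{i=1}^m \langle\,\cdot\,,x_{i,w}\rangle_H x_{i,w}$ and the empirical vector $\widehat{b}_w = \frac1m\sum_{i=1}^m y_{i,w}x_{i,w}$, so that \eqref{alg:DistributedGD} reads $\omega_{t+1,v} = \sum_w P_{vw}\big((I-\eta_t\widehat T_w)\omega_{t,w} + \eta_t\widehat b_w\big)$. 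Since this map is affine, unrolling from $\omega_{1,\cdot}=0$ expresses $\omega_{t+1,v}$ as a sum, over walks of length $\le t$ in $G$, of entries of powers of $P$ times products of the operators $I-\eta_s\widehat T_w$ applied to the $\widehat b_w$. Let $\bar T = \frac1{nm}\sum_{w,i}\langle\,\cdot\,,x_{i,w}\rangle_H x_{i,w}$ and $\bar b = \frac1{nm}\sum_{w,i}y_{i,w}x_{i,w}$ be the pooled quantities and let $\mu_{t+1} = (I-\eta_t\bar T)\mu_t + \eta_t\bar b$ be the Single-Machine Gradient Descent iterate on all $nm$ samples. Using $r\ge1/2$ (so $f_H=\langle\omega_H,\,\cdot\,\rangle_H$ for some $\omega_H$) one has $\mathcal E(\omega)-\inf\mathcal E = \|\mathcal T_\rho^{1/2}(\omega-\omega_H)\|_H^2$; writing $\mathcal T_\rho^{1/2}(\omega_{t+1,v}-\omega_H) = \mathcal T_\rho^{1/2}(\mu_{t+1}-\omega_H) + \mathcal T_\rho^{1/2}(\omega_{t+1,v}-\mu_{t+1})$ and squaring gives a single-machine error plus a network error.

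The single-machine error is controlled by the established analysis of gradient descent for non-parametric least squares with polynomially decaying step size \citep{yao2007early,rosasco2015learning,lin2017optimal}. With the effective regularisation scale $\lambda \simeq 1/(\eta t^{1-\theta})$ (the quantity $\sum_{s\le t}\eta_s\simeq\eta t^{1-\theta}$ playing the role of $1/\lambda$), one decomposes $\mu_{t+1}-\omega_H$ into a deterministic bias governed by $\|\mathcal T_\rho^{r}\prod_{s\le t}(I-\eta_s\mathcal T_\rho)\|\lesssim(\eta t^{1-\theta})^{-r}$ (Assumption \ref{Assumption:Source}) and a sample-variance term controlled by Bernstein-type concentration of $\bar T$ around $\mathcal T_\rho$ and of $\bar b-\bar T\omega_H$ (Assumption \ref{Assumption:Moments}), together with the effective-dimension bound $\trace(\mathcal T_\rho(\mathcal T_\rho+\lambda I)^{-1})\lesssim\lambda^{-\gamma}$ (Assumption \ref{Assumption:Capacity}). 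The restriction $\theta<3/4$ ensures summability of the relevant $\eta_s^2$-weighted series and $\eta\le\kappa^{-2}$ makes $I-\eta_s\widehat T_w$ suitably contractive; the result is exactly the bracket \eqref{equ:MainTheorem:1}, whose maxima record the regimes in which $\eta t^{1-\theta}$ overshoots the statistically optimal scale $(nm)^{1/(2r+\gamma)}$.

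For the network error write $\omega_{t+1,v}-\mu_{t+1}=(\omega_{t+1,v}-\bar\omega_{t+1})+(\bar\omega_{t+1}-\mu_{t+1})$ with $\bar\omega_t=\frac1n\sum_v\omega_{t,v}$. Double stochasticity of $P$ gives $\bar\omega_{t+1}-\mu_{t+1}=(I-\eta_t\bar T)(\bar\omega_t-\mu_t)-\eta_t\frac1n\sum_w\widehat T_w(\omega_{t,w}-\bar\omega_t)$, so the drift of the network average from $\mu_t$ is a smoothed accumulation of the consensus error $\omega_{t,w}-\bar\omega_t$; in turn, unrolling yields $\omega_{t+1,v}-\bar\omega_{t+1}=-\sum_{s\le t}\eta_s\sum_w\big((P^{t-s+1})_{vw}-\tfrac1n\big)\big(\widehat T_w\omega_{s,w}-\widehat b_w\big)$, and $\|P^k-\frac1n\1\1^\top\|\le\sigma_2^k$ supplies geometric mixing — this is where the burn-in hypothesis $t/2\ge t^\star=\lceil(r+1)\log(t)/(1-\sigma_2)\rceil$ is used, to make the mixing residual past time $t-t^\star$ negligible. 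In each summand split $\widehat T_w=\mathcal T_\rho+E_w$ and $\widehat b_w=\mathcal T_\rho\omega_H+e_w$ with $\|E_w\|$ and $\|(\mathcal T_\rho+\lambda I)^{-1/2}e_w\|$ of order $m^{-1/2}$ up to logarithms (operator/vector Bernstein, after a preliminary induction giving a uniform a priori bound on $\|\omega_{s,w}\|_H$). The $\mathcal T_\rho$-parts recombine across agents into lower-order contributions; the first-order fluctuation terms, weighted against $\sum_k\sigma_2^k$ and the spectral-calculus bounds $\|\mathcal T_\rho^{a}\prod_s(I-\eta_s\mathcal T_\rho)(\mathcal T_\rho+\lambda I)^{-b}\|$, produce the $1/m$ term \eqref{equ:MainTheorem:2} (the analogue of the classical consensus error), and feeding the consensus-error expansion back into itself once more — so that two fluctuation operators $E_w,E_{w'}$ appear along a path — produces the higher-order $1/m^2$ term \eqref{equ:MainTheorem:3}.

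The main obstacle is Step 3, and in particular the higher-order term \eqref{equ:MainTheorem:3}: one must bound $\mathcal T_\rho^{1/2}$-weighted norms of products that interleave the fluctuation operators $E_w$ (and the resolvents $(\mathcal T_\rho+\lambda I)^{\pm1/2}$ needed to extract the effective dimension $\lambda^{-\gamma}$) with the spectral-calculus factors $\prod_s(I-\eta_s\mathcal T_\rho)$, while simultaneously summing the graph-combinatorial weights $(P^k)_{vw}-\frac1n$ against the mixing decay $\sigma_2^k$ — and doing so uniformly over the several parameter regimes producing the maxima $1\vee(\eta t^{1-\theta})^2\vee\cdots$ and over the free parameters $\alpha\in[0,1/2]$, $\gamma'\in[1,\gamma]$ that optimise the bound. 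Keeping the $m$-dependence sharp, so that concentration genuinely buys back powers of $1/(1-\sigma_2)$, is what forces this careful bookkeeping and is the technical heart of the argument; the burn-in control and the a priori boundedness of the iterates are comparatively routine.
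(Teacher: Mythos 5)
Your architecture coincides with the paper's at the top level (single-machine error plus network error, unrolling the network term as a sum over walks weighted by $(P^{k})_{vw}-\tfrac1n$, mixing combined with per-agent concentration, a first-order $1/m$ term and a higher-order $1/m^2$ term), and the single-machine part and burn-in bookkeeping are fine. But there is a genuine gap exactly where you place the ``main obstacle'': terms \eqref{equ:MainTheorem:2} and \eqref{equ:MainTheorem:3} are asserted rather than derived, and the two mechanisms that actually produce them are missing from your outline. Your unrolled consensus error is driven by the local gradients $g_{s,w}=\widehat T_w\omega_{s,w}-\widehat b_w$, which depend on the entire history and are therefore neither zero-mean nor independent across agents; splitting $\widehat T_w=\mathcal T_\rho+E_w$ and $\widehat b_w=\mathcal T_\rho\omega_H+e_w$ does not restore independence because $\omega_{s,w}$ still couples all agents' data. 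The paper instead centres the per-iteration noise at the \emph{deterministic population iterate} $\mu_k$, setting $N_{k,v}=(\mathcal T_\rho\mu_k-\mathcal S^{\star}_\rho f_\rho)-(\mathcal T_{\mathbf x_v}\mu_k-\mathcal S^{\star}_{\mathbf x_v}\mathbf y_v)$, which is genuinely zero-mean and independent across $v$ (Lemma \ref{Lem:DistributedError:Decomposed}). Upon expanding the square of the path sum, only pairs of paths meeting at their final node survive in expectation; this is what yields the factor $\sigma_2^{2t-k-k'+2}$ and, after the well-mixed/poorly-mixed split and the effective-dimension bound on $\|(\mathcal T_\rho+\lambda I)^{-1/2}N_{k,v}\|_H$, the rate $(\eta t^{\star})^{\gamma'+2\alpha}/m$ in \eqref{equ:MainTheorem:2} (Proposition \ref{Prop:DistributedError:Expectation}, Lemma \ref{lem:DistributedError:Terma}). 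Without this re-centering your ``first-order fluctuation terms'' cannot be summed at the claimed $1/m$ rate.

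For \eqref{equ:MainTheorem:3}, the naive estimate of the double sum $\sum_k\eta_k\sum_j\eta_j$ gives $(\eta t^{1-\theta})^4$, which is too large to be absorbed at the optimal rate. The paper avoids this with the exact telescoping identity
$\Pi_{t:k+1}(\mathcal T_{\mathbf x_{w_{t:k+1}}})-\Pi_{t:k+1}(\mathcal T_\rho)=\sum_{j=k+1}^{t}\eta_j\,\Pi_{t:j+1}(\mathcal T_\rho)(\mathcal T_\rho-\mathcal T_{\mathbf x_{w_j}})\Pi_{j-1:k+1}(\mathcal T_{\mathbf x_{w_{j-1:k+1}}})$
(Proposition \ref{prop:QuadraticExpansion}), which isolates exactly one operator fluctuation per summand while keeping a contractive factor $\Pi_{t:j+1}(\mathcal T_\rho)$ on the left; the spectral-calculus bound on $\|\mathcal T_\rho^{1/2}\Pi_{t:j+1}(\mathcal T_\rho)\mathcal T_{\rho,\lambda}^{1/2}\|$ then reduces the growth to $(\eta t^{1-\theta})^{2+\gamma}$ (Lemma \ref{lem:HighProbBoundTermb}). ``Feeding the consensus-error expansion back into itself once more'' is the right instinct, but you would need to exhibit this identity, or an equivalent, and track the contraction to make the $1/m^2$ term quantitative.
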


The bound in Theorem \ref{thm:MainResult} shows that the excess risk has been decomposed into three main terms, as detailed in Section \ref{sec:ProofSketch:ErrorDecomp}. The first term \eqref{equ:MainTheorem:1} corresponds to the error achieved by Single-Machine Gradient Descent run on all $nm$ samples. It consists of both bias and sample variance terms \citep{lin2017optimal}. The second two terms \eqref{equ:MainTheorem:2} and \eqref{equ:MainTheorem:3} characterise the network error due to the use of a decentralised protocol. These terms decrease with the number of samples $m$ owned by each agent. This captures the fact that, as agents are given samples from the \emph{same} unknown distribution, agents are in fact solving the same learning problem and their local empirical loss functions concentrate to the same objective as $m$ increases. The decentralised error term is itself composed of two terms which decay at different rates with respect to $m$. The  term in \eqref{equ:MainTheorem:2} is dominant and decays at the order of $\widetilde{O}(1/m)$. This can be interpreted as the consensus error seen in the works \cite{Nedic2009,DAW12} for instance. As in that setting, this quantity is also increasing with the step size $\eta $ and decreasing with the spectral gap of the communication matrix $1-\sigma_2$, as encoded by $t^{\star}$. The  term  \eqref{equ:MainTheorem:3} decays at the faster rate of $\widetilde{O}(1/m^2)$. This is a higher-order error term that is not appearing in the error decomposition when the covariance operator $\mathcal{T}_{\rho}$ is assumed to be known to agents. This quantity arises from the interaction between the local averaging on the network through $P$ and what has been previously labelled as the 
``multiplicative'' noise in the single-machine single-pass stochastic gradient setting for least squares \citep{dieuleveut2017harder}, i.e.\ the empirical covariance operator interacting with the iterates at each step. Section \ref{sec:ProofSketch:StatisticalAnalysis} provides a high-level illustration of the analysis of the Network Error terms \eqref{equ:MainTheorem:2} and \eqref{equ:MainTheorem:3}. 

The bound in Theorem \ref{thm:MainResult} shows how the algorithmic parameters---step size and number of iterations---act as regularisation parameters for Distributed Gradient Descent, following what is seen in the single-machine setting. Theorem \ref{Cor:Main} demonstrates how optimal statistical rates can be recovered by tuning these parameters appropriately with respect to the network topology, network size, number of samples, and with respect to the estimation problem itself. 
The bound in Theorem \ref{Cor:Main} is obtained from the bound in Theorem \ref{thm:MainResult} by first tuning the quantity $\eta t$ to the order $(nm)^{1/(2r+\gamma)}$ so that the bias and variance terms in \eqref{equ:MainTheorem:1} achieve the optimal statistical rate. This leaves the tuning of the remaining degree of freedom (say $\eta$) to ensure that also the network error achieves the optimal statistical rate. The high-level idea is the following. As $m$ increases, the network error is dominated by the term in \eqref{equ:MainTheorem:2} that is proportional to the factor $(\eta t^{\star})^{\gamma^{\prime} +2 \alpha}/m$. There are two ways to choose the largest possible step size $\eta$ to guarantee that this factor is $\widetilde{O}((nm)^{-2r/(2r+\gamma)})$, depending on whether the rate of concentration of the batched gradients held by agents is faster than the optimal statistical rate or not, i.e., whether $m \ge n^{2r/\gamma}$ is true or not (cf.\ Section \ref{sec:Results:Main}). 
The two cases yield the factors $( \frac{(nm)^{2r/(2r+\gamma)}}{ m (1-\sigma_2)^{\gamma} })^{1/\gamma} \vee 1$ and $\frac{(nm)^{r/(2r+\gamma)}}{\sqrt{m}(1-\sigma_2)}$ in Theorem \ref{Cor:Main}, corresponding to the choice $\gamma^{\prime} = \gamma \,,\, \alpha = 0$ and $\gamma^{\prime} =1 \,,\, \alpha = 1/2$, respectively. If the concentration of the batched gradients held by agents fully compensates for the network error, i.e. $m\ge\frac{ n^{2r/\gamma}}{(1-\sigma_2)^{2r+\gamma}}$, 
then $(\eta t^{\star})^{\gamma^{\prime} +2 \alpha}/m \simeq (nm)^{-2r/(2r+\gamma)}$ with a constant step size and  $t_{\text{Distributed}} \sim t_{\text{Single-Machine}} \sim (nm)^{1/(2r+\gamma)}$, yielding the regime where a linear speed-up occurs. For more details on the parameters $\alpha,\gamma^{\prime}$, see Lemma \ref{lem:DistributedError:Terma} in Appendix \ref{sec:DecentralisedError:Terma}.

\section{Future directions}
\label{sec:Conclusion}
We highlight some of the features of our contribution and outline directions for future research.
\paragraph{Non-parametric setting} We prove bounds in the attainable case $r \geq 1/2$. The non-attainable case $r < 1/2$ is known to be more challenging \citep{lin2017optimal}, and it is natural to investigate to what extent our results can be extended to that setting. 
We consider the case $\gamma>0$ which does not include the finite-dimensional setting $H = \mathbb{R}^{d}$, $\gamma =0$, where the optimal rate is $O(d/(nm))$ \citep{tsybakov2003optimal}. 
While adapting our results to this setting requires minor modifications, optimal bounds would only hold for ``easy'' estimation problems with $r > 1$ due to the higher-order term in the network error.
Improvements require getting better bounds on this term, potentially using a different learning rate.

\paragraph{General loss functions}
The analysis that we develop is specific to the square loss, which yields the bias/variance error decomposition
and allows to get explicit characterisations by expanding the squares. 
While the concentration phenomena that we exploit are generic, different techniques are required to extend our analysis to other losses, as in the single-machine setting. The statistical proximity of agents' functions in the finite-dimensional setting has been investigated in \cite{2018arXiv180906958R}.

\paragraph{Statistics/communication trade-off with sparse/randomised gossip}
In this work we show that when agents hold sufficiently many samples, then Distributed and Single-Machine Gradient Descent achieve the optimal statistical rate with the same order of iterations. This motivates balancing and trading off communication and statistics, e.g., investigating statistically robust procedures in settings when agents communicate with a subset of neighbours, either deterministically or randomly \citep{boyd2006randomized,dimakis2008geographic,benezit2010order}.

\paragraph{Stochastic gradient descent and mini-batches}
Our work exploits concentration of gradients around their means, so full-batch gradients (i.e.\ batches of size $m$) yield the concentration rate $1/m$. In single-machine learning, stochastic gradient descent \cite{robbins1985stochastic} has been shown to achieve good statistical performance in a variety of settings while allowing for computational savings. Extending our findings to stochastic methods with appropriate mini-batch sizes is another venue for future investigation.

\subsubsection*{Acknowledgments}
Dominic Richards is supported by the EPSRC and MRC through the OxWaSP CDT programme (EP/L016710/1).
Patrick Rebeschini is supported in part by the Alan Turing Institute under the EPSRC grant EP/N510129/1. We would like to thank Francis Bach, Lorenzo Rosasco and Alessandro Rudi for helpful discussions.

\bibliographystyle{plain}
\bibliography{References}

\newpage

\appendix 

\section{Remarks}
\label{sec:remarks}

In this section we present some remarks about our work.

\paragraph{Alternative protocol}
\label{remark:Protocol}
The protocol investigated in \cite{Nedic2009} updates the iterates via\\
$\omega_{t+1,v} 
= 
\sum_{w \in V}
P_{vw} \omega_{t,w} - 
\eta_{t} 
\frac{1}{m} 
\sum_{i=1}^{m}
\big(
\langle \omega_{t,v},x_{i,v} \rangle_H - y_{i,v}
\big)x_{i,v}$. The original motivations for this protocol are that it is fully decentralised, that agents are only required communicate locally, and that it reduces to a distributed averaging consensus protocol when the gradient is zero. The protocol \eqref{alg:DistributedGD} that we consider preserves these properties while making the analysis easier. For a discussion on the difference between the two protocols we refer to \cite{sayed2014adaptive}.

\paragraph{Network error}
The network error terms \eqref{equ:MainTheorem:2} and \eqref{equ:MainTheorem:3} track the error between the distributed protocol and the ideal single-machine protocol. In the case of a complete graph the deviation is zero so the network terms vanish and the convergence rates for Single-Machine Gradient Descent are recovered. Following the literature on decentralised optimisation, we present our final results (cf. Theorem \ref{thm:MainResult}) in terms of the spectral gap, so plugging in the spectral gap of a complete graph in the bound in Theorem \ref{thm:MainResult} does not immediately yield the Single-Machine Gradient Descent result.

\paragraph{Parameter tuning}
The choice of parameters in Theorem \ref{Cor:Main} depends on the quantities $r$ and $\gamma$ that are related to the estimation problem. In practice, these quantities are often unknown. In the single-machine setting, this lack of knowledge is typically addressed via cross-validation \citep{steinwart2008support}. Investigating the design of decentralised cross-validation schemes is outside of the scope of this work and we leave it to future research. However, we highlight that as we consider implicit regularisation strategies and, in particular, early stopping, model complexity can be controlled with iteration time and this yields computational savings for cross-validation compared to methods that required to solve independent problem instances for different choices of parameters.

\paragraph{Accelerated gossip}
Accelerated gossip schemes can also be considered to yield improved dependence on the network topology, depending on the amount of information agents have access to about the communication matrix $P$.
Accelerated gossip can be achieved by replacing the matrix $P$ by a polynomial of appropriate order, e.g. $k$, leading to $\widetilde P := \sum_{\ell=1}^{k} \alpha_{\ell} P^{\ell}$.
The weights $\{\alpha\}_{\ell =1,\dots,K}$ can be tuned to increase the spectral gap i.e.\ $(1-\sigma_2(\widetilde P))^{-1} \leq (1-\sigma_2)^{-1}$. 
We highlight that the algorithm that we consider only needs to have access to the number of nodes $n$ and the second largest eigenvalue in magnitude $\sigma_2$ of the matrix $P$. Within this framework, one can use Chebyshev polynomials to obtain the improved rate $(1-\sigma_2(\widetilde P))^{-1/2}$, and more information on the spectrum of $P$ yields better rates on the transitive phase \citep{cao2006accelerated,berthier2018gossip}.

\paragraph{Additional requirements in Theorem \ref{thm:MainResult}}
Theorem \ref{thm:MainResult} includes two additional requirements over single-machine gradient descent, which we briefly explain the origins of. The requirement $\theta \leq 3/4$ is purely cosmetic and serves to yield a cleaner bound. For more details, see the proof of Lemma \ref{lem:HighProbBoundTermb} in Section \ref{sec:DecentralisedError:Termb}. The requirement $t/2 \geq \frac{ (r + 1)\log(t)}{1-\sigma_2}$, on the other hand, often arises when analysing Distributed Gradient Descent, see \cite{DAW12}  for instance. In particular, it ensures sufficient iterations have been performed to reach the mixing time of the Markov chain associated to $P$. See Section \ref{sec:DecentralisedError:Terma}.

\paragraph{Communication model}
We include additional details on the communication model. Consider a lockstep communication model where each round lasts for $\tau$ units of time. Within each round, agents send/receive the messages to/from their neighbours in order to implement a single update of algorithm (3). With a gradient evaluation costing 1 unit of time, each iteration of Distributed Gradient Descent takes the following amount of time $m + \tau + \mathrm{Deg}(P)$: $m$ gradient evaluations; $\tau$ in communication delay; $\mathrm{Deg}(P)$ for each agent to aggregating their neighbours and own gradients, as the sum in algorithm (3) $\sum_{w \in V} P_{vw}$ has computational cost $O(\mathrm{Deg}(P))$.  The delay $\tau$ can depend on factors arising from: noisy transmission, compressing or decompressing messages and synchronizing with neighbours. One particular model for $\tau$ is studied within \cite{tsianos2012communication} and discussed in the following remark.  

\paragraph{Comparison to speed-up and communication model within \cite{tsianos2012communication} }
The work \cite{tsianos2012communication} assumes the delay $\tau$ is a linear function of the network degree and some transmit time $T_{\mathrm{Transmit}} \geq 0$ so $\tau = T_{\mathrm{Transmit}} \mathrm{Deg}(P)$. In our work, for sufficiently many samples $m$, the speed-up under this model for any network topology is of the order $\frac{nm}{m + \text{Deg}(P) T_{\mathrm{Transmit}}}$. Meanwhile, the speed-up seen within \cite{tsianos2012communication} is\footnote{
The units of time within \cite[Section 3.2]{tsianos2012communication} are in terms of the time taken to compute a gradient for $nm$ samples, and as such, can be translated into units per gradient computation by multiplying by $nm$. 
} of the order $\frac{nm}{m + \text{Deg}(P) T_{\mathrm{Transmit}}}(1-\sigma_2) $, that is, same as ours but scaled by the spectral gap of the communication matrix $P$.

\section{Proof scheme}
\label{sec:ProofSketch}
In this section we illustrate the main scheme for the proof of Theorem \ref{thm:MainResult}, from which Theorem \ref{Cor:Main} follows. Section \ref{sec:ProofSketch:ErrorDecomp} presents the error decomposition into bias, variance, and network terms. Section \ref{sec:ProofSketch:StatisticalAnalysis} presents the sketch of the statistical analysis for these terms, which is given in full in Section \ref{Appendix:Proofs}.

\subsection{Error decomposition}
\label{sec:ProofSketch:ErrorDecomp}
 The error decomposition is based on the introduction of two auxiliary processes used to compare the iterates of Distributed Gradient Descent \eqref{alg:DistributedGD}.

The first auxiliary process represents the iterates generated if agents were to know the marginal distribution $\rho_X$. Initialised at $\mu_1 = 0$, the process is defined as follows for $t \geq 1$:
\begin{align*}
\mu_{t+1} = \mu_{t} - \eta_{t} \int_{X} (\langle \mu_{t}, x \rangle_{H} - f_{\rho}(x))x d\rho_{X}(x).
\end{align*}
This device has already been used in the analysis of non-parametric regression in the single-machine setting \citep{lin2017optimal}.

The second auxiliary process represents the iterates generated if agents were to be part of a complete graph topology and were to use the protocol given by $P = \frac{1}{n} \mathbf{1} \mathbf{1}^{\top} $. Initialised at $\xi_{1,v} =0$ for all $v \in V$, the process is defined as follows for $t \geq 1$:
\begin{align*}
\xi_{t+1,v} 
= 
\sum_{w \in V} 
\frac{1}{n} 
\bigg(
\xi_{t,w} - \eta_{t} \frac{1}{m} \sum_{i=1}^{m} \!( 
\langle \xi_{t,w}, x_{i,w}\rangle_{H} - y_{i,w}
\big)x_{i,w}
\bigg).
\end{align*}
The analysis of iterative decentralised algorithms typically builds upon the introduction of a device analogous to this one \citep{Nedic2009, DAW12}. 
Initialised at $\xi_1=0$, Single-Machine Gradient Descent is defined as follows for $t \geq 1$:
\begin{align*}
\xi_{t+1} = \xi_t -  \eta_{t} \frac{1}{nm} \sum_{w \in V} \sum_{i=1}^{m} \big( 
\langle \xi_{t}, x_{i,w}\rangle_H - y_{i,w}
\big)x_{i,w}.
\end{align*}
It is easy to see that we have $\xi_{t,v} = \xi_{t}$ for $t \geq 1$ and $v \in V$. This allows us to produce an analysis of Distributed Gradient Descent that relies upon known results for Single-Machine Gradient Descent. 

Let us introduce the linear map $\mathcal{S}_{\rho}: H \rightarrow L^2(H,\rho_{X})$ defined by $\mathcal{S}_{\rho} \omega = \langle \omega,\,\cdot\,\rangle_{H}$. The following error decomposition holds. 
\begin{proposition}
\label{Prop:ErrorDecomp}
For any $t\ge 1$ and $v\in V$ we have 
\begin{align*}
\mathcal{E}(\omega_{t,v}) - \inf_{\omega\in H} \mathcal{E}(\omega)
\leq 
2 \underbrace{\|\mathcal{S}_{\rho} \mu_{t} - f_{H}\|^2_{\rho}}_{(\textbf{Bias})^2}
+ 
4\underbrace{\|\mathcal{S}_{\rho}(\xi_{t} - \mu_{t})\|^2_{\rho}}_{\textbf{Sample Variance}}
+ 
4\underbrace{\|\mathcal{S}_{\rho}(\omega_{t,v} - \xi_{t,v})\|^2_{\rho}}_{\textbf{Network Error}}.
\end{align*}
\end{proposition}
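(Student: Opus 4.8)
The plan is to reduce the excess risk to a squared $\|\cdot\|_\rho$-distance in $L^2(H,\rho_X)$ and then split that distance along the two auxiliary processes $\mu_t$ and $\xi_t$ by a triangle-type inequality.

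First I would establish the pointwise identity $\mathcal{E}(\omega) - \inf_{\omega'\in H}\mathcal{E}(\omega') = \|\mathcal{S}_\rho \omega - f_H\|_\rho^2$ valid for every $\omega\in H$. Expanding the square and using that $f_\rho(x)=\int_Y y\, d\rho(y|x)$ is the conditional mean gives the bias--noise split $\mathcal{E}(\omega) = \|\mathcal{S}_\rho\omega - f_\rho\|_\rho^2 + \mathcal{E}(f_\rho)$. Since $\mathcal{S}_\rho\omega\in H_\rho$ for all $\omega\in H$ and $f_H$ is by definition the $L^2(H,\rho_X)$-projection of $f_\rho$ onto the closure of $H_\rho$, the orthogonal decomposition (Pythagoras) yields $\|\mathcal{S}_\rho\omega - f_\rho\|_\rho^2 = \|\mathcal{S}_\rho\omega - f_H\|_\rho^2 + \|f_H - f_\rho\|_\rho^2$. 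Taking the infimum over $\omega\in H$ and using that $H_\rho$ is dense in its closure gives $\inf_{\omega'\in H}\mathcal{E}(\omega') = \|f_H - f_\rho\|_\rho^2 + \mathcal{E}(f_\rho)$; subtracting the two displays produces the claimed identity.

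Next I would apply this identity with $\omega = \omega_{t,v}$ and insert the auxiliary iterates. Using the observation recorded above the proposition that $\xi_{t,v}=\xi_t$ for every $v\in V$, I would write
\begin{align*}
\mathcal{S}_\rho\omega_{t,v} - f_H = \big(\mathcal{S}_\rho\mu_t - f_H\big) + \mathcal{S}_\rho\big(\xi_t - \mu_t\big) + \mathcal{S}_\rho\big(\omega_{t,v} - \xi_{t,v}\big).
\end{align*}
Applying $\|a+b\|_\rho^2 \le 2\|a\|_\rho^2 + 2\|b\|_\rho^2$ once with $a=\mathcal{S}_\rho\mu_t - f_H$ and $b$ the sum of the remaining two terms, and a second time to bound $\|b\|_\rho^2$, yields exactly $2(\textbf{Bias})^2 + 4\,\textbf{Sample Variance} + 4\,\textbf{Network Error}$.

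There is no genuine obstacle here: the statement is in essence an exercise in Hilbert-space geometry together with the elementary inequality $\|a+b\|^2\le 2\|a\|^2+2\|b\|^2$. The only point requiring care is the first step, where one must be careful that the source condition does not force $f_\rho\in H_\rho$, so one has to work with the closure of $H_\rho$ and verify that $\inf_{\omega\in H}\mathcal{E}(\omega)$ equals the squared distance from $f_\rho$ to that closure; once this is settled, the projection argument and the two applications of the quadratic bound are routine.
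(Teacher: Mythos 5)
Your proposal is correct and follows essentially the same route as the paper: both reduce the excess risk to $\|\mathcal{S}_\rho\omega_{t,v}-f_H\|_\rho^2$ and then apply $\|a+b\|_\rho^2\le 2\|a\|_\rho^2+2\|b\|_\rho^2$ twice, first separating the bias and then splitting the remainder via $\xi_{t,v}=\xi_t$, yielding the same constants $2,4,4$. The only difference is that you derive the identity $\mathcal{E}(\omega)-\inf_{\omega'\in H}\mathcal{E}(\omega')=\|\mathcal{S}_\rho\omega-f_H\|_\rho^2$ from the conditional-mean decomposition and Pythagoras, whereas the paper simply cites it from prior work; your derivation is sound.
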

\begin{proof}
From the work in \cite{rosasco2015learning}, 
$
\mathcal{E}(\omega) - \inf_{\omega\in H}\mathcal{E}(\omega)
= 
\|\mathcal{S}_{\rho} \omega- f_{H} \|^2_{\rho}
$
for any $\omega \in H$. Adding and subtracting $\mathcal{S}_{\rho}\mu_{t}$ and using $\|x-y\|^2_{\rho} \leq (\|x\|_{\rho} + \|y\|_{\rho})^2 \leq 2\|x\|^2_{\rho} + 2\|y\|^2_{\rho}$ we get
\begin{align*}
\mathcal{E}(\omega_{t,v}) - \inf_{\omega\in H} \mathcal{E}(\omega)  
& = 
\|\mathcal{S}_{\rho} \omega_{t,v} - \mathcal{S}_{\rho} \mu_{t} 
+ \mathcal{S}_{\rho} \mu_{t} - f_{H} \|_\rho^2
\leq 
2 \|\mathcal{S}_{\rho} \omega_{t,v} - \mathcal{S}_{\rho} \mu_{t} \|^2_{\rho} 
+ 
2 \|\mathcal{S}_{\rho} \mu_{t} - f_{H} \|_\rho^2.
\end{align*}
Following the same steps, adding and subtracting $\mathcal{S}_{\rho}\xi_{t,v}$, we find
\begin{align*}
 \|\mathcal{S}_{\rho} \omega_{t,v} \!-\! \mathcal{S}_{\rho} \mu_{t} \|^2_{\rho}
 & = 
  \|\mathcal{S}_{\rho} \omega_{t,v} \!-\! \mathcal{S}_{\rho} \xi_{t,v} 
  +\mathcal{S}_{\rho} \xi_{t,v}  \!-\! \mathcal{S}_{\rho} \mu_{t} \|^2_{\rho}
    \le 
    2\|\mathcal{S}_{\rho} (\omega_{t,v}  -\xi_{t,v}) \|_{\rho}^{2} 
    + 
    2\|\mathcal{S}_{\rho} (\xi_{t}   - \mu_{t}) \|^2_{\rho}
\end{align*}
where we used the equality of $\{\xi_{s,v}\}_{s\ge 1}$ and $\{\xi_{s}\}_{s\ge 1}$. 
\end{proof}
Proposition \ref{Prop:ErrorDecomp} decomposes the error into three terms. The first term $\|\mathcal{S}_{\rho} \mu_{t} - f_{H}\|_{\rho}^2$ is deterministic and corresponds to the square of the \textbf{Bias} in the single-machine setting \citep{yao2007early}. The second term $\|\mathcal{S}_{\rho}(\xi_{t} - \mu_{t})\|^2_{\rho}$ aligns with what is called the \textbf{Sample Variance} in the single-machine setting, and in this case matches the sample variance obtained for Single-Machine Gradient Descent run on all $nm$ observations. The third term $\|\mathcal{S}_{\rho}(\omega_{t,v} - \xi_{t,v})\|_{\rho}^2$ accounts for the error due to performing a decentralised protocol and we call it the \textbf{Network Error}.

\subsection{Statistical analysis of error terms}
\label{sec:ProofSketch:StatisticalAnalysis}
In this section we illustrate the main ideas of the statistical analysis used to control the error terms in Proposition \ref{Prop:ErrorDecomp}. 
Full details are given in Section \ref{Appendix:Proofs}.
\paragraph{Notation} Let $t$ and $k$ be positive natural numbers with $t -1 \geq k \geq 1$. For any operator $\mathcal{L} : H \rightarrow H$, define $\Pi_{t:k+1}(\mathcal{L}) := (I - \eta_t \mathcal{L}) (I - \eta_{t-1} \mathcal{L})\cdots(I - \eta_{k+1}\mathcal{L})$, with the convention $\Pi_{t:t+1}(\mathcal{L}) := I$, where $I$ is the identity operator on $H$. Let $w_{t:k+1}\equiv w_{t}w_{t-1}\dots w_{k+1} := (w_{t},w_{t-1},\dots,w_{k+1}) \in V^{t-k}$ denote a sequence of nodes in $V$. For a family of operators indexed by the nodes on the graph $\{\mathcal{L}_{v}\}_{v \in V}$, define $\mathcal{L}_{w_{t:k+1}} := (\mathcal{L}_{w_t},\ldots,\mathcal{L}_{w_{k+1}})$ and $\Pi_{t:k+1}(\mathcal{L}_{w_{t:k+1}}) := (I - \eta_{t} \mathcal{L}_{w_{t}})(I - \eta_{t-1}\mathcal{L}_{w_{t-1}})\cdots(I - \eta_{k+1}\mathcal{L}_{w_{k+1}})$, with  $\Pi_{t:t+1}(\mathcal{L}_{w_{t:t+1}}) := I$. 
Let $P_{w_{t:k+1}} := P_{w_{t}w_{t-1}}P_{w_{t-1}w_{t-2}} \cdots \\ P_{w_{k+2}w_{k+1}}$ be the probability of the path generated by a Markov Chain with transition kernel $P$. 
For each agent $v \in V$, let $\mathcal{T}_{\mathbf{x}_{v}}:H \rightarrow H$ with $\mathcal{T}_{\mathbf{x}_{v}} = \frac{1}{m} \sum_{i=1}^{m} \langle \,\cdot\,, x_{i,v} \rangle_{H} x_{i,v}$ be the empirical covariance operator associated to the agent's own data $\mathbf{x}_{v}$, and let $\mathcal{T}_{\mathbf{x}_{w_{t:k+1}}}:= (\mathcal{T}_{\mathbf{x}_{w_{t}}},\dots, \mathcal{T}_{\mathbf{x}_{w_{k+1}}})$.
For $k \geq 1, v \in V$, let $N_{k,v} \in H$ be a random variable that only depends on the randomness in $\mathbf{z}_{v}$ and that has zero mean, $\E[N_{k,v}] = 0$.
The random variable $N_{k,v}$, formally defined in \eqref{Equ:LocalErrorTerm} in Section \ref{Sec:Proof:Distributed}, captures the sampling error introduced at iteration $k$ of gradient descent by agent $v$. For the discussion below it suffices to mentioned the two above properties.

The following paragraphs discuss the analysis for each of the error terms.
\paragraph{Bias}
The analysis follows the single-machine setting  and is given in Proposition \ref{Prop:BiasBound} in Section \ref{Sec:Proof:Bias}.
\paragraph{Sample Variance}
The analysis follows the single-machine setting \citep{lin2017optimal}, although the original result yields a high probability bound with a requirement on the number of samples $nm$. We therefore follow the result in \cite{lin2018optimal} which yields a bound in high probability  without a condition on the sample size. The bound for this term is presented in Theorem \ref{thm:SampleVarianceBound} in Section \ref{Sec:Proof:Variance}.
\paragraph{Network Error}
Unraveling the iterates (Lemma \ref{Lem:DistributedError:Decomposed} in Section \ref{Sec:Proof:Distributed}) we get, for any $v \in V, t \geq 1$:
\begin{align*}
 \|\mathcal{S}_{\rho}(\omega_{t+1,v} \!-\! \xi_{t+1,v}) \|_{\rho}
 = 
	\bigg\| \sum_{k=1}^{t} \eta_{k} \!\!\sum_{w_{t:k} \in V^{t-k+1}} 
	\!\!\bigg( 
	P_{v w_{t:k}} \!-\! \frac{1}{n^{t-k+1}}  \bigg)
	\mathcal{T}^{1/2}_{\rho}
	\Pi_{t:k+1}(\mathcal{T}_{\mathbf{x}_{w_{t:k+1}}}) N_{k,w_{k}}
	\bigg \|_{H}.
\end{align*}
This characterisation makes explicit the dependence of the network error on both the communication protocol used by the agents, via the dependence on the mixing properties of the gossip matrix $P$ along each path $v w_{t:k}$, and on the statistical properties of the problem, via the product of empirical covariance operators held by the agents along each path $w_{t:k+1}$.
As the randomness in the quantities $N_{k,w_{k}}$ might depend on the randomness in the empirical covariance operators, we further decompose the network error into two terms so that we can use the property $\E[N_{k,w_{k}}]=0$. By adding and subtracting the terms $\Pi_{t:k+1}(\mathcal{T}_{\rho})$ inside the sums we have
\begin{align*}
& \| \mathcal{S}_{\rho} ( \omega_{t+1,v}  - \xi_{t+1,v}) \|_{\rho}^2 \leq 
2 
\underbrace{ 
	\bigg\| \sum_{k=1}^{t} \eta_{k} \sum_{w_{t:k} \in V^{t-k+1}} 
	\bigg( 
	P_{v w_{t:k}} - \frac{1}{n^{t-k+1}}  \bigg)
	\mathcal{T}^{1/2}_{\rho}\Pi_{t:k+1}(\mathcal{T}_{\rho}) N_{k,w_{k}}
	\bigg\|_{H}^2 
	}_{
		(\textbf{Population Covariance Error})^2
	}\\
&\quad+ 
2 
\underbrace{
	\bigg\| \sum_{k=1}^{t} \eta_{k} \sum_{w_{t:k} \in V^{t-k+1}} 
	\bigg( 
	P_{v w_{t:k}} - \frac{1}{n^{t-k+1}}  \bigg)
	\mathcal{T}^{1/2}_{\rho}
	\big( 
	\Pi_{t:k+1}(\mathcal{T}_{\mathbf{x}_{w_{t:k+1}}})
	- \Pi_{t:k+1}(\mathcal{T}_{\rho})
	) N_{k,w_{k}}
	\bigg \|_{H}^2
	}_{ (\textbf{Residual Empirical Covariance Error})^2
	}.
\end{align*}
From a statistical point of view, the \textbf{Population Covariance Error} term only depends on the population covariance via the quantities $\Pi_{t:k+1}(\mathcal{T}_{\rho})$, and the only source of randomness is given by $N_{k,w_k}$. Using concentration for $N_{k,w_k}$, the square of this error term can be bounded by a quantity that decreases as $\widetilde O (1/m)$, as announced in Section \ref{sec:Results:Detailed} alongside the discussion of Theorem \ref{thm:MainResult}. On the other hand, the \textbf{Residual Empirical Covariance Error} term depends on \emph{deviations} between the empirical covariance and the population covariance via the quantities $\Pi_{t:k+1}(\mathcal{T}_{\mathbf{x}_{w_{t:k+1}}})
	- \Pi_{t:k+1}(\mathcal{T}_{\rho})$. Exploiting the additional concentration of these factors allows us to bound the square of this error term by a higher-order quantity that decreases as $\widetilde O (1/m^2)$.
	
	We now present a separate discussion on the analysis for these two error terms, emphasizing the interplay between network topology (mixing of random walks on graphs) and statistics (concentration). 
	The final bound for the network error is presented in Theorem \ref{thm:DecentralisedErrorBound} in Section \ref{Sec:Proof:Distributed}. 

\paragraph{Population Covariance Error}
Expanding the square yields a summation over all pairs of paths:
$$\bigg\|\sum_{k=1}^{t} \sum_{w_{t:k}  \in V^{t-k+1}}  a_{k,w_{t:k}}\bigg\|^2_H = \sum_{k,k^\prime =1}^{t} \sum_{w_{t:k} \in V^{t-k+1}} \sum_{w_{t:k^{\prime}}^{\prime} \in V^{t-k^{\prime}+1}} \langle a_{k,w_{t:k}} a_{k^{\prime},w^{\prime}_{t:k^{\prime}}} \rangle_{H}$$ for properly defined quantities  $a_{k,w_{t:k}}$ (the dependence on $v$ is neglected). When taking the expectation, as the random variables $\{N_{k,v}\}_{k \geq 1, v \in V}$ have zero mean and are independent across agents $v \in V$, the only paths left are those that intersect at the final node, i.e.\ $w_{t:k},w^{\prime}_{t:k^{\prime}}$ such that $w_{k} = w_{k^{\prime}}$. Moreover, as all agents have identically distributed data, the remaining expectation no longer depends on the final node of the paths. The remaining quantity is then analysed by bounding the probability of the two paths intersecting at the final node in terms of the second largest eigenvalue in magnitude of $P$ and by bounding the inner product by the norm product. This yields
\begin{align*}
    \E[(\textbf{Pop. Cov. Error})^2] \leq 
    \E \bigg[ \bigg( 
\sum_{k=1}^{t} \sigma_2^{t-k+1} \eta_k 
\|\mathcal{T}^{1/2}_{\rho} \Pi_{t:k+1}(\mathcal{T}_{\rho}) N_{k,v}\|_{H}
\bigg)^2 \bigg].
\end{align*}

Denoting the mixing time associated to  $P$ as $t^{\star}$,  the series is divided into well-mixed and poorly-mixed terms, respectively, $k \leq t - t^{\star}$ and 
 $k \geq t-t^{\star}$. The well-mixed terms are controlled by $\sigma_2^{t-k+1}$. Meanwhile, for the poorly-mixed terms begin by taking for $\lambda > 0 $ $\max_{k=1,\dots,t} \big\{  \|(\mathcal{T}_{\rho}+\lambda I )^{-1/2} N_{k,v}\|_{H}^2  \big\}$ outside of the series. The expectation of this maximum is controlled through concentration and becomes 
 $\widetilde{O}( \frac{1}{m^2 \lambda} + \frac{1}{m \lambda^{\gamma^{\prime}}})$ for $\gamma^{\prime} \in [1,\gamma]$. The remaining  series is controlled through the contraction of the term $\|\mathcal{T}_{\rho}^{1/2} \Pi_{t:k+1}(\mathcal{T}_{\rho}) (\mathcal{T}_{\rho} + \lambda I )^{1/2}\|$ and  choosing $\lambda  \simeq 1/(\eta t^{\star}) $. These two steps lead to this term being of the order $O( \frac{ \eta t^{\star}}{m^2} + \frac{ (\eta t^{\star})^{\gamma^{\prime}} }{m} )$, which dominates the well-mixed terms and contributes to the dependence on the inverse of the spectral gap of $P$. The free parameter $\gamma^{\prime} \in [1,\gamma]$ is left open as a smaller step size $\eta$ is used to control this term when $m \leq n^{2r/\gamma}$. The final bound is given in Lemma  \ref{lem:DistributedError:Terma} in Section \ref{sec:DecentralisedError:Terma}.

\paragraph{Residual Empirical Covariance Error}
The analysis of this term is based on the following identity (Proposition \ref{prop:QuadraticExpansion} in Section \ref{sec:DecentralisedError:Termb}), for any $t -1 \geq k$ and any $w_{t:k+1} \in V^{t-k}$:
\begin{align*}
& \Pi_{t:k+1}(\mathcal{T}_{\mathbf{x}_{w_{t:k+1}}}) 
-  \Pi_{t:k+1}(\mathcal{T}_{\rho}) 
= 
\sum_{j=k+1}^{t} \eta_{j} \Pi_{t:j+1}(
\mathcal{T}_{\rho})(\mathcal{T}_{\rho} - \mathcal{T}_{\mathbf{x}_{w_{j}}}) \Pi_{j-1:k+1}(\mathcal{T}_{\mathbf{x}_{w_{j-1:k+1}}}).
\end{align*}
The above decomposition has two key properties. Firstly, it depends upon differences between the empirical covariance operators $\mathcal{T}_{\mathbf{x}_{w_{j}}}$ and its expectation $\mathcal{T}_{\rho}$. This allows concentration to be used, and, alongside the concentration for $N_{k,v}$, it ensures that $(\textbf{Resid. Emp. Cov. Error})^2$ is of order $\widetilde{O}(1/m^2)$. Secondly, it is of the form $ \sum_{j=k+1}^{t} \eta_{j}  \Pi_{t:j+1}(\mathcal{T}_{\rho})[\cdots]$, where $[\cdots]$ indicates the right most factors and the quantity shown aligns with the filter function for gradient descent \citep[Example 2]{lin2018optimal}.
Once again the contractive property of the quantity $\Pi_{t:j+1}(\mathcal{T}_{\rho})$ allows to give sharper rates with respect to the step size and number of iterations.
Without it, the choice of step size $\eta_{t} = \eta t^{-\theta}$ would yield a bound for $(\textbf{Resid. Emp. Cov. Error})^{2}$ of the order $ \big( \sum_{k=1}^{t} \eta_{k} \sum_{j=k+1}^{t-1} \eta_{j} \big)^2 \simeq (\eta t^{1-\theta})^{4}$. The contraction allows to show that $(\textbf{Resid. Emp. Cov. Error})^{2}$ grows at the reduced order $(\eta t^{1-\theta})^{3}$, and the addition of the capacity assumption allows it to be further reduced to the order $(\eta t^{1-\theta})^{2+\gamma}$. The final high-probability bound is given in Lemma \ref{lem:HighProbBoundTermb} in Section \ref{sec:DecentralisedError:Termb}. This being stronger than the bound in expectation required for Theorem \ref{thm:MainResult}.

\section{Proofs}
\label{Appendix:Proofs}
Before going on to present proofs for the main result some notation  is introduced following \cite{rosasco2015learning,lin2017optimal}. Some notation is repeated from the previous sections, as additional details are included. Adopt the convention for sums $\sum_{k=t+1}^{t} = 0$. 
For a given bounded operator $\mathcal{L}: L^2(H,\rho_{X}) \rightarrow H$, let $\|\mathcal{L}\|$ denote the operator norm of $\mathcal{L}$, i.e.\ $\|\mathcal{L}\| = \sup_{f \in L^2(H,\rho_{X}),\|f\|_{\rho} =1} \|\mathcal{L}f\|_{H}$. 
Let $\mathcal{S}_{\rho} : H \rightarrow L^2(H,\rho_{X})$ be the linear map $\omega \rightarrow \langle \omega,\,\cdot\,\rangle_{H}$,which is bounded by $\kappa$ under Assumption \ref{Assumption:BoundedProduct}. Consider the adjoint operator $\mathcal{S}^{\star}_{\rho}: L^{2}(H,\rho_{X}) \rightarrow H$, the covariance operator $\mathcal{T}_{\rho}: H \rightarrow H$ given by $\mathcal{T}_{\rho} = \mathcal{S}^{\star}_{\rho} \mathcal{S}_{\rho}$, and the operator $\mathcal{L}_{\rho}: L^{2}(H,\rho_{X}) \rightarrow L^2(H,\rho_{X})$ given by $\mathcal{L}_{\rho} = \mathcal{S}_{\rho} \mathcal{S}^{\star}_{\rho}$. We have $\mathcal{S}^{\star}_{\rho} g = \int_{X} x g(x) d \rho_{X}(x)$ and $\mathcal{T}_{\rho} = \int_{X} \langle \,\cdot\,,x\rangle_{H} x d\rho_{X}(x)$. For any $\omega \in H$ the following isometry property holds \citep{steinwart2008support} 
\begin{align*}
\|\mathcal{S}_{\rho} \omega \|_{\rho} 
= 
\|\sqrt{\mathcal{T}_{\rho}} \omega \|_{H}.
\end{align*}
The following notation was utilised in the analysis of Single-Machine Gradient Descent \citep{rosasco2015learning,lin2017optimal}. In this case it aligns with all of the observations in the network $\mathbf{y}:= \{ y_{i,v}\}_{i=1,\dots,m \,, v \in V} \in \mathbb{R}^{m|V|}$ and $\mathbf{x} = \{x_{i,v}\}_{i=1,\dots,m \,, v \in V}$.
Define the sampling operator $\mathcal{S}_{\mathbf{x}}: H \rightarrow \mathbb{R}^{m|V|}$  by $\big( \mathcal{S}_{\mathbf{x}}\omega\big)_{(i,v)} = \langle \omega, x_{i,v} \rangle_{H}$, for  $i=1,\dots,m, v \in V$. Let $\|\cdot\|_{\mathbb{R}^{m|V|}}$ denote the Euclidean norm in in $\mathbb{R}^{m|V|}$ times the factor $1/\sqrt{nm}$. Its adjoint  operator $\mathcal{S}^{\star}_{\mathbf{x}}:\mathbb{R}^{m |V|} \rightarrow H$, defined by $\langle \mathcal{S}^{\star}_{\mathbf{x}} \mathbf{y},\omega \rangle_{H} = \langle \mathbf{y},\mathcal{S}_{\mathbf{x}} \omega \rangle_{\mathbb{R}^{m |V|}}$ for \\ 
$\mathbf{y} \in \mathbb{R}^{m|V|}$, is given by $\mathcal{S}^{\star}_{\mathbf{x}} \mathbf{y} = \frac{1}{nm} \sum_{v \in V} \sum_{i=1}^{m} y_{i,v} x_{i,v}$.  Define the covariance operator with respect to all of the samples $\mathcal{T}_{\mathbf{x}} : H \rightarrow H $ such that $\mathcal{T}_{\mathbf{x}} = \mathcal{S}_{\mathbf{x}}^{\star} \mathcal{S}_{\mathbf{x}}$. We have 
\begin{align*}
\mathcal{T}_{\mathbf{x}} = \frac{1}{nm} \sum_{v \in V} \sum_{i=1}^{m} \langle \,\cdot\,, x_{i,v}\rangle_{H} x_{i,v}.
\end{align*}
The following notation  is analogous to the single-machine notation just introduced, although now with respect to the datasets held by individual agents, i.e.\  $\mathbf{x}_{v}$ and $\mathbf{y}_{v}$ for $v \in V$. Let $\mathcal{S}_{\mathbf{x}_{v}}: H \rightarrow \mathbb{R}^{m}$ with 
$(\mathcal{S}_{\mathbf{x}_{v}} \omega)_i = \langle \omega, x_{i,v}\rangle_{H}$ for $i=1,\dots,m$. Let $\|\cdot\|_{\mathbb{R}^{m}}$ be the Euclidean norm in $\|\cdot\|_{\mathbb{R}^{m}}$ times $1/\sqrt{m}$. Its  adjoint operator $\mathcal{S}^{\star}_{\mathbf{x}_{v}}:\mathbb{R}^{m} \rightarrow H$, defined by $\langle \mathcal{S}^{\star}_{\mathbf{x}_{v}} \mathbf{y}_{v},\omega \rangle_{H} = \langle \mathbf{y}_{v},\mathcal{S}_{\mathbf{x}_{v}}\omega \rangle_{\mathbb{R}^{m}}$ for $\mathbf{y}_{v} \in \mathbb{R}^{m}$, is given by $\mathcal{S}^{\star}_{\mathbf{x}_{v}} \mathbf{y}_{v} = \frac{1}{m} \sum_{i=1}^{m} y_{i,v} x_{i,v}$. The  empirical covariance operator $\mathcal{T}_{\mathbf{x}_{v}}: H \rightarrow H$ is such that $\mathcal{T}_{\mathbf{x}_{v}} = \mathcal{S}^{\star}_{\mathbf{x}_{v}} \mathcal{S}_{\mathbf{x}_{v}}$, with $\mathcal{T}_{\mathbf{x}_{v}} = \frac{1}{m} \sum_{i=1}^{m} \langle \,\cdot\,, x_{i,v}\rangle_{H} x_{i,v}$. 

Using this notation, the processes $\{\mu_{t}\}_{t \geq 1}$, $\{\omega_{t,v}\}_{t \geq 1}$, and $\{\xi_{t} \}_{t \geq 1}$ can be rewritten as follows.\\The population process reads
\begin{align*}
\mu_{t+1} = \mu_{t} - \eta_{t} \big( 
\mathcal{T}_{\rho} \mu_{t} - \mathcal{S}^{\star}_{\rho} f_{\rho}
\big).
\end{align*}
The gossiped process reads
\begin{align*}
\omega_{t+1,v} 
= 
\sum_{w \in V}
P_{vw}
\Big(
\omega_{t,w} 
- 
\eta_{t}
\big(
\mathcal{T}_{\mathbf{x}_{w}} \omega_{t,w} - \mathcal{S}^{\star}_{\mathbf{x}_{w}} \mathbf{y}_{w}
\big)
\Big).
\end{align*}
The single-machine process reads
\begin{align*}
\xi_{t+1}
= 
\xi_{t} 
- \eta_{t} 
\big(
\mathcal{T}_{\mathbf{x}} \xi_{t}
- 
\mathcal{S}^{\star}_{\mathbf{x}} \mathbf{y}
\big). 
\end{align*}
The next three sections present bounds for the three error terms introduced in Proposition \ref{Prop:ErrorDecomp}. Section  \ref{Sec:Proof:Bias} presents a bound for the Bias term, which follows directly from the results in \cite{lin2017optimal} and references therein. Section \ref{Sec:Proof:Variance} establishes a bound for the Sample Variance term, which follows from results in \cite{lin2018optimal}. Section \ref{Sec:Proof:Distributed} develops bounds for the Network Error term, which are a novel contribution of this work. Section \ref{sec:Appendix:ConstructingFinalBound} brings the results of the previous three sections together to establish the proofs of Theorem \ref{thm:MainResult} and Theorem \ref{Cor:Main}, respectively. Section \ref{sec:UsefulIneq} includes useful inequalities that are needed to establish our results.

\subsection{Bias}
\label{Sec:Proof:Bias}
The following bound on the Bias term $\|\mathcal{S}_{\rho} \mu_{t} - f_{H}\|^2_{\rho}$ is taken from \cite{lin2017optimal}, inspired by \cite{yao2007early,rosasco2015learning}. 
\begin{proposition}{\cite[Appendix C Proposition 2]{lin2017optimal}}
\label{Prop:BiasBound}
Under Assumption \ref{Assumption:Source}, let $\eta \kappa^2 \leq 1$. Then for any $t \in \mathbb{N}$,
\begin{align*}
\|\mathcal{S}_{\rho} \mu_{t} - f_{H}\|_{\rho}
\leq 
R \bigg( 
\frac{r}{2 \sum_{j=1}^{t} \eta_{j}}
\bigg)^{r}.
\end{align*}
In particular, if $\eta_{t} = \eta t^{-\theta}$ for all $t \in \mathbb{N}$, with $\eta \in (0,\kappa^{-2}]$ and $\theta \in [0,1)$ then 
\begin{align*}
\|\mathcal{S}_{\rho} \mu_{t} - f_{H}\|_{\rho}
 & \leq 
R r^r \eta^{-r} t^{r(\theta-1)}.
\end{align*}
\end{proposition}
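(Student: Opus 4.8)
The plan is to unroll the (affine) population recursion, transport the resulting identity from $H$ into $L^2(H,\rho_X)$, and reduce the excess quantity to an elementary scalar maximisation via spectral calculus for $\mathcal{L}_\rho$.

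\textbf{Step 1: unrolling.} The population iteration reads $\mu_{t+1} = (I-\eta_t\mathcal{T}_\rho)\mu_t + \eta_t\mathcal{S}_\rho^\star f_\rho$ with $\mu_1 = 0$, so iterating gives
\[
\mu_t = \sum_{k=1}^{t-1}\eta_k\,\Pi_{t-1:k+1}(\mathcal{T}_\rho)\,\mathcal{S}_\rho^\star f_\rho .
\]
I would then invoke the telescoping identity $\sum_{k=1}^{s}\eta_k\,\mathcal{L}\,\Pi_{s:k+1}(\mathcal{L}) = I - \Pi_{s:1}(\mathcal{L})$, valid for any bounded self-adjoint $\mathcal{L}$ because each summand equals $\Pi_{s:k+1}(\mathcal{L}) - \Pi_{s:k}(\mathcal{L})$.

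\textbf{Step 2: transport to $L^2$ and clean identity.} Applying $\mathcal{S}_\rho$ to the display in Step 1 and using the intertwining relations $\mathcal{S}_\rho\mathcal{T}_\rho = \mathcal{S}_\rho\mathcal{S}_\rho^\star\mathcal{S}_\rho = \mathcal{L}_\rho\mathcal{S}_\rho$ (hence $\mathcal{S}_\rho\Pi_{s:k+1}(\mathcal{T}_\rho) = \Pi_{s:k+1}(\mathcal{L}_\rho)\mathcal{S}_\rho$ by induction) and $\mathcal{S}_\rho\mathcal{S}_\rho^\star = \mathcal{L}_\rho$, one gets $\mathcal{S}_\rho\mu_t = \sum_{k=1}^{t-1}\eta_k\,\Pi_{t-1:k+1}(\mathcal{L}_\rho)\,\mathcal{L}_\rho f_\rho$. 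Since $f_\rho - f_H$ lies in the orthogonal complement in $L^2(H,\rho_X)$ of the closure of $H_\rho = \overline{\mathrm{ran}\,\mathcal{S}_\rho}$, we have $\mathcal{S}_\rho^\star(f_\rho-f_H)=0$ and therefore $\mathcal{L}_\rho f_\rho = \mathcal{S}_\rho\mathcal{S}_\rho^\star f_\rho = \mathcal{S}_\rho\mathcal{S}_\rho^\star f_H = \mathcal{L}_\rho f_H$. Combining with the telescoping identity (with $s = t-1$) yields the key identity
\[
\mathcal{S}_\rho\mu_t - f_H = -\,\Pi_{t-1:1}(\mathcal{L}_\rho)\,f_H = -\prod_{j=1}^{t-1}(I - \eta_j\mathcal{L}_\rho)\,f_H .
\]

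\textbf{Step 3: source condition, spectral calculus, scalar bound.} By Assumption~\ref{Assumption:Source} we may write $f_H = \mathcal{L}_\rho^r h$ with $\|h\|_\rho \le R$, so the $\|\cdot\|_\rho$-norm of the identity above is at most $R\,\|\prod_{j=1}^{t-1}(I-\eta_j\mathcal{L}_\rho)\mathcal{L}_\rho^r\|$. Now $\mathcal{L}_\rho$ is self-adjoint, positive and trace class with $\|\mathcal{L}_\rho\|\le\kappa^2$ (Assumption~\ref{Assumption:BoundedProduct}), and $\eta\kappa^2\le 1$ forces every $I-\eta_j\mathcal{L}_\rho$ to be positive of norm at most $1$; hence by spectral calculus this operator norm equals $\sup_{\sigma\in(0,\kappa^2]}\sigma^r\prod_{j=1}^{t-1}(1-\eta_j\sigma)$, every factor lying in $[0,1]$. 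Bounding $\prod_{j=1}^{t-1}(1-\eta_j\sigma)\le \exp(-\sigma\sum_{j=1}^{t-1}\eta_j)$ and maximising $\sigma\mapsto\sigma^r e^{-\sigma a}$ at $\sigma = r/a$ gives $(r/(e\sum_{j=1}^{t-1}\eta_j))^r$, which after comparing the partial sums $\sum_{j=1}^{t-1}\eta_j$ and $\sum_{j=1}^{t}\eta_j$ (routine, using $e>2$ and that the step sizes are non-increasing) produces the stated $R\,(r/(2\sum_{j=1}^{t}\eta_j))^r$. For the specialised bound, plug in $\eta_j = \eta j^{-\theta}$ and use $\sum_{j=1}^{t}j^{-\theta}\ge t\cdot t^{-\theta} = t^{1-\theta}$ together with $2^{-r}\le1$.

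\textbf{Main obstacle.} There is no genuine difficulty here; everything is standard operator algebra in the pair of spaces $H$ and $L^2(H,\rho_X)$, following \cite{yao2007early,rosasco2015learning,lin2017optimal}. The two points that require care are the telescoping identity applied to the unrolled affine recursion and the intertwining that moves the computation into $L^2(H,\rho_X)$ (in particular the replacement of $f_\rho$ by $f_H$ via $\mathcal{S}_\rho^\star(f_\rho-f_H)=0$). Matching the precise numerical constant in the statement is then pure bookkeeping, exactly as in \cite{lin2017optimal}.
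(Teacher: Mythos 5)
Your proof is correct and follows exactly the standard argument behind the cited result: the paper imports this proposition from \cite[Appendix C, Proposition 2]{lin2017optimal} without reproving it, and your unrolling plus telescoping identity, the intertwining $\mathcal{S}_\rho\mathcal{T}_\rho=\mathcal{L}_\rho\mathcal{S}_\rho$ together with $\mathcal{S}_\rho^{\star}(f_\rho-f_H)=0$, and the spectral bound $\sup_{\sigma>0}\sigma^{r}e^{-\sigma a}=(r/(ea))^{r}$ are precisely the ingredients used there (compare the paper's own Lemma~\ref{Lem:OperatorNorm}). The only friction is the final bookkeeping: your identity naturally gives $(r/(e\sum_{j=1}^{t-1}\eta_j))^{r}$, and trading $\sum_{j=1}^{t-1}$ for $\sum_{j=1}^{t}$ while replacing $e$ by $2$ only goes through for $t\ge 4$ with non-increasing steps (and is vacuous at $t=1$, where the intermediate bound is infinite) — but this off-by-one is an artifact of the proposition's restated indexing rather than of your argument, and is immaterial in the large-$t$ regime where the bound is used.
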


\subsection{Sample Variance}
\label{Sec:Proof:Variance}
In this section we establish a bound for the expectation of the Sample Variance term 
$\E[ \|\mathcal{S}_{\rho}(\xi_{t} - \mu_{t})\|^2_{\rho}]$.
The following lemma summaries a number of intermediary steps in  \cite{lin2017optimal} for bounding the Sample Variance term. It arises from representing the iterates $\{\xi_t - \mu_t\}_{t \geq 1}$ in terms of the stochastic sequence $\{N_{k}\}_{k \geq 1}$ which characterises the sample noise introduced in the iterations of gradient descent. These terms are controlled via the empirical covariance operator $\mathcal{T}_{\mathbf{x}}$ and the population covariance operator $\mathcal{T}_{\rho}$ while introducing the pseudo-regularisation parameter $\lambda > 0$ and utilising the contractive property of the gradient updates.
For the following, let us introduce the notation $\mathcal{T}_{\rho,\lambda} = \mathcal{T}_{\rho} + \lambda I$ and $\mathcal{T}_{\mathbf{x},\lambda} = \mathcal{T}_{\mathbf{x}} + \lambda I$.
\begin{lemma}
\label{Lem:SampleVariance:Bound}
Let $\eta_{1} \kappa^2 \leq 1$ and $0 \leq \lambda $. For any $t\in \mathbb{N}$ we have
\begin{align*}
& \|\mathcal{S}_{\rho}(\xi_{t+1} - \mu_{t+1})\|_{\rho}\\
& \leq 
\bigg( 
\sum_{k=1}^{t-1} \frac{\eta_{k} \|\mathcal{T}^{-1/2}_{\rho,\lambda} N_{k}\|_{H}  }{2\sum_{i=k+1}^{t} \eta_i } 
+ \lambda \sum_{k=1}^{t-1} \eta_{k} \|\mathcal{T}^{-1/2}_{\rho,\lambda} N_{k}\|_{H} + 
\|\mathcal{T}_{\rho}\|^{1/2}(\|\mathcal{T}_{\rho}\| + \lambda)^{1/2} \eta_t \|\mathcal{T}^{-1/2}_{\rho,\lambda} N_{t}\|_{H}
\bigg) \\
&\quad \ \times \|\mathcal{T}_{\mathbf{x},\lambda}^{-1/2} \mathcal{T}^{1/2}_{\rho} \|
\|\mathcal{T}_{\mathbf{x},\lambda}^{-1/2} \mathcal{T}^{1/2}_{\rho,\lambda} \|,
\end{align*}
where 
\begin{align}
\label{equ:NTerm}
N_{k} = (\mathcal{T}_{\rho} \mu_{k} - \mathcal{S}^{\star}_{\rho} f_{\rho}) 
- (\mathcal{T}_{\mathbf{x}} \mu_{k} - \mathcal{S}^{\star}_{\mathbf{x}} \mathbf{y}), \quad \forall k \in \mathbb{N}. 
\end{align}
\end{lemma}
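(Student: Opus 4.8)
The plan is to reduce the whole quantity to one closed-form representation of the difference process and then peel off the data-dependent operator norms. First I would set $\delta_t := \xi_t - \mu_t$ and subtract the two update rules $\xi_{t+1} = \xi_t - \eta_t(\mathcal{T}_{\mathbf{x}}\xi_t - \mathcal{S}^{\star}_{\mathbf{x}}\mathbf{y})$ and $\mu_{t+1} = \mu_t - \eta_t(\mathcal{T}_{\rho}\mu_t - \mathcal{S}^{\star}_{\rho}f_{\rho})$; adding and subtracting $\eta_t\mathcal{T}_{\mathbf{x}}\mu_t$ and recognising $N_t$ from \eqref{equ:NTerm} yields $\delta_{t+1} = (I - \eta_t\mathcal{T}_{\mathbf{x}})\delta_t + \eta_t N_t$ with $\delta_1 = 0$. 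Unrolling this linear recursion and using the $\Pi$-notation gives $\xi_{t+1} - \mu_{t+1} = \sum_{k=1}^{t}\eta_k\,\Pi_{t:k+1}(\mathcal{T}_{\mathbf{x}})\,N_k$, where the $k = t$ summand is just $\eta_t N_t$. Applying the isometry $\|\mathcal{S}_{\rho}\omega\|_{\rho} = \|\mathcal{T}_{\rho}^{1/2}\omega\|_H$ and the triangle inequality then bounds $\|\mathcal{S}_{\rho}(\xi_{t+1}-\mu_{t+1})\|_{\rho}$ by $\sum_{k=1}^{t}\eta_k\,\|\mathcal{T}_{\rho}^{1/2}\Pi_{t:k+1}(\mathcal{T}_{\mathbf{x}})N_k\|_H$.

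The core step is a factorisation of each summand that leaves only the two data-dependent norms that are to be pulled outside the sum. Fix $\lambda > 0$ and write $\mathcal{T}_{\rho,\lambda} = \mathcal{T}_{\rho}+\lambda I$, $\mathcal{T}_{\mathbf{x},\lambda} = \mathcal{T}_{\mathbf{x}}+\lambda I$. For $k \le t-1$, I would insert $\mathcal{T}_{\mathbf{x},\lambda}^{-1/2}\mathcal{T}_{\mathbf{x},\lambda}^{1/2}$ on either side of $\Pi_{t:k+1}(\mathcal{T}_{\mathbf{x}})$ and $\mathcal{T}_{\rho,\lambda}^{1/2}\mathcal{T}_{\rho,\lambda}^{-1/2}$ next to $N_k$, and use that $\Pi_{t:k+1}(\mathcal{T}_{\mathbf{x}})$ commutes with $\mathcal{T}_{\mathbf{x},\lambda}$ to obtain $\mathcal{T}_{\rho}^{1/2}\Pi_{t:k+1}(\mathcal{T}_{\mathbf{x}})N_k = \big(\mathcal{T}_{\rho}^{1/2}\mathcal{T}_{\mathbf{x},\lambda}^{-1/2}\big)\big(\mathcal{T}_{\mathbf{x},\lambda}\Pi_{t:k+1}(\mathcal{T}_{\mathbf{x}})\big)\big(\mathcal{T}_{\mathbf{x},\lambda}^{-1/2}\mathcal{T}_{\rho,\lambda}^{1/2}\big)\big(\mathcal{T}_{\rho,\lambda}^{-1/2}N_k\big)$. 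Submultiplicativity, together with $\|\mathcal{T}_{\rho}^{1/2}\mathcal{T}_{\mathbf{x},\lambda}^{-1/2}\| = \|\mathcal{T}_{\mathbf{x},\lambda}^{-1/2}\mathcal{T}_{\rho}^{1/2}\|$ (the norm of an operator equals that of its adjoint, and both operators here are self-adjoint), isolates the common factors $\|\mathcal{T}_{\mathbf{x},\lambda}^{-1/2}\mathcal{T}_{\rho}^{1/2}\|$ and $\|\mathcal{T}_{\mathbf{x},\lambda}^{-1/2}\mathcal{T}_{\rho,\lambda}^{1/2}\|$, a noise factor $\|\mathcal{T}_{\rho,\lambda}^{-1/2}N_k\|_H$, and a filter factor $\|\mathcal{T}_{\mathbf{x},\lambda}\Pi_{t:k+1}(\mathcal{T}_{\mathbf{x}})\|$. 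For the boundary term $k = t$, where $\Pi_{t:t+1}(\mathcal{T}_{\mathbf{x}}) = I$, I would instead bound $\|\mathcal{T}_{\rho}^{1/2}N_t\|_H$ directly via $\mathcal{T}_{\rho}^{1/2} = \mathcal{T}_{\rho}^{1/2}\mathcal{T}_{\rho,\lambda}^{1/2}\mathcal{T}_{\rho,\lambda}^{-1/2}$, which produces the coefficient $\|\mathcal{T}_{\rho}\|^{1/2}(\|\mathcal{T}_{\rho}\|+\lambda)^{1/2}$.

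It remains to control the filter factor by spectral calculus. On the spectrum of $\mathcal{T}_{\mathbf{x}}$, which lies in $[0,\kappa^2]$, we have $0 \le 1-\eta_j\sigma \le e^{-\eta_j\sigma}$ because $\eta_j\kappa^2 \le 1$, so $\prod_{j=k+1}^{t}(1-\eta_j\sigma) \le e^{-\sigma\sum_{j=k+1}^{t}\eta_j}$; combined with $\sup_{\sigma\ge0}\sigma e^{-\sigma S} = (eS)^{-1} \le (2S)^{-1}$ and $\lambda\prod_{j=k+1}^{t}(1-\eta_j\sigma) \le \lambda$, this gives $\|\mathcal{T}_{\mathbf{x},\lambda}\Pi_{t:k+1}(\mathcal{T}_{\mathbf{x}})\| \le \big(2\sum_{j=k+1}^{t}\eta_j\big)^{-1} + \lambda$ for every $k \le t-1$. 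Substituting this into the factorisation, pulling the two common mismatch norms out of the sum, and adding the $k = t$ term gives exactly the claimed bound (the case $\lambda = 0$ being recovered in the limit, or trivial when the right-hand side is infinite). I expect the main obstacle to be purely bookkeeping: arranging the inserted copies of $\mathcal{T}_{\mathbf{x},\lambda}^{\pm 1/2}$ and $\mathcal{T}_{\rho,\lambda}^{\pm 1/2}$ so that the two common factors factor out exactly and the $k = t$ term emerges with the advertised coefficient; none of this is deep, and, as the statement indicates, it largely transcribes intermediate steps of \cite{lin2017optimal}.
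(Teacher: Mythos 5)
Your proposal is correct and follows essentially the same route the paper intends: the paper's own proof is a pointer to \cite[Proposition 3]{lin2017optimal}, and your reconstruction --- the recursion $\delta_{t+1}=(I-\eta_t\mathcal{T}_{\mathbf{x}})\delta_t+\eta_t N_t$, the unrolled sum $\sum_k\eta_k\Pi_{t:k+1}(\mathcal{T}_{\mathbf{x}})N_k$, the insertion of $\mathcal{T}_{\mathbf{x},\lambda}^{\pm1/2}$ and $\mathcal{T}_{\rho,\lambda}^{\pm1/2}$, and the spectral bound $\|\mathcal{T}_{\mathbf{x},\lambda}\Pi_{t:k+1}(\mathcal{T}_{\mathbf{x}})\|\le(2\sum_{i=k+1}^t\eta_i)^{-1}+\lambda$ --- is exactly that argument. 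One small caveat: your treatment of the boundary summand $k=t$ yields $\eta_t\|\mathcal{T}_\rho\|^{1/2}(\|\mathcal{T}_\rho\|+\lambda)^{1/2}\|\mathcal{T}_{\rho,\lambda}^{-1/2}N_t\|_H$ \emph{without} the trailing factor $\|\mathcal{T}_{\mathbf{x},\lambda}^{-1/2}\mathcal{T}_\rho^{1/2}\|\,\|\mathcal{T}_{\mathbf{x},\lambda}^{-1/2}\mathcal{T}_{\rho,\lambda}^{1/2}\|$, whereas the lemma as written applies that factor to all three summands; since this factor need not exceed $1$ (e.g.\ when $\mathcal{T}_{\mathbf{x}}$ dominates $\mathcal{T}_\rho$), your bound does not literally imply the displayed one for that single term. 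This is a cosmetic imprecision in the lemma's phrasing rather than a flaw in your argument --- downstream the factor is replaced by an upper bound exceeding one (Lemma \ref{Lem:Regularised:Operator:Bound}), so nothing breaks --- but it is worth stating the last term outside the common product, as your derivation naturally produces it.
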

\begin{proof}
The proof of this result follows the proof of \cite[Proposition 3]{lin2017optimal}.
\end{proof}
The two quantities left to control are $\|\mathcal{T}^{-1/2}_{\rho,\lambda} N_{k}\|_{H}$ for $k\in \mathbb{N}$ as well as $\|(\mathcal{T}_{\mathbf{x}} + \lambda I)^{-1/2} \mathcal{T}^{1/2}_{\rho} \|^2$. The first of these quantities is controlled by \cite[Lemma 18]{lin2017optimal} which is summarised in the following lemma.
\begin{lemma}{\cite[Lemma 18]{lin2017optimal}}
\label{Lem:SampleVariance:NBound}
Let Assumptions \ref{Assumption:Moments}, \ref{Assumption:Source}, \ref{Assumption:Capacity} hold with $r \geq 1/2$ and  $\{N_{k}\}_{k \geq 1}$ be as in \eqref{equ:NTerm}. For any $\lambda> 0$, 
with probability at least $1-\delta$, the following holds $\forall k \in \mathbb{N}$
\begin{align*}
\|(\mathcal{T}_{\rho} + \lambda I )^{-1/2} N_{k} \|_{H} 
\leq 
4 (R \kappa^{2r} + \sqrt{M} ) 
\bigg( 
\frac{\kappa}{n m \sqrt{\lambda}} + \frac{\sqrt{2 \sqrt{\nu} c_{\gamma}}}{\sqrt{n m \lambda^{\gamma}}} \bigg) \log\frac{4}{\delta}.
\end{align*}
\end{lemma}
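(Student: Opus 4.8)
The plan is to recognise $N_k$ as a centred empirical mean of $nm$ i.i.d.\ $H$-valued random variables and to apply a Bernstein-type concentration inequality in Hilbert space (in its moment form, to accommodate the unbounded response). Using $\mathcal{S}^\star_\rho f_\rho = \int_X x f_\rho(x)\,d\rho_X(x) = \E[y\,x]$ and $\mathcal{S}^\star_\mathbf{x}\mathbf{y} = \frac{1}{nm}\sum_{v\in V}\sum_{i=1}^m y_{i,v}x_{i,v}$, one writes $N_k = \frac{1}{nm}\sum_{v\in V}\sum_{i=1}^m \zeta^{(k)}_{i,v}$ with $\zeta^{(k)}_{i,v} = \big(\mathcal{T}_\rho\mu_k - \mathcal{S}^\star_\rho f_\rho\big) - \big(\langle\mu_k,x_{i,v}\rangle_H - y_{i,v}\big)x_{i,v}$; since $\E[y\mid x]=f_\rho(x)$, the $\zeta^{(k)}_{i,v}$ are i.i.d.\ across the $nm$ samples with mean zero. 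I would precondition by $\mathcal{T}_{\rho,\lambda}^{-1/2}$ and bound the moments of $\mathcal{T}_{\rho,\lambda}^{-1/2}\zeta^{(k)}_{i,v}$, so that a vector Bernstein inequality delivers a bound of the shape ``$L/(nm) + \sqrt{\sigma^2/(nm)}$'' times $\log(1/\delta)$.

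Each assumption supplies one ingredient. The pointwise estimate $\|\mathcal{T}_{\rho,\lambda}^{-1/2}x\|_H \le \|x\|_H/\sqrt\lambda \le \kappa/\sqrt\lambda$ furnishes the scale $L \simeq \kappa/\sqrt\lambda$. Assumption~\ref{Assumption:Capacity} furnishes the variance proxy through $\E\|\mathcal{T}_{\rho,\lambda}^{-1/2}x\|_H^2 = \trace\!\big(\mathcal{T}_\rho(\mathcal{T}_\rho+\lambda I)^{-1}\big) \le c_\gamma\lambda^{-\gamma}$; this is what yields the sharp $\lambda^{-\gamma}$ rather than the crude $\lambda^{-1}$. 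Assumption~\ref{Assumption:Moments} controls the noise part $\frac{1}{nm}\sum(y_{i,v}-f_\rho(x_{i,v}))x_{i,v}$: the bound $\E[|y|^\ell\mid x]\le\sqrt{\nu\,\ell!}\,M^{\ell/2}$ gives precisely the $\tfrac12\ell!\,\sigma^2L^{\ell-2}$ growth Bernstein requires, with signal-scale $\sqrt M$ and a factor $\nu$, matching the $\sqrt M$ and $\sqrt\nu$ in the statement. Finally, Assumption~\ref{Assumption:Source} with $r\ge1/2$ controls the remaining part $(\mathcal{T}_\rho-\mathcal{T}_\mathbf{x})\mu_k$: unrolling the population recursion gives $\mu_k = G_k(\mathcal{T}_\rho)\mathcal{S}^\star_\rho f_H$ with $G_k$ the gradient-descent filter, and combining $f_H=\mathcal{L}_\rho^r g$ ($\|g\|_\rho\le R$) with the isometry $\|\mathcal{S}^\star_\rho h\|_H=\|\mathcal{L}_\rho^{1/2}h\|_\rho$, the intertwining $G_k(\mathcal{T}_\rho)\mathcal{S}^\star_\rho=\mathcal{S}^\star_\rho G_k(\mathcal{L}_\rho)$, and the contraction $x^{r+1/2}G_k(x)\le\|\mathcal{L}_\rho\|^{r-1/2}$ (valid exactly because $r\ge1/2$) gives, uniformly in $k$, $\|\mu_k\|_H\le R\kappa^{2r-1}$, hence $\sup_{x\in X}|\langle\mu_k,x\rangle_H|\le\kappa\|\mu_k\|_H\le R\kappa^{2r}$; this is the source of the $R\kappa^{2r}$ in front of the $\mu_k$-contribution.

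I would then assemble the estimate: split $\zeta^{(k)}_{i,v}$ into the $k$-independent centred noise term $(y_{i,v}-f_\rho(x_{i,v}))x_{i,v}$ and the $\mu_k$-term $\mathcal{T}_\rho\mu_k-\langle\mu_k,x_{i,v}\rangle_H x_{i,v}$, apply Bernstein to each empirical mean at level $\delta/2$ so that the two $\log(2/(\delta/2))=\log(4/\delta)$ factors and the two failure probabilities combine into $\log(4/\delta)$ and $\delta$, insert $L\simeq\kappa/\sqrt\lambda$ and variances of order $\sqrt\nu c_\gamma M\lambda^{-\gamma}$ (noise) and $R^2\kappa^{4r}c_\gamma\lambda^{-\gamma}$ ($\mu_k$-term), and absorb numerical constants into the overall factor $4(R\kappa^{2r}+\sqrt M)$. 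The main obstacle is obtaining simultaneously (i) the capacity rate $\lambda^{-\gamma}$ — a naive operator-norm split $\|\mathcal{T}_{\rho,\lambda}^{-1/2}(\mathcal{T}_\rho-\mathcal{T}_\mathbf{x})\|_{\mathrm{op}}\|\mu_k\|_H$ only yields the lossy $\lambda^{-1}$, so one must pull the scalar $\sup_x|\langle\mu_k,x\rangle_H|$ out of the $\mu_k$-term \emph{before} invoking $\E\|\mathcal{T}_{\rho,\lambda}^{-1/2}x\|_H^2=\trace(\mathcal{T}_\rho\mathcal{T}_{\rho,\lambda}^{-1})$ — and (ii) a bound valid for \emph{all} $k\in\mathbb{N}$ at once, even though $\mu_k$ genuinely depends on $k$; handling the latter requires exploiting that $\{\mathcal{S}_\rho\mu_k:k\ge1\}$ lies in a fixed bounded subset of $L^2\cap L^\infty$ and reducing the randomness to finitely many $k$-independent concentration events, which is the technical content of \cite[Lemma 18]{lin2017optimal} on which the cited statement rests.
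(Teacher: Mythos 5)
The paper does not actually prove this lemma: it is imported verbatim from \cite[Lemma 18]{lin2017optimal}, and your reconstruction follows essentially the same route as that cited proof --- a Pinelis/Bernstein inequality for Hilbert-space-valued sums, with Assumption \ref{Assumption:Moments} supplying the moment growth for the noise part, the trace identity $\E\|\mathcal{T}_{\rho,\lambda}^{-1/2}x\|_H^2=\trace(\mathcal{T}_\rho\mathcal{T}_{\rho,\lambda}^{-1})\le c_\gamma\lambda^{-\gamma}$ supplying the variance proxy, and the source condition with $r\ge 1/2$ giving the uniform bound $\|\mu_k\|_H\le R\kappa^{2r-1}$ (hence $\sup_x|\langle\mu_k,x\rangle_H|\le R\kappa^{2r}$) that lets the estimate hold for all $k$ on a single $k$-independent event. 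One small correction: your claim that controlling $(\mathcal{T}_\rho-\mathcal{T}_{\mathbf{x}})\mu_k$ through a norm of the deviation operator necessarily degrades the rate to $\lambda^{-1}$ is not accurate --- the Hilbert--Schmidt concentration bound \eqref{equ:lem:DistributedError:2} of Lemma \ref{lem:DistributedError:Bounds} already carries the capacity factor $\lambda^{-\gamma/2}$ (its variance proxy is $\kappa^2\trace(\mathcal{T}_\rho\mathcal{T}_{\rho,\lambda}^{-1})$), and the paper explicitly notes that this is the bound used in \cite{lin2017optimal} to prove the $N_k$ estimate; only the cruder split $\|\mathcal{T}_{\rho,\lambda}^{-1/2}\|\,\|\mathcal{T}_\rho-\mathcal{T}_{\mathbf{x}}\|$ would be lossy.
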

The next lemma from \cite[Lemma 19 Remark 1]{lin2018optimal} controls $\|(\mathcal{T}_{\mathbf{x}} + \lambda I)^{-1/2} \mathcal{T}^{1/2}_{\rho} \|^2$. 
\begin{lemma}{\citep[Lemma 19, Remark 1]{lin2018optimal}}
\label{Lem:Regularised:Operator:Bound}
Let $\delta  \in (0,1)$ and $\lambda  = (nm)^{-p}$ for some $p \geq 0$. With probability at least $1-\delta$ the following holds 
\begin{align*}
& \| \mathcal{T}_{\rho}^{1/2}(\mathcal{T}_{\mathbf{x}} + \lambda )^{-1/2}\|^2
 \leq 
\| (\mathcal{T}_{\rho} + \lambda I )^{1/2}(\mathcal{T}_{\mathbf{x}} + \lambda )^{-1/2}\|^2\\
& \leq 
24 \kappa^2 
\bigg( 
\log \frac{4 \kappa^2 (c_{\gamma} + 1)}{\delta \|\mathcal{T}_{\rho}\|}
+ 
p \gamma \min \bigg( \frac{1}{e(1-p)_{+}}, \log nm \bigg) \bigg)
(1 \vee (nm)^{ p -1} ).
\end{align*}
\end{lemma}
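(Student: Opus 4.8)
The first inequality is immediate from operator monotonicity of the norm: since $\mathcal{T}_{\rho}\preceq\mathcal{T}_{\rho}+\lambda I$, for any bounded operator $A$ one has $\|\mathcal{T}_{\rho}^{1/2}A\|^{2}=\|A^{\star}\mathcal{T}_{\rho}A\|\le\|A^{\star}(\mathcal{T}_{\rho}+\lambda I)A\|=\|(\mathcal{T}_{\rho}+\lambda I)^{1/2}A\|^{2}$; apply this with $A=(\mathcal{T}_{\mathbf{x}}+\lambda)^{-1/2}$. It remains to bound the middle quantity, which I abbreviate $Q:=\|(\mathcal{T}_{\rho}+\lambda I)^{1/2}(\mathcal{T}_{\mathbf{x}}+\lambda)^{-1/2}\|^{2}$, and the plan is the standard perturbation-plus-operator-Bernstein route.

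First I would rewrite $Q=\|(\mathcal{T}_{\rho}+\lambda I)^{1/2}(\mathcal{T}_{\mathbf{x}}+\lambda)^{-1}(\mathcal{T}_{\rho}+\lambda I)^{1/2}\|$ and substitute $\mathcal{T}_{\mathbf{x}}+\lambda I=(\mathcal{T}_{\rho}+\lambda I)-(\mathcal{T}_{\rho}-\mathcal{T}_{\mathbf{x}})$, obtaining $Q=\|(I-B)^{-1}\|$ with the sandwiched deviation $B:=(\mathcal{T}_{\rho}+\lambda I)^{-1/2}(\mathcal{T}_{\rho}-\mathcal{T}_{\mathbf{x}})(\mathcal{T}_{\rho}+\lambda I)^{-1/2}$. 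Since $\mathcal{T}_{\mathbf{x}}\succeq0$ one has $B\preceq(\mathcal{T}_{\rho}+\lambda I)^{-1/2}\mathcal{T}_{\rho}(\mathcal{T}_{\rho}+\lambda I)^{-1/2}\prec I$, so $(I-B)^{-1}$ always exists, with the deterministic estimate $Q\le(\|\mathcal{T}_{\rho}\|+\lambda)/\lambda$ and the sharper bound $Q\le(1-\|B\|)^{-1}\le2$ whenever $\|B\|\le1/2$. So the task reduces to controlling $\|B\|$.

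For that I would apply a standard intrinsic-dimension operator Bernstein inequality to the $nm$ i.i.d.\ mean-zero summands $\zeta_{i,v}=(\mathcal{T}_{\rho}+\lambda I)^{-1/2}\big(\mathcal{T}_{\rho}-\langle\,\cdot\,,x_{i,v}\rangle_{H}x_{i,v}\big)(\mathcal{T}_{\rho}+\lambda I)^{-1/2}$. Under Assumption~\ref{Assumption:BoundedProduct} one gets $\|\zeta_{i,v}\|\lesssim\kappa^{2}/\lambda$ and $\E[\zeta_{i,v}^{2}]\preceq(\kappa^{2}/\lambda)\,(\mathcal{T}_{\rho}+\lambda I)^{-1/2}\mathcal{T}_{\rho}(\mathcal{T}_{\rho}+\lambda I)^{-1/2}$, whose operator norm is $\le\kappa^{2}/\lambda$ and whose trace is $\le(\kappa^{2}/\lambda)\,\trace(\mathcal{T}_{\rho}(\mathcal{T}_{\rho}+\lambda I)^{-1})\le\kappa^{2}c_{\gamma}\lambda^{-(1+\gamma)}$ by Assumption~\ref{Assumption:Capacity}. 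Bernstein then yields, with probability at least $1-\delta$,
\[
\|B\|\;\le\;\frac{2\kappa^{2}\beta}{3\,nm\,\lambda}+\sqrt{\frac{2\kappa^{2}\beta}{nm\,\lambda}},\qquad \beta\asymp\log\frac{\kappa^{2}(c_{\gamma}+1)}{\delta\,\|\mathcal{T}_{\rho}\|}+\gamma\log\frac{1}{\lambda},
\]
where the $\log(1/\lambda)^{\gamma}$ inside $\beta$ is the effective-dimension (intrinsic-dimension) contribution coming from the capacity bound.

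Finally I would substitute $\lambda=(nm)^{-p}$, so that $1/(nm\lambda)=(nm)^{p-1}$ and $\gamma\log(1/\lambda)=p\gamma\log(nm)$, and split on whether the Bernstein bound on $\|B\|$ is $\le1/2$: if so, $Q\le2$; otherwise $nm\lambda$ is not large, and one reverts to the deterministic bound $Q\le(\|\mathcal{T}_{\rho}\|+\lambda)/\lambda$, matched against the failure condition of the concentration event, to recover the $(nm)^{p-1}$ scaling (rather than $(nm)^{p}$) and the factor $1\vee(nm)^{p-1}$. The $p\gamma\,\min\{\tfrac{1}{e(1-p)_{+}},\log(nm)\}$ term is obtained by bounding the resulting $p\gamma\log(nm)\cdot(nm)^{p-1}$ contribution two ways: by $\sup_{x>0}xe^{-x}=1/e$ applied with $x=(1-p)\log(nm)$ for the first branch, and by $(nm)^{p-1}\le1$ (valid for $p\le1$) for the second. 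Collecting all logarithmic factors gives the stated bound with the precise constant $24\kappa^{2}$.

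\textbf{Main obstacle.} The routine parts are the operator-monotonicity step, the Neumann/perturbation reduction, and the Bernstein estimate on $\|B\|$. The delicate point is the \emph{uniform-in-$p$} bookkeeping: producing a single bound that is constant (up to logarithms) for every $p\le1$ and degrades only like $(nm)^{p-1}$ for $p>1$ — in particular, handling the regime where $nm\lambda$ is small and concentration no longer forces $\|B\|\le1/2$, and tracking exactly how the effective-dimension logarithm $\gamma\log(1/\lambda)$ combines with the $1\vee(nm)^{p-1}$ factor to yield the $\min\{\tfrac{1}{e(1-p)_{+}},\log(nm)\}$ refinement.
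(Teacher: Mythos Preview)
The paper does not supply its own proof of this lemma: it is quoted verbatim from \cite[Lemma 19, Remark 1]{lin2018optimal} and used as a black box in the sample-variance analysis. So there is no ``paper's proof'' to compare against here.

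That said, your sketch is precisely the standard argument that underlies the cited result: the reduction $Q=\|(I-B)^{-1}\|$ with $B=(\mathcal{T}_{\rho}+\lambda I)^{-1/2}(\mathcal{T}_{\rho}-\mathcal{T}_{\mathbf{x}})(\mathcal{T}_{\rho}+\lambda I)^{-1/2}$, the dichotomy between the concentration regime ($\|B\|\le 1/2$, giving $Q\le 2$) and the deterministic fallback $Q\le(\|\mathcal{T}_{\rho}\|+\lambda)/\lambda$, and the intrinsic-dimension operator Bernstein bound on $\|B\|$ with effective dimension controlled via Assumption~\ref{Assumption:Capacity}. Your reading of the $\min\{\tfrac{1}{e(1-p)_{+}},\log nm\}$ factor as arising from bounding $\log(nm)\cdot(nm)^{p-1}$ via $\sup_{x>0}xe^{-x}=1/e$ with $x=(1-p)\log(nm)$ is also the right mechanism. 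The only point to be careful with is that the final displayed bound is \emph{not} the raw Bernstein output but a post-processed upper bound chosen to be monotone in $p$ and to merge the two regimes into a single expression; getting the exact constant $24\kappa^{2}$ and the precise form requires tracking that repackaging step, which is bookkeeping rather than a new idea.
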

Bringing together the three previous results yields the following high-probability bound for the Sample Variance term.
\begin{proposition}
\label{prop:SampleError:HighProb}
Fix $\delta \in (0,1)$ and $p \in (0,1)$. 
Let Assumptions \ref{Assumption:Moments}, \ref{Assumption:Source} and \ref{Assumption:Capacity} hold with $r \geq 1/2$ and $\eta_t = \eta t^{-\theta} $ with $\eta \kappa^2 \leq 1$, $\theta \in [0,1)$. 
The following holds with probability at least $1-\delta$ for any $t \in \mathbb{N}$
\begin{align*}
& \|\mathcal{S}_{\rho}(\xi_{t+1} - \mu_{t+1})\|_{\rho}\\
& \leq 
\widetilde{d}_1
\min \Big( \frac{1}{e(1-p)_{+}}, \log nm \Big)   
\frac{ \log(t)  }{ (nm)^{(1-p\gamma)/2}} 
(1 \vee (nm)^{-p}\eta t^{1-\theta}  \vee \eta t^{-\theta} ) \log^2 \frac{ \widetilde{d}_2}{\delta},
\end{align*}
with $\widetilde{d}_1 = 
768 \frac{ \kappa^2 \|\mathcal{T}_{\rho}\|^{1/2}(\|\mathcal{T}_{\rho}\| + 1)^{1/2} 
(R \kappa^{2r} + \sqrt{M})(\kappa + \sqrt{2 \sqrt{\nu}c_{\gamma}})}{1-\theta}$ 
and $\widetilde{d}_2 = 8\big(1 \vee \kappa^2 \frac{(c_{\gamma} + 1)}{\|\mathcal{T}_{\rho}\|}\big)$.
\end{proposition}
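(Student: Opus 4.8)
The plan is to chain the three preceding lemmas and close with a union bound. I would first apply Lemma~\ref{Lem:SampleVariance:Bound} with the pseudo-regularisation level $\lambda = (nm)^{-p}$, which dominates $\|\mathcal{S}_{\rho}(\xi_{t+1}-\mu_{t+1})\|_\rho$ by the product of a \emph{noise factor}, namely $\sum_{k=1}^{t-1}\frac{\eta_k\|\mathcal{T}^{-1/2}_{\rho,\lambda}N_k\|_H}{2\sum_{i=k+1}^t\eta_i} + \lambda\sum_{k=1}^{t-1}\eta_k\|\mathcal{T}^{-1/2}_{\rho,\lambda}N_k\|_H + \|\mathcal{T}_\rho\|^{1/2}(\|\mathcal{T}_\rho\|+\lambda)^{1/2}\eta_t\|\mathcal{T}^{-1/2}_{\rho,\lambda}N_t\|_H$, and an \emph{operator factor} $\|\mathcal{T}_{\mathbf{x},\lambda}^{-1/2}\mathcal{T}^{1/2}_\rho\|\,\|\mathcal{T}_{\mathbf{x},\lambda}^{-1/2}\mathcal{T}^{1/2}_{\rho,\lambda}\|$. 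I would then control each factor on its own high-probability event of probability at least $1-\delta/2$ and intersect the two events at the end.

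For the operator factor, Lemma~\ref{Lem:Regularised:Operator:Bound} with $\lambda=(nm)^{-p}$ bounds both $\|\mathcal{T}_\rho^{1/2}(\mathcal{T}_\mathbf{x}+\lambda)^{-1/2}\|^2$ and $\|(\mathcal{T}_\rho+\lambda I)^{1/2}(\mathcal{T}_\mathbf{x}+\lambda)^{-1/2}\|^2$ by $24\kappa^2\big(\log\frac{4\kappa^2(c_\gamma+1)}{\delta\|\mathcal{T}_\rho\|} + p\gamma\min(\frac{1}{e(1-p)_+},\log nm)\big)(1\vee(nm)^{p-1})$; since $p<1$ we have $1\vee(nm)^{p-1}=1$, so taking square roots and multiplying shows the operator factor is at most $24\kappa^2\big(\log\frac{4\kappa^2(c_\gamma+1)}{\delta\|\mathcal{T}_\rho\|} + \min(\frac{1}{e(1-p)_+},\log nm)\big)$, which I would collapse into $O(\kappa^2\min(\frac{1}{e(1-p)_+},\log nm))$ times a single logarithmic term; the value of $\widetilde d_2$ is dictated by the argument $\frac{4\kappa^2(c_\gamma+1)}{\|\mathcal{T}_\rho\|}$, the union-bound factor $2$, and the $1\vee(\,\cdot\,)$ needed to absorb the noise event's $\log\frac8\delta$.

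For the noise factor the decisive step is that Lemma~\ref{Lem:SampleVariance:NBound} is \emph{uniform in $k$}: on an event of probability at least $1-\delta/2$ one has, for all $k$, $\|\mathcal{T}^{-1/2}_{\rho,\lambda}N_k\|_H \le 4(R\kappa^{2r}+\sqrt M)\big(\frac{\kappa}{nm\sqrt\lambda}+\frac{\sqrt{2\sqrt\nu c_\gamma}}{\sqrt{nm\lambda^\gamma}}\big)\log\frac{8}{\delta}$, so this quantity can be pulled out of all three summations. With $\lambda=(nm)^{-p}$ the second term dominates the first since $1-p/2\ge(1-p\gamma)/2$ for $\gamma\le1$, giving a uniform bound of order $(R\kappa^{2r}+\sqrt M)(\kappa+\sqrt{2\sqrt\nu c_\gamma})(nm)^{-(1-p\gamma)/2}\log\frac8\delta$. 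It then remains to estimate the three step-size functionals with $\eta_s=\eta s^{-\theta}$, $\theta\in[0,1)$: comparing sums with integrals gives $\sum_{i=k+1}^t\eta_i\ge\frac{\eta}{1-\theta}((t+1)^{1-\theta}-(k+1)^{1-\theta})$, whence $\sum_{k=1}^{t-1}\frac{\eta_k}{2\sum_{i=k+1}^t\eta_i}=O(\frac{\log t}{1-\theta})$; also $\lambda\sum_{k=1}^{t-1}\eta_k\le\frac{\lambda\eta t^{1-\theta}}{1-\theta}=\frac{(nm)^{-p}\eta t^{1-\theta}}{1-\theta}$, $\eta_t=\eta t^{-\theta}$, and $\|\mathcal{T}_\rho\|^{1/2}(\|\mathcal{T}_\rho\|+\lambda)^{1/2}\le\|\mathcal{T}_\rho\|^{1/2}(\|\mathcal{T}_\rho\|+1)^{1/2}$ since $\lambda\le1$. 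Adding these three contributions produces precisely the $\log(t)\,(1\vee(nm)^{-p}\eta t^{1-\theta}\vee\eta t^{-\theta})$ factor and the $\frac{1}{1-\theta}$ inside $\widetilde d_1$.

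Multiplying the noise factor and the operator factor, intersecting the two events (total failure probability $\le\delta$), and collecting all absolute constants into $768$ and all logarithmic arguments into $\log^2\frac{\widetilde d_2}{\delta}$ gives the stated bound. I expect the one genuinely delicate point to be the estimate $\sum_{k=1}^{t-1}\frac{\eta_k}{2\sum_{i=k+1}^t\eta_i}=O(\frac{\log t}{1-\theta})$: one has to split, e.g., into $k\le t/2$ and $k>t/2$ and treat the near-diagonal terms $k\approx t$ where $\sum_{i=k+1}^t\eta_i$ is small, obtaining a bound uniform over $\eta>0$ and $\theta\in[0,1)$. This is exactly the estimate underlying \cite[Proposition 3]{lin2017optimal}, which I would invoke or reproduce; the rest of the argument is bookkeeping of constants and logarithms.
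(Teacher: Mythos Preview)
Your proposal is correct and follows essentially the same route as the paper: set $\lambda=(nm)^{-p}$, invoke Lemma~\ref{Lem:SampleVariance:Bound} for the decomposition, apply Lemma~\ref{Lem:SampleVariance:NBound} with $\delta/2$ uniformly in $k$ to pull the noise level $(nm)^{-(1-p\gamma)/2}$ out of the three sums, apply Lemma~\ref{Lem:Regularised:Operator:Bound} with $\delta/2$ for the operator factor, and bound the step-size series as in \cite{lin2017optimal}. The only minor slip is that the paper obtains $\sum_{k=1}^{t-1}\frac{\eta_k}{2\sum_{i=k+1}^{t}\eta_i}\le 2\log t$ without the extra $\tfrac{1}{1-\theta}$ (that factor enters only through the $\lambda\sum_k\eta_k$ term), but this does not affect the argument or the final constants.
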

\begin{proof}
Fix $\delta \in (0,1)$ and set $\lambda = (nm)^{-p}$ with $p \in (0,1)$. 
Lemma \ref{Lem:SampleVariance:NBound} implies that with probability at least $1-\frac{ \delta}{2}$ the following holds for any $k \in \mathbb{N}$ 
\begin{align*}
\|(\mathcal{T}_{\rho} + \lambda I )^{-1/2} N_{k} \|_{H} 
\leq 
4(R \kappa^{2r} + \sqrt{M})\bigg(\kappa + \sqrt{2 \sqrt{\nu}c_{\gamma}}\bigg)
\frac{\log \frac{8}{\delta}}{(nm)^{(1-p\gamma)/2}}.
\end{align*}
Similarly,  Lemma \ref{Lem:Regularised:Operator:Bound} implies that the following holds with probability at least $1-\frac{\delta}{2}$ 
\begin{align*}
& \| \mathcal{T}_{\rho}^{1/2}(\mathcal{T}_{\mathbf{x}} + \lambda I )^{-1/2}\|^2
\leq
\| \mathcal{T}_{\rho,\lambda}^{1/2}(\mathcal{T}_{\mathbf{x}} + \lambda I )^{-1/2}\|^2 \\
& \leq 
48 \kappa^2  \min \bigg( \frac{1}{e(1-p)_{+}}, \log nm \bigg) 
\log \frac{8 \kappa^2 (c_{\gamma} + 1)}{\delta \|\mathcal{T}_{\rho}\|}.
\end{align*}
Following \cite{lin2017optimal}, the series can be bounded as follows
\begin{align*}
&  \sum_{k=1}^{t-1} \frac{\eta_{k}   }{2\sum_{i=k+1}^{t} \eta_i } 
+ \lambda \sum_{k=1}^{t-1} \eta_{k}  
+ \|\mathcal{T}_{\rho}\|^{1/2}(\|\mathcal{T}_{\rho}\| + \lambda)^{1/2} \eta_t\\
 & \leq 
2 \log(t) + \frac{ \lambda \eta t^{1-\theta}}{1-\theta} + \|\mathcal{T}_{\rho}\|^{1/2}(\|\mathcal{T}_{\rho}\| + 1)^{1/2} \eta t^{-\theta}\\
& \leq 
\frac{ 4 \|\mathcal{T}_{\rho}\|^{1/2}(\|\mathcal{T}_{\rho}\| + 1)^{1/2}  \log(t) }{1-\theta}
(1 \vee (\lambda \eta t^{1-\theta)}) \vee (\eta t^{-\theta}) ),
\end{align*}
where we used $\lambda = (nm)^{-p} \leq 1$ to get $(\|\mathcal{T}_{\rho}\| + \lambda)^{1/2}\leq(\|\mathcal{T}_{\rho}\| + 1)^{1/2}$. 
Plugging everything into Lemma \ref{Lem:SampleVariance:Bound} and using a union bound we obtain that the result holds with probability at least \\ $1-\frac{\delta}{2} - \frac{\delta}{2}  = 1-\delta$. 
\end{proof}
Proposition \ref{prop:SampleError:HighProb} gives a bound that holds with high probability. We make use of the following lemma to derive a bound in expectation. 
\begin{lemma}{\citep[Appendix Lemma C.1]{blanchard2018optimal}}
\label{Lem:TailBoundToExpectation}
Let $F:(0,1]\rightarrow \mathbb{R}_{+}$ be a monotone, non-increasing, continuous function and $V$ a non-negative real-valued random variable such that 
$$
	\P[ V > F(t)] \leq t, \quad \forall t \in (0,1].
$$
Then we have
$
	\E[V] \leq \int_{0}^{1} F(t) dt.
$
\end{lemma}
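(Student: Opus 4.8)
The plan is to combine the layer-cake (Fubini--Tonelli) representation of the expectation of a non-negative random variable with a pointwise comparison between the tail of $V$ and the super-level sets of $F$. Concretely, since $V \ge 0$ one may write $\E[V] = \int_0^\infty \P[V > s]\,ds$, and since $F \ge 0$ is measurable (monotonicity alone suffices, continuity being more than enough), Tonelli gives $\int_0^1 F(t)\,dt = \int_0^1 \int_0^\infty \1\{s < F(t)\}\,ds\,dt = \int_0^\infty \big|\{t \in (0,1] : F(t) > s\}\big|\,ds$, where $|\cdot|$ denotes Lebesgue measure. It therefore suffices to prove, for every $s \ge 0$, the pointwise inequality $\P[V > s] \le \big|\{t \in (0,1] : F(t) > s\}\big|$, and then integrate over $s \in [0,\infty)$.

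To establish the pointwise bound, I would fix $s \ge 0$ and set $t_s := \sup\{t \in (0,1] : F(t) > s\}$, with the convention $\sup \emptyset = 0$, so $t_s \in [0,1]$. Because $F$ is non-increasing, the set $\{t \in (0,1] : F(t) > s\}$ is an initial segment of $(0,1]$ — if $F(t_0) > s$ and $t < t_0$ then $F(t) \ge F(t_0) > s$ — so it equals $(0,t_s)$ or $(0,t_s]$, and in either case its Lebesgue measure is exactly $t_s$. It then remains to show $\P[V > s] \le t_s$. If $t_s = 1$ this is immediate since $\P[V > s] \le 1$. If $t_s < 1$, then for every $t \in (t_s,1]$ we have $F(t) \le s$, hence $\{V > s\} \subseteq \{V > F(t)\}$, and the hypothesis yields $\P[V > s] \le \P[V > F(t)] \le t$; letting $t \downarrow t_s$ gives $\P[V > s] \le t_s$.

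Putting the pieces together, $\E[V] = \int_0^\infty \P[V > s]\,ds \le \int_0^\infty t_s\,ds = \int_0^\infty \big|\{t \in (0,1] : F(t) > s\}\big|\,ds = \int_0^1 F(t)\,dt$, which is the claim; the same chain of inequalities also handles the case where $\int_0^1 F < \infty$ while $\E[V]$ is a priori possibly infinite, since it forces finiteness. There is no serious obstacle here: the only points needing care are measurability of $F$ and the boundary behaviour at $t_s$, the latter handled uniformly by working with the open/half-open initial segment rather than attempting to invert $F$ exactly. An equivalent, more transparent packaging of the same argument is to introduce $U$ uniform on $[0,1]$ and note that the hypothesis implies $V$ is stochastically dominated by $F(U)$ (since $\P[F(U) > s] = t_s \ge \P[V > s]$ for all $s$), whence $\E[V] \le \E[F(U)] = \int_0^1 F(t)\,dt$.
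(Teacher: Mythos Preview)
Your proof is correct. The paper itself does not prove this lemma; it simply cites it from \cite{blanchard2018optimal} and uses it as a black box. Your layer-cake argument (and the equivalent stochastic-domination repackaging via a uniform $U$) is the standard way to establish this result, and all the delicate points --- measurability of $F$, the structure of the super-level set as an initial segment, and the limiting step $t \downarrow t_s$ --- are handled cleanly.
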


The following theorem presents the final bound for the expected value of the Sample Variance term.  
\begin{theorem}
\label{thm:SampleVarianceBound}
Let Assumptions \ref{Assumption:Moments}, \ref{Assumption:Source}, \ref{Assumption:Capacity} hold with  $r \geq 1/2$, $p \in (0,1)$ and $\eta_{t} = \eta t^{-\theta}$ for all $t \in \mathbb{N}$  with $\eta \in (0,\kappa^{-2}]$,  $\theta \in [0,1)$. Then for following holds for all $t \in \mathbb{N}$:
\begin{align*}
& \E[ \|\mathcal{S}_{\rho}(\xi_{t} - \mu_{t})\|^2_{\rho}]\\
& \leq 
\widetilde{d}_3 
\min \bigg( \frac{1}{e(1-p)_{+}}, \log nm \bigg)^2   
\frac{ \log^2(t) }{ (nm)^{(1-p\gamma)}}
\bigg( 
1 \vee ((nm)^{-p} \eta t^{1-\theta})^{2} \vee t^{-2} (\eta t^{1-\theta})^{2} \bigg),
\end{align*}
with $\widetilde{d}_{3} = 64 \widetilde{d}_{1}^2 \log^4 \widetilde{d}_{2}$ and with $\widetilde{d}_1$, $\widetilde{d}_{2}$ defined as in Proposition \ref{prop:SampleError:HighProb}.
\end{theorem}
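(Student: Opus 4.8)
The plan is to deduce the bound in expectation directly from the high-probability bound of Proposition~\ref{prop:SampleError:HighProb} by the standard tail-integration argument of Lemma~\ref{Lem:TailBoundToExpectation}.

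First I would dispose of the trivial case $t=1$: there $\xi_1=\mu_1=0$, so the left-hand side is $0$ and the inequality is immediate. For $t\ge 2$, write $V:=\|\mathcal{S}_{\rho}(\xi_{t}-\mu_{t})\|_{\rho}^2\ge 0$ and set $B:=\widetilde{d}_1\min\big(\tfrac{1}{e(1-p)_+},\log nm\big)\tfrac{\log t}{(nm)^{(1-p\gamma)/2}}\big(1\vee (nm)^{-p}\eta t^{1-\theta}\vee \eta t^{-\theta}\big)$. Applying Proposition~\ref{prop:SampleError:HighProb} at index $t-1$ (the passage from functions of $t-1$ to functions of $t$, e.g.\ $\log(t-1)\le\log t$ and $(t-1)^{1-\theta}\le t^{1-\theta}$, is harmless and absorbed into the constant) gives that for every $\delta\in(0,1)$, with probability at least $1-\delta$, $\sqrt{V}\le B\log^2(\widetilde{d}_2/\delta)$, hence $V\le B^2\log^4(\widetilde{d}_2/\delta)=:F(\delta)$. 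Since $\widetilde{d}_2=8\big(1\vee\kappa^2(c_\gamma+1)/\|\mathcal{T}_\rho\|\big)\ge 8>1$, for $\delta\in(0,1]$ one has $\widetilde{d}_2/\delta\ge\widetilde{d}_2>1$, so $\log(\widetilde{d}_2/\delta)>0$ and $F$ is continuous, non-negative and monotone non-increasing on $(0,1]$; moreover $\P[V>F(\delta)]\le\delta$ for all $\delta\in(0,1]$. Lemma~\ref{Lem:TailBoundToExpectation} then yields $\E[V]\le\int_0^1 F(\delta)\,d\delta=B^2\int_0^1\log^4(\widetilde{d}_2/\delta)\,d\delta$.

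The remaining step is to bound the numerical integral. Writing $\log(\widetilde{d}_2/\delta)=\log\widetilde{d}_2+\log(1/\delta)$ and using $(a+b)^4\le 8(a^4+b^4)$ for $a,b\ge 0$ together with $\int_0^1(\log(1/\delta))^4\,d\delta=4!=24$, I get $\int_0^1\log^4(\widetilde{d}_2/\delta)\,d\delta\le 8\log^4\widetilde{d}_2+192$, which is at most $64\log^4\widetilde{d}_2$ because $\widetilde{d}_2\ge 8$ makes $\log^4\widetilde{d}_2$ large enough that $192\le 56\log^4\widetilde{d}_2$. Substituting this back, expanding $B^2$, and using $(1\vee a\vee b)^2=1\vee a^2\vee b^2$ for $a,b\ge0$ and $\eta t^{-\theta}=t^{-1}\eta t^{1-\theta}$ (so $(\eta t^{-\theta})^2=t^{-2}(\eta t^{1-\theta})^2$), I arrive at $\E[V]\le 64\widetilde{d}_1^2\log^4\widetilde{d}_2\cdot\min\big(\tfrac{1}{e(1-p)_+},\log nm\big)^2\tfrac{\log^2 t}{(nm)^{1-p\gamma}}\big(1\vee((nm)^{-p}\eta t^{1-\theta})^2\vee t^{-2}(\eta t^{1-\theta})^2\big)$, which is precisely the claimed inequality with $\widetilde{d}_3=64\widetilde{d}_1^2\log^4\widetilde{d}_2$.

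There is no real obstacle here: once Proposition~\ref{prop:SampleError:HighProb} and Lemma~\ref{Lem:TailBoundToExpectation} are in hand the argument is purely mechanical. The only points that need a moment's care are checking the hypotheses of Lemma~\ref{Lem:TailBoundToExpectation} (continuity and monotonicity of $F$, which rely on $\widetilde{d}_2>1$) and the elementary estimate of the $\log^4$ integral so that the constant collapses exactly to $\widetilde{d}_3$; the cosmetic index shift between $\xi_{t+1}$ in the proposition and $\xi_t$ in the theorem is dealt with by the trivial case $t=1$ and crude monotonicity.
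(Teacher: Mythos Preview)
Your proposal is correct and follows essentially the same route as the paper: apply Proposition~\ref{prop:SampleError:HighProb}, square, and integrate the tail via Lemma~\ref{Lem:TailBoundToExpectation}, using $\int_0^1\log^4(\widetilde d_2/\delta)\,d\delta\le 64\log^4\widetilde d_2$. You are in fact slightly more careful than the paper, which does not spell out the index shift $\xi_{t+1}\to\xi_t$, the monotonicity/continuity check for $F$, or the computation of the $\log^4$ integral.
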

\begin{proof}
Consider the term $\|\mathcal{S}_{\rho}(\xi_{t} - \mu_{t})\|^2_{\rho}$. Utilising the high-probability bound in Proposition \ref{prop:SampleError:HighProb} as well as Lemma \ref{Lem:TailBoundToExpectation}, the expectation of the squared norm can be bounded as
\begin{align*}
& \E[ \|\mathcal{S}_{\rho}(\xi_{t} - \mu_{t})\|^2_{\rho}]\\
& \leq 
\widetilde{d}_1^2 
\min \bigg( \frac{1}{e(1-p)_{+}}, \log nm \bigg)^2   
\frac{ \log^2(t) }{ (nm)^{(1-p\gamma)}}
\bigg(
 1 \vee ((nm)^{-p} \eta t^{1-\theta})^{2} \vee t^{-2} (\eta t^{1-\theta})^{2}
 \bigg) \\
& \quad\  \times  
\int_{0}^{1} \log^4 \frac{\widetilde{d}_{2}}{\delta} d \delta.
\end{align*}
The result follows by using the bound
$\int_{0}^{1}
\log^4 \frac{\widetilde{d}_{2}}{\delta} d \delta \leq 64 \log^4(\widetilde{d}_2) $.
\end{proof}

\subsection{Network  Error}
\label{Sec:Proof:Distributed}
In this section we develop the bound for the Network Error term.
The following lemma shows that the error can be decomposed into terms similar to $\{N_{k}\}_{k \in \mathbb{N}}$ defined in \eqref{equ:NTerm} for the Sample Variance. 
\begin{lemma}
\label{Lem:DistributedError:Decomposed}
For all $t \in \mathbb{N}$ we have 
\begin{align*}
& \|\mathcal{S}_{\rho}(\omega_{t+1,v} - \xi_{t+1,v}) \|_{\rho} = 
\bigg\| \sum_{k=1}^{t} \eta_{k} \sum_{w_{t:k} \in V^{t-k+1}} 
	\!\!\big( 
	P_{v w_{t:k}} \!-\! \frac{1}{n^{t-k+1}}  \big)
	\mathcal{T}^{1/2}_{\rho}
	\Pi_{t:k+1}(\mathcal{T}_{\mathbf{x}_{w_{t:k+1}}}) N_{k,w_{k}}
	\bigg \|_{H},
\end{align*}
where 
\begin{align}
\label{Equ:LocalErrorTerm}
N_{k,v} := 
(\mathcal{T}_{\rho}\mu_k - \mathcal{S}^{\star}_{\rho} f_{\rho}) - (\mathcal{T}_{\mathbf{x}_{v} } \mu_{k} - \mathcal{S}^{\star}_{\mathbf{x}_{v}} \mathbf{y}_{v}), 
\quad \forall k \in \mathbb{N}, \,\, v \in V.
\end{align}
\end{lemma}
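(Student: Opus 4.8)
The plan is to unroll both recursions, $\{\omega_{t,v}\}$ and $\{\xi_{t,v}\} = \{\xi_t\}$, express each iterate as a sum over weighted paths in the graph, and then read off the difference. First I would rewrite the Distributed Gradient Descent update in operator form using the notation already introduced, namely
\begin{align*}
\omega_{t+1,v}
= \sum_{w\in V} P_{vw}\Big(\omega_{t,w} - \eta_t\big(\mathcal{T}_{\mathbf{x}_w}\omega_{t,w} - \mathcal{S}^\star_{\mathbf{x}_w}\mathbf{y}_w\big)\Big)
= \sum_{w\in V} P_{vw}\Big((I-\eta_t\mathcal{T}_{\mathbf{x}_w})\omega_{t,w} + \eta_t\mathcal{S}^\star_{\mathbf{x}_w}\mathbf{y}_w\Big),
\end{align*}
and similarly for the complete-graph process $\xi_{t+1,v} = \sum_{w\in V}\frac1n\big((I-\eta_t\mathcal{T}_{\mathbf{x}_w})\xi_{t,w} + \eta_t\mathcal{S}^\star_{\mathbf{x}_w}\mathbf{y}_w\big)$. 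A straightforward induction on $t$ then gives the closed form
\begin{align*}
\omega_{t+1,v}
= \sum_{k=1}^{t}\eta_k\sum_{w_{t:k}\in V^{t-k+1}} P_{vw_{t:k}}\,\Pi_{t:k+1}(\mathcal{T}_{\mathbf{x}_{w_{t:k+1}}})\,\mathcal{S}^\star_{\mathbf{x}_{w_k}}\mathbf{y}_{w_k},
\end{align*}
using $\omega_{1,v}=0$, the definition of $\Pi_{t:k+1}$, and the fact that $P_{vw_{t:k}} = P_{vw_t}P_{w_tw_{t-1}}\cdots P_{w_{k+1}w_k}$ accumulates one factor of $P$ per communication step; the identical computation with $P$ replaced by $\frac1n\mathbf{1}\mathbf{1}^\top$ yields the same expression for $\xi_{t+1,v}$ with $P_{vw_{t:k}}$ replaced by $n^{-(t-k+1)}$. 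Subtracting, the data-dependent factor $\mathcal{S}^\star_{\mathbf{x}_{w_k}}\mathbf{y}_{w_k}$ is common to both, so
\begin{align*}
\omega_{t+1,v}-\xi_{t+1,v}
= \sum_{k=1}^{t}\eta_k\sum_{w_{t:k}\in V^{t-k+1}}\Big(P_{vw_{t:k}}-\tfrac{1}{n^{t-k+1}}\Big)\Pi_{t:k+1}(\mathcal{T}_{\mathbf{x}_{w_{t:k+1}}})\,\mathcal{S}^\star_{\mathbf{x}_{w_k}}\mathbf{y}_{w_k}.
\end{align*}

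Next I would bring in the auxiliary population process $\{\mu_t\}$ to replace the raw data term $\mathcal{S}^\star_{\mathbf{x}_{w_k}}\mathbf{y}_{w_k}$ by the centred quantity $N_{k,w_k}$. The key algebraic identity is that for any agent $v$ and any $k$,
\begin{align*}
\mathcal{S}^\star_{\mathbf{x}_v}\mathbf{y}_v - \mathcal{T}_{\mathbf{x}_v}\mu_k
= \big(\mathcal{S}^\star_\rho f_\rho - \mathcal{T}_\rho\mu_k\big) + N_{k,v},
\end{align*}
directly from the definition \eqref{Equ:LocalErrorTerm}. The subtle point is that the term $\mathcal{S}^\star_{\mathbf{x}_{w_k}}\mathbf{y}_{w_k}$ appearing above is not exactly of this form; to introduce the $\mu_k$ correction one should, as in the single-machine analysis of \cite{lin2017optimal}, compare $\omega_{t,v}$ and $\xi_t$ against the common shifted trajectory built from $\mu_k$. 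Concretely, because both $\omega$ and $\xi$ satisfy the same type of recursion and the coefficients $P_{vw_{t:k}}$ and $n^{-(t-k+1)}$ both sum to $1$ over $w_{t:k}$ for each fixed $k$, the ``deterministic'' part $\mathcal{S}^\star_\rho f_\rho - \mathcal{T}_\rho\mu_k$ — and likewise the telescoping contribution of $\mathcal{T}_{\mathbf{x}_{w_k}}\mu_k$ that would reconstruct $\mu$ — cancels in the difference $P_{vw_{t:k}}-n^{-(t-k+1)}$, leaving only $N_{k,w_k}$. Carrying out this substitution replaces $\mathcal{S}^\star_{\mathbf{x}_{w_k}}\mathbf{y}_{w_k}$ by $N_{k,w_k}$ in the displayed formula. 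Finally, applying $\mathcal{S}_\rho$ and using the isometry $\|\mathcal{S}_\rho\omega\|_\rho = \|\mathcal{T}_\rho^{1/2}\omega\|_H$ moves the $\mathcal{T}_\rho^{1/2}$ inside and produces exactly the claimed expression.

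The main obstacle is the bookkeeping in the second step: justifying that the $\mu_k$-dependent and deterministic terms genuinely cancel in the coefficient difference rather than merely being absorbed. This requires being careful that (i) both processes are initialised at zero, so there is no boundary term, and (ii) for each fixed inner index $k$ the weights $\sum_{w_{t:k}} P_{vw_{t:k}} = 1 = \sum_{w_{t:k}} n^{-(t-k+1)}$, which is what forces any summand independent of the path's final node $w_k$ to drop out of $P_{vw_{t:k}}-n^{-(t-k+1)}$. Once this is set up the induction itself is routine; the statement of the lemma is exactly the formula one obtains. I would present the induction step explicitly for $\omega_{t+1,v}$, note the verbatim analogue for $\xi_{t+1,v}$, subtract, insert $N_{k,w_k}$, and conclude with the isometry.
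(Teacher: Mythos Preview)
Your overall strategy is sound, but the justification you give for the second step---replacing $\mathcal{S}^\star_{\mathbf{x}_{w_k}}\mathbf{y}_{w_k}$ by $N_{k,w_k}$---does not work as written. You argue that because $\sum_{w_{t:k}} P_{vw_{t:k}} = 1 = \sum_{w_{t:k}} n^{-(t-k+1)}$, any summand independent of $w_k$ drops out of the coefficient difference. But the operator $\Pi_{t:k+1}(\mathcal{T}_{\mathbf{x}_{w_{t:k+1}}})$ depends on the \emph{entire path} $w_{t:k+1}$, not just on $w_k$. So for a path-independent vector $c_k$ (such as $\mathcal{S}^\star_\rho f_\rho - \mathcal{T}_\rho\mu_k$), the sum
\[
\sum_{w_{t:k}}\Big(P_{vw_{t:k}} - \tfrac{1}{n^{t-k+1}}\Big)\Pi_{t:k+1}(\mathcal{T}_{\mathbf{x}_{w_{t:k+1}}})\,c_k
\]
is \emph{not} zero for fixed $k$: summing over $w_k$ only collapses one factor and leaves $\sum_{w_{t:k+1}}(P_{vw_{t:k+1}} - n^{-(t-k)})\Pi_{t:k+1}(\cdots)c_k$. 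The replacement you want is in fact valid, but it requires a genuine telescoping argument \emph{across $k$} that combines the $\mathcal{T}_{\mathbf{x}_{w_k}}\mu_k$ term with the deterministic part and uses both $\mu_1=0$ and the boundary convention $\Pi_{t:t+1}=I$; this is considerably more work than your proposal indicates.

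The paper avoids this difficulty entirely by subtracting $\mu_{t+1}$ \emph{before} unrolling. Writing
\[
\omega_{t+1,v}-\mu_{t+1} = \sum_{w\in V} P_{vw}\Big((I-\eta_t\mathcal{T}_{\mathbf{x}_w})(\omega_{t,w}-\mu_t) + \eta_t N_{t,w}\Big),
\]
the quantity $N_{t,w}$ appears directly in the one-step recursion, so unrolling yields $\omega_{t+1,v}-\mu_{t+1}=\sum_{k=1}^t\eta_k\sum_{w_{t:k}}P_{vw_{t:k}}\Pi_{t:k+1}(\mathcal{T}_{\mathbf{x}_{w_{t:k+1}}})N_{k,w_k}$ with no substitution needed; the identical formula holds for $\xi_{t+1,v}-\mu_{t+1}$ with $P$ replaced by $\tfrac{1}{n}\mathbf{1}\mathbf{1}^\top$, and subtracting plus the isometry finishes. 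I recommend you adopt this route: it is shorter and sidesteps the bookkeeping you correctly flagged as the main obstacle.
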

\begin{proof}
For $t \geq 1$  the difference between the iterates $\omega_{t+1,v} - \mu_{t+1}$ can be written as follows 
\begin{align*}
\omega_{t+1,v}  - \mu_{t+1}
& = 
\sum_{w \in V} 
P_{vw} 
\Big( 
\omega_{t,w} - \mu_{t}
+ 
\eta_{t}
\big\{
(\mathcal{T}_{\rho} \mu_{t} - \mathcal{S}_{\rho}^{\star} f_{\rho}) 
- 
(\mathcal{T}_{\mathbf{x}_{w}} \omega_{t,w} - \mathcal{S}^{\star}_{\mathbf{x}_{w}} \mathbf{y}_{w} )
\big\}
\Big)\\
& = 
\sum_{w \in V} 
P_{vw} 
\Big( 
(I - \eta_{t} \mathcal{T}_{\mathbf{x}_{w}})(\omega_{t,w} - \mu_{t})
+ 
\eta_t
\underbrace{ 
\big\{
(\mathcal{T}_{\rho} \mu_{t} - \mathcal{S}_{\rho}^{\star} f_{\rho}) 
- 
(\mathcal{T}_{\mathbf{x}_{w}} \mu_{t} - \mathcal{S}^{\star}_{\mathbf{x}_{w}} \mathbf{y}_{w} )
\big\}
}_{N_{t,w}}
\Big)\\
&= 
\sum_{w \in V} 
P_{vw} 
\Big( 
(I - \eta_{t} \mathcal{T}_{\mathbf{x}_{w}})(\omega_{t,w} - \mu_{t})
+ 
\eta_t
N_{t,w}
\Big).
\end{align*}
Unravelling the iterates and using $\omega_{1} = \mu_1 =0$ yield
\begin{align*}
\omega_{t+1,v} \!-\! \mu_{t+1} 
& = 
\sum_{w_{t:1} \in V^{t}} 
P_{v w_{t:1}} \Pi_{t:1}(\mathcal{T}_{\mathbf{x}_{w_{t:1}}})(\omega_{1} \!-\! \mu_1) + 
 \sum_{k=1}^{t} \eta_{k}\!
\sum_{w_{t:k} \in V^{t}}\!\!
P_{v w_{t:k}} \Pi_{t:k+1}(\mathcal{T}_{\mathbf{x}_{w_{t:k+1}}}) N_{k,w_{k}}\\
& = 
 \sum_{k=1}^{t} \eta_{k}
\sum_{w_{t:k} \in V^{t-k+1}}
P_{v w_{t:k}} \Pi_{t:k+1}(\mathcal{T}_{\mathbf{x}_{w_{t:k+1}}}) N_{k,w_{k}}.
\end{align*}
The iterates $\xi_{t+1,v} - \mu_{t+1}$ are similarly written and unravelled using $\xi_{1,v} = 0$:
\begin{align*}
\xi_{t+1,v} - \mu_{t+1} 
& = 
\sum_{w \in V} \frac{1}{n} 
\Big( 
(I - \eta_{t} \mathcal{T}_{\mathbf{x}_{w}})(\xi_{t,w}- \mu_{t})  + 
\eta_{t} N_{t,w}
\Big)\\
& = 
\sum_{k=1}^{t} \eta_{k}  \sum_{w_{t:k} \in V^{t-k+1}} \frac{1}{n^{t-k+1}} 
\Pi_{t:k+1}(\mathcal{T}_{\mathbf{x}_{w_{t:k+1}}}) N_{k,w_{k}}.
\end{align*}
The deviation $\omega_{t+1} - \xi_{t+1,v}$ can then be written as follows 
\begin{align*}
\omega_{t+1,v} - \xi_{t+1,v}
& = 
 \sum_{k=1}^{t} \eta_{k} \sum_{w_{t:k} \in V^{t-k+1}} 
\bigg( 
P_{v w_{t:k}} - \frac{1}{n^{t-k+1}}  \bigg)
\Pi_{t:k+1}(\mathcal{T}_{\mathbf{x}_{w_{t:k+1}}}) N_{k,w_{k}}.
\end{align*}
Applying $\mathcal{S}_{\rho}$, taking norm $\|\cdot\|_{\rho}$ on both sides and using the isometry property yields the result.
\end{proof}
For $v, w \in V$ and $k \geq 1$, we want to exploit that the random variables $N_{k,v}$ and $N_{k,w}$ have zero mean, $\E[N_{k,v}] =0$, and are independent for $v \not= w$. To do so we add and subtract $\Pi_{t:k+1}(\mathcal{T}_{\rho})$ inside the norm so  the following upper bound can be formed:
\begin{align}
\label{equ:DistributedError:Breakdown}
& \| \mathcal{S}_{\rho} ( \omega_{t+1,v}  - \xi_{t+1,v}) \|_{\rho}^2\\
\nonumber 
& \leq 
2 
\underbrace{ 
\bigg\| \sum_{k=1}^{t} \eta_{k} \sum_{w_{t:k} \in V^{t-k+1}} 
	\big( 
	P_{v w_{t:k}} - \frac{1}{n^{t-k+1}}  \big)
	\mathcal{T}^{1/2}_{\rho}\Pi_{t:k+1}(\mathcal{T}_{\rho}) N_{k,w_{k}}
	\bigg\|_{H}^2 
	}_{
		(\textbf{Population Covariance Error})^2
	}\\
	\nonumber
& + 
2 
\underbrace{
\bigg\| \sum_{k=1}^{t} \eta_{k} \sum_{w_{t:k} \in V^{t-k+1}} 
	\big( 
	P_{v w_{t:k}} - \frac{1}{n^{t-k+1}}  \big)
	\mathcal{T}^{1/2}_{\rho}
	\big( 
	\Pi_{t:k+1}(\mathcal{T}_{\mathbf{x}_{w_{t:k+1}}})
	- \Pi_{t:k+1}(\mathcal{T}_{\rho})
	) N_{k,w_{k}}
	\bigg\|_{H}^2
	}_{ (\textbf{Residual Empirical Covariance Error})^2
	}.
\end{align}
The \textbf{Population Covariance Error} (\textbf{Pop. Cov. Error}) will be controlled by using the independence of the terms $\{N_{k,w}\}_{w \in V}$. The \textbf{Residual Empirical Covariance Error} (\textbf{Resid. Emp. Cov. Error}) will be analysed by decomposing it into terms that concentrate to zero sufficiently quickly. 

The following lemma, similar to Lemma \ref{Lem:SampleVariance:NBound} for the sample variance, gives concentration rates for the quantities held by the individual agents.  
\begin{lemma}
\label{lem:DistributedError:Bounds}
Fix $v \in V$. Let Assumptions \ref{Assumption:Moments}, \ref{Assumption:Source}, \ref{Assumption:Capacity} hold with $r \geq 1/2$ and $\{N_{s,v}\}_{s \in \mathbb{N}}$ be defined as in \eqref{Equ:LocalErrorTerm}. For any $\lambda > 0$, with probability at least $1-\delta$, the following holds for all $k \in \mathbb{N}$:
\begin{align}
\|(\mathcal{T}_{\rho} + \lambda I )^{-1/2} N_{k,v} \|_{H} 
\leq 
4(R \kappa^{2r} + \sqrt{M})
\bigg(
\frac{\kappa}{m \sqrt{\lambda}} 
+
\frac{\sqrt{2 \sqrt{\nu} c_{\gamma}}}{\sqrt{ m \lambda^{\gamma}}} 
\bigg)
\log\frac{4}{\delta}.
\label{equ:lem:DistributedError:1}
\end{align}
Let $\|\cdot\|_{HS}$ denote the Hilbert-Schmidt norm of a bounded operator from $H$ to $H$. The following holds with probability at least $1-\delta$:
\begin{align}
\| (\mathcal{T}_{\rho} + \lambda I )^{-1/2}(\mathcal{T}_{\rho} - \mathcal{T}_{\mathbf{x}_{v}})
\|_{HS}
\leq 
2 \kappa 
\bigg( 
\frac{2 \kappa}{m \sqrt{\lambda}} + \frac{\sqrt{c_{\gamma}}}{\sqrt{m \lambda^{\gamma}}} 
\bigg) \log \frac{4}{\delta}.
\label{equ:lem:DistributedError:2}
\end{align}
\end{lemma}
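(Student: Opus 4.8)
The plan is to establish the two inequalities separately; both are instances of Bernstein-type concentration for means of i.i.d.\ Hilbert-space-valued random variables, specialised to the $m$ samples $\mathbf{z}_v$ held by agent $v$. Throughout I would work with the preconditioned quantities $(\mathcal{T}_\rho+\lambda I)^{-1/2}(\cdot)$ and bound their moments using Assumptions \ref{Assumption:BoundedProduct}, \ref{Assumption:Moments} and \ref{Assumption:Capacity}.

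For \eqref{equ:lem:DistributedError:1}, note that $N_{k,v} = -(\mathcal{T}_{\mathbf{x}_v}-\mathcal{T}_\rho)\mu_k + (\mathcal{S}^\star_{\mathbf{x}_v}\mathbf{y}_v - \mathcal{S}^\star_\rho f_\rho)$, so that $(\mathcal{T}_\rho+\lambda I)^{-1/2}N_{k,v}$ is, up to a sign, the average over $i=1,\dots,m$ of the i.i.d.\ mean-zero vectors $(\mathcal{T}_\rho+\lambda I)^{-1/2}\big(\mathcal{T}_\rho\mu_k - \langle\mu_k,x_{i,v}\rangle_H x_{i,v} + y_{i,v}x_{i,v} - \mathcal{S}^\star_\rho f_\rho\big)$. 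I would bound the moments of each summand using $\langle x,x\rangle_H\le\kappa^2$, the even-moment control on $y$ from Assumption \ref{Assumption:Moments}, the uniform bound $\sup_{k\ge1}\|\mu_k\|_H<\infty$ (valid under $r\ge1/2$, since $\mu_k$ increases to the minimum-norm interpolant controlled by Assumption \ref{Assumption:Source}), and $\E\langle x,(\mathcal{T}_\rho+\lambda I)^{-1}x\rangle_H = \trace\big(\mathcal{T}_\rho(\mathcal{T}_\rho+\lambda I)^{-1}\big)\le c_\gamma\lambda^{-\gamma}$ from Assumption \ref{Assumption:Capacity} as the variance proxy. A Pinelis-type Bernstein inequality then yields a bound of the stated order $\frac{\kappa}{m\sqrt\lambda}\log\frac1\delta + \sqrt{c_\gamma\lambda^{-\gamma}/m}\,\log\frac1\delta$. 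In fact this is exactly Lemma \ref{Lem:SampleVariance:NBound} (i.e.\ \cite[Lemma 18]{lin2017optimal}) with the $nm$ samples in the network replaced by the $m$ i.i.d.\ samples of agent $v$, so it suffices to invoke that result. The one structural point worth recording is that \eqref{equ:lem:DistributedError:1} holds \emph{simultaneously} for all $k\in\mathbb{N}$ with a single $\log(4/\delta)$, rather than via a union bound over $k$: this is because $N_{k,v}$ depends on $k$ only through the deterministic, uniformly bounded, spectrally-filtered sequence $\mu_k$, so one concentrates the $k$-independent random operators once and bounds their action on $\mu_k$ uniformly --- exactly as in the single-machine proof.

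For \eqref{equ:lem:DistributedError:2}, write $\mathcal{T}_\rho - \mathcal{T}_{\mathbf{x}_v} = \tfrac1m\sum_{i=1}^m\big(\mathcal{T}_\rho - \langle\cdot,x_{i,v}\rangle_H x_{i,v}\big)$, a mean of i.i.d.\ mean-zero Hilbert--Schmidt operators, and precondition by $(\mathcal{T}_\rho+\lambda I)^{-1/2}$. Each resulting summand $\zeta_i := (\mathcal{T}_\rho+\lambda I)^{-1/2}\big(\mathcal{T}_\rho - \langle\cdot,x_{i,v}\rangle_H x_{i,v}\big)$ satisfies $\|\zeta_i\|_{HS}\le 2\kappa^2/\sqrt\lambda$ --- using that $\langle\cdot,x_{i,v}\rangle_H x_{i,v}$ is rank one with $\|\cdot\|_{HS}$-norm $\|x_{i,v}\|_H^2\le\kappa^2$ and $\|(\mathcal{T}_\rho+\lambda I)^{-1/2}x_{i,v}\|_H^2 = \langle x_{i,v},(\mathcal{T}_\rho+\lambda I)^{-1}x_{i,v}\rangle_H\le\kappa^2/\lambda$ --- and $\E\|\zeta_i\|_{HS}^2\le\kappa^2\,\E\langle x,(\mathcal{T}_\rho+\lambda I)^{-1}x\rangle_H\le\kappa^2 c_\gamma\lambda^{-\gamma}$, again via Assumption \ref{Assumption:Capacity}. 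A Bernstein inequality in $\|\cdot\|_{HS}$ applied to $\tfrac1m\sum_{i=1}^m\zeta_i$ then produces the claimed $2\kappa\big(\tfrac{2\kappa}{m\sqrt\lambda}+\tfrac{\sqrt{c_\gamma}}{\sqrt{m\lambda^\gamma}}\big)\log\tfrac4\delta$.

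I expect no conceptual obstacle here: both displays are off-the-shelf gradient-noise and covariance-operator concentration estimates of the kind already used in the single-machine analysis. The only points requiring care are (i) tracking the numerical constants so that the two bounds come out in precisely the stated form, and (ii) the uniformity over $k\in\mathbb{N}$ in \eqref{equ:lem:DistributedError:1}, which is the one genuinely non-routine ingredient and is inherited unchanged from \cite{lin2017optimal}.
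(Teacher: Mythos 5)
Your proposal is correct and follows essentially the same route as the paper: the paper's proof simply invokes the Pinelis-based concentration bounds of \cite[Lemma 18]{lin2017optimal} (i.e.\ Lemma \ref{Lem:SampleVariance:NBound} with the $nm$ network samples replaced by the $m$ samples of agent $v$, with \eqref{equ:lem:DistributedError:2} used as an ingredient in proving \eqref{equ:lem:DistributedError:1}), which is exactly the reduction you identify, and your additional Bernstein-type moment computations and the remark on uniformity over $k$ are consistent with how that cited lemma is proved.
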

\begin{proof}
Both inequalities arise from concentration results for random variables in Hilbert spaces used in \cite{caponnetto2007optimal} and based on results in \cite{pinelis1986remarks}. Inequalities (\ref{equ:lem:DistributedError:1},\ref{equ:lem:DistributedError:2}) come directly from \cite[Lemma 18]{lin2017optimal}, where in particular \eqref{equ:lem:DistributedError:2} was used to prove  \eqref{equ:lem:DistributedError:1}.
\end{proof}
We now move on to establish bounds for the \textbf{Population Covariance Error} term and the \textbf{Residual Empirical Covariance Error} term within the following two sections, Section \ref{sec:DecentralisedError:Terma} and Section \ref{sec:DecentralisedError:Termb}, respectively. Section \ref{Sec:Network Error bound} then brings together the previously developed results to establish a bound for the Network Error term.

We will need the following lemma, taken from \cite[Lemma 15]{lin2017optimal}, which itself follows \cite{ying2008online,tarres2014online}. 
\begin{lemma}
\label{Lem:OperatorNorm}
Let $\mathcal{L}$ be a compact, positive operator on a separable Hilbert Space $H$. Assume that $\eta \|\mathcal{L}\| \leq 1$. For $t \in \mathbb{N}$, $a > 0$ and any non-negative integer $k \leq t-1$ we have 
\begin{align*}
\|\Pi_{t:k+1}(\mathcal{L}) \mathcal{L}^{a} \| \leq 
\bigg( 
\frac{a}{e \sum_{j=k+1}^{t} \eta_{j}} \bigg)^{a}.
\end{align*}
\end{lemma}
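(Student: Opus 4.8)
The plan is to reduce the operator-norm bound to a scalar optimisation problem via functional calculus, and then solve that scalar problem by elementary calculus. Since $\mathcal{L}$ is compact, positive and self-adjoint, all of the operators $I-\eta_j\mathcal{L}$ and $\mathcal{L}^a$ are Borel functions of the single operator $\mathcal{L}$, hence commute, and $\Pi_{t:k+1}(\mathcal{L})\mathcal{L}^a = \phi(\mathcal{L})$ for the continuous function $\phi(\sigma) := \sigma^a\prod_{j=k+1}^{t}(1-\eta_j\sigma)$ on $[0,\|\mathcal{L}\|]$. By the spectral theorem, $\|\phi(\mathcal{L})\| = \sup_{\sigma\in\mathrm{spec}(\mathcal{L})}|\phi(\sigma)| \le \sup_{0\le\sigma\le\|\mathcal{L}\|}\sigma^a\prod_{j=k+1}^{t}(1-\eta_j\sigma)$, so it suffices to bound this supremum by $\big(a/(e\sum_{j=k+1}^{t}\eta_j)\big)^a$.

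First I would record that each factor satisfies $0\le 1-\eta_j\sigma\le 1$ on the relevant range: using $\eta_j\le\eta$ and $\sigma\le\|\mathcal{L}\|$ together with the hypothesis $\eta\|\mathcal{L}\|\le 1$ gives $\eta_j\sigma\le 1$. This nonnegativity is what allows passing to an exponential bound without sign issues: by the elementary inequality $1-x\le e^{-x}$ (valid for all real $x$), $\prod_{j=k+1}^{t}(1-\eta_j\sigma)\le\exp\!\big(-\sigma\sum_{j=k+1}^{t}\eta_j\big)$. Writing $S:=\sum_{j=k+1}^{t}\eta_j>0$, it then remains only to maximise $g(\sigma):=\sigma^a e^{-\sigma S}$ over $\sigma\ge 0$.

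The function $g$ is smooth on $(0,\infty)$, vanishes at $0$ and at $+\infty$, and $g'(\sigma)=\sigma^{a-1}e^{-\sigma S}(a-S\sigma)$, so its unique interior critical point $\sigma^\star=a/S$ is the global maximiser, with $g(\sigma^\star)=(a/S)^a e^{-a}=\big(a/(eS)\big)^a$. Chaining the three estimates yields $\|\Pi_{t:k+1}(\mathcal{L})\mathcal{L}^a\|\le\big(a/(e\sum_{j=k+1}^{t}\eta_j)\big)^a$, which is the claim; when $a=0$ the bound reads $1$, consistent with $\|I-\eta_j\mathcal{L}\|\le 1$ for each factor.

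There is no serious obstacle here. The only points requiring care are (i) verifying that the factors $I-\eta_j\mathcal{L}$ are positive semidefinite of norm at most one, which is exactly where the hypothesis $\eta\|\mathcal{L}\|\le 1$ (with $\eta_j\le\eta$) is used, and (ii) justifying the commuting functional calculus, which is immediate because every operator in sight is a function of $\mathcal{L}$ alone. The scalar maximisation is the only place the constant $e$ enters.
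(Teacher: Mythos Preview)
Your proof is correct and follows exactly the standard argument that the paper defers to (the paper simply cites \cite[Lemma 15]{lin2017optimal} and notes the proof for general $a>0$ follows the same steps). The reduction via spectral calculus to $\sup_{\sigma\ge 0}\sigma^a\prod_j(1-\eta_j\sigma)$, the use of $1-x\le e^{-x}$, and the calculus maximisation of $\sigma^a e^{-\sigma S}$ at $\sigma^\star=a/S$ are precisely the steps in that reference.
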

\begin{proof}
The proof in \cite[Lemma 15]{lin2017optimal} considers this result with  $a=r$. 
The proof for more general $a > 0$ follows the same steps.
\end{proof}

\subsubsection{Analysis of \textbf{Population Covariance Error}}
\label{sec:DecentralisedError:Terma}
In this section we develop a bound for the \textbf{Population Covariance Error} term in \eqref{equ:DistributedError:Breakdown}. The final result is presented in Lemma \ref{lem:DistributedError:Terma}.

The following proposition bounds the expectation of $(\textbf{Population Covariance Error})^2$ by a series involving the products of (deterministic) operators $\{\mathcal{T}_{\rho}^{1/2}\Pi_{t:k+1}(\mathcal{T}_{\rho})\}$, as a function of the step size, the largest eigenvalue in absolute value of the gossip matrix $P$, and the random variables $\{N_{k,w}\}$. 
\begin{proposition}
\label{Prop:DistributedError:Expectation}
For any $t \in \mathbb{N}$ and $v \in V$ we have
\begin{align*}
& \E \bigg[\bigg\| \sum_{k=1}^{t} \eta_{k} \sum_{w_{t:k} \in V^{t-k+1}} 
	\bigg( 
	P_{v w_{t:k}} - \frac{1}{n^{t-k+1}}  \bigg)
	\mathcal{T}^{1/2}_{\rho}
	\Pi_{t:k+1}(\mathcal{T}_{\rho}) N_{k,w_{k}}
	\bigg\|_{H}^2 \bigg]\\
&  \leq 
\E \bigg[ \bigg( 
\sum_{k=1}^{t} \sigma_2^{t-k+1} \eta_k 
\|\mathcal{T}^{1/2}_{\rho} \Pi_{t:k+1}(\mathcal{T}_{\rho}) N_{k,v}\|_{H}
\bigg)^2 \bigg].
\end{align*}
\end{proposition}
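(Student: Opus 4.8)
The plan is to expand the squared $H$-norm into a sum over pairs of paths, take expectations, and exploit the zero-mean property and the cross-agent independence of the noise terms $N_{k,v}$ to collapse the double path-sum onto a ``diagonal'' contribution that is then controlled by the second eigenvalue $\sigma_2$ of $P$.

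First I would expand the square. Writing $A_k := \mathcal{T}_{\rho}^{1/2}\Pi_{t:k+1}(\mathcal{T}_{\rho})$, a \emph{deterministic} bounded operator, and using $\|\sum_i a_i\|_H^2 = \sum_{i,j}\langle a_i,a_j\rangle_H$ with $i=(k,w_{t:k})$, the quantity $(\textbf{Population Covariance Error})^2$ equals
\[
\sum_{k,k'=1}^{t}\eta_k\eta_{k'}\sum_{\substack{w_{t:k}\in V^{t-k+1}\\ w'_{t:k'}\in V^{t-k'+1}}}\Big(P_{v w_{t:k}}-\tfrac{1}{n^{t-k+1}}\Big)\Big(P_{v w'_{t:k'}}-\tfrac{1}{n^{t-k'+1}}\Big)\big\langle A_k N_{k,w_k},\,A_{k'}N_{k',w'_{k'}}\big\rangle_H.
\]
Then I would take expectations term by term. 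Since $\mu_k$ is deterministic, $N_{k,v}$ depends only on $\mathbf{z}_v$, so for $w\neq w'$ the families $\{N_{s,w}\}_s$ and $\{N_{s,w'}\}_s$ are independent; expanding the inner product in an orthonormal basis of $H$ and using $\E[N_{k,v}]=0$ together with the determinism of $A_k,A_{k'}$ gives $\E[\langle A_k N_{k,w_k},A_{k'}N_{k',w'_{k'}}\rangle_H]=0$ whenever $w_k\neq w'_{k'}$. Hence after taking expectations only pairs of paths with a common terminal node survive, and because the per-agent datasets are i.i.d.\ the surviving scalar $c_{k,k'}:=\E[\langle A_k N_{k,u},A_{k'}N_{k',u}\rangle_H]$ does not depend on $u$; I fix the reference node $v$.

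Next I would carry out the path summation and bring in the spectral gap. For a fixed terminal node $u$, summing the Markov weights over the $n^{t-k}$ intermediate nodes gives $\sum_{w_{t:k}:w_k=u}(P_{v w_{t:k}}-n^{-(t-k+1)}) = (P^{t-k+1})_{vu}-\tfrac1n$ since $P$ is doubly-stochastic, so
\[
\E\big[(\textbf{Pop.\ Cov.\ Error})^2\big] \;=\; \sum_{k,k'=1}^{t}\eta_k\eta_{k'}\,c_{k,k'}\sum_{u\in V}\Big((P^{t-k+1})_{vu}-\tfrac1n\Big)\Big((P^{t-k'+1})_{vu}-\tfrac1n\Big).
\]
Using $P=\sum_{l=1}^n\lambda_l u_l u_l^\top$ with $\lambda_1=1$, $u_1=\1/\sqrt n$, the inner sum over $u$ equals $\sum_{l\ge2}\lambda_l^{\,(t-k+1)+(t-k'+1)}(u_l)_v^2$, whose absolute value is at most $\sigma_2^{t-k+1}\sigma_2^{t-k'+1}$ because $|\lambda_l|\le\sigma_2$ for $l\ge2$ and $\sum_{l\ge2}(u_l)_v^2\le 1$. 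Separately, Cauchy--Schwarz in $H$ followed by Jensen gives $|c_{k,k'}|\le\E[\|A_kN_{k,v}\|_H\,\|A_{k'}N_{k',v}\|_H]$. Since $\eta_k\ge0$, combining the two estimates and absorbing signs via $c_{k,k'}\langle\cdot,\cdot\rangle\le|c_{k,k'}|\,|\langle\cdot,\cdot\rangle|$ yields
\begin{align*}
\E\big[(\textbf{Pop.\ Cov.\ Error})^2\big]
&\le \sum_{k,k'=1}^{t}\eta_k\eta_{k'}\,\sigma_2^{t-k+1}\sigma_2^{t-k'+1}\,\E\big[\|A_kN_{k,v}\|_H\,\|A_{k'}N_{k',v}\|_H\big]\\
&= \E\bigg[\bigg(\sum_{k=1}^{t}\sigma_2^{t-k+1}\eta_k\,\big\|\mathcal{T}_{\rho}^{1/2}\Pi_{t:k+1}(\mathcal{T}_{\rho})N_{k,v}\big\|_H\bigg)^2\bigg],
\end{align*}
which is exactly the claimed inequality.

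The hard part will be the bookkeeping in the middle step: making precise that the only surviving pairs of paths are those sharing the \emph{terminal} node $w_k=w'_{k'}$ (the intermediate nodes and the two lengths $t-k+1,\,t-k'+1$ are otherwise unrelated), and that the surviving scalar $c_{k,k'}$ is genuinely node-independent, which uses both the independence across agents and the identical distribution of each agent's data. Everything afterwards is the elementary Markov path-sum identity together with two applications of Cauchy--Schwarz, one in $\R^n$ through the spectral decomposition of $P$ and one in $H$.
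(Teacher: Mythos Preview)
Your proof is correct and follows essentially the same approach as the paper: expand the square, use the zero-mean and cross-agent independence of $N_{k,w}$ together with the i.i.d.\ assumption to reduce to paths with a common terminal node and a node-independent inner product, collapse the path sum via the spectral decomposition of $P$ to the factor $\sigma_2^{2t-k-k'+2}$, and then apply Cauchy--Schwarz in $H$ to factor the double sum into a square. The only cosmetic difference is that the paper first identifies the inner sum over $u$ as $(P^{2t-k-k'+2})_{vv}-\tfrac1n$ before bounding it spectrally, whereas you go straight to $\sum_{l\ge2}\lambda_l^{(t-k+1)+(t-k'+1)}(u_l)_v^2$; these are of course the same quantity.
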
 

\begin{proof}
Fix $t \in \mathbb{N}$ and $v \in V$. Let us introduce the notation 
$\Delta(w_{t:k}): = \big( P_{v w_{t:k}} - \frac{1}{n^{t-k+1}}  \big) $. 
Expanding the square and taking the expectation we get 
\begin{align*}
& \E \bigg[\bigg\| \sum_{k=1}^{t} \eta_{k} \sum_{w_{t:k} \in V^{t-k+1}} 
	\bigg( 
	P_{v w_{t:k}} - \frac{1}{n^{t-k+1}}  \bigg)
	\mathcal{T}^{1/2}_{\rho}
	\Pi_{t:k+1}(\mathcal{T}_{\rho}) N_{k,w_{k}}
	\bigg\|_{H}^2 \bigg]\\
& = 
\sum_{k,k^{\prime}=1}^{t}\!
\eta_{k} \eta_{k^{\prime}}\!\!
\sum_{\substack{ w_{t:k} \in V^{t-k+1}  \\ w^{\prime}_{t:k^{\prime}} \in V^{t-k^{\prime} +1}} } \!\!
\Delta(w_{t:k}) \Delta(w^{\prime}_{t:k^\prime})
\E
\langle 
\mathcal{T}^{1/2}_{\rho} \Pi_{t:k+1}(\mathcal{T}_{\rho}) N_{k,w_{k}}
,
\mathcal{T}^{1/2}_{\rho} \Pi_{t:k^{\prime}+1}(\mathcal{T}_{\rho}) N_{k^{\prime},w^{\prime}_{k^{\prime}}}
\rangle_{H}\\
& = 
\sum_{k,k^{\prime}=1}^{t}
\eta_{k} \eta_{k^{\prime}}
\E
\langle 
\mathcal{T}^{1/2}_{\rho} \Pi_{t:k+1}(\mathcal{T}_{\rho}) N_{k,v}
,
\mathcal{T}^{1/2}_{\rho} \Pi_{t:k^{\prime}+1}(\mathcal{T}_{\rho}) N_{k^{\prime},v}
\rangle_{H} 
\sum_{\substack{ w_{t:k} \in V^{t-k+1}  \\ w^{\prime}_{t:k^{\prime}} \in V^{t-k^{\prime} +1} \\
w_{k} = w^{\prime}_{k^{\prime}} } }
\Delta(w_{t:k}) \Delta(w^{\prime}_{t:k^\prime}).
\end{align*}
The last identity follows from the fact that the samples held by agents are independent and identically distributed. As the agents' datasets are independent, the inner products are zero for $k,k^{\prime} \in \{1,\dots,t\}$ whenever the final elements of the paths $w_{t:k}$ and $w_{t:k^{\prime}}^{\prime}$ do not coincide, i.e.
$$
	\E
	\langle 
\mathcal{T}^{1/2}_{\rho} \Pi_{t:k+1}(\mathcal{T}_{\rho}) N_{k,w_{k}}
,
\mathcal{T}^{1/2}_{\rho} \Pi_{t:k^{\prime}+1}(\mathcal{T}_{\rho}) N_{k^{\prime},w^{\prime}_{k^{\prime}}}
\rangle_{H}
= 0  \text{ if } w_{k} \not= w^{\prime}_{k^{\prime}}.
$$
As the agents' datasets are identically distributed, the expectation of the inner products can be taken outside the sum over the paths. The sum over all pairs of paths that intersect at the final node can be simplified as follows:
\begin{align*}
& \sum_{\substack{ w_{t:k} \in V^{t-k+1}  \\ w^{\prime}_{t:k^{\prime}} \in V^{t-k^{\prime} +1} \\
w_{k} = w^{\prime}_{k^\prime} } }
\Delta(w_{t:k}) \Delta(w^{\prime}_{t:k^\prime})\\
& = 
\sum_{ 
\substack{ 
w_{k}, w^{\prime}_{k^{\prime}} \in V  \\ 
w_k = w_{k^{\prime}}^{\prime} }}
\sum_{w_{t:k+1}  \in V^{t-k}}
\sum_{w^{\prime}_{t:k+1}  \in V^{t-k^{\prime}}}
\bigg(P_{v w_{t:k}} - \frac{1}{n^{t-k+1}}\bigg)
\bigg(P_{v w^{\prime}_{t:k^{\prime} }} - \frac{1}{n^{t-k^{\prime}+1}}\bigg) \\
& = 
\sum_{ w \in V  }
\bigg((P^{t-k+1})_{vw} - \frac{1}{n}\bigg)
\bigg((P^{t-k^{\prime}+1})_{vw} - \frac{1}{n}\bigg).
\end{align*}
For each $v \in V$ let $e_{v} \in \mathbb{R}^{n}$ denote the vector of all zeros but a $1$ in the place aligned with agent $v$. The summation can be further simplified by utilising the assumption that $P$ is symmetric and doubly-stochastic, i.e.\ $P^{\top} = P$ and $P \mathbf{1} = \mathbf{1}$.
By the eigendecomposition of the gossip matrix $P$, recall Section \ref{sec:DecentralisedLearning:DistributedGD}, for any $s > 0 $ we have 
$(P^{s})_{vv} = \sum_{l=1}^{n} \lambda_{l}^{s} u_{l,v}^2 = \frac{1}{n} + \sum_{l =2}^{n} \lambda_{l}^{s} u_{l,v}^2$. This yields the bound $|(P^{s})_{vv} - \frac{1}{n}| =
|\sum_{l =2}^{n} \lambda_{l}^{s} u_{l,v}^2| \leq \sigma_2^{s} \sum_{l=2}^{n} u_{l,v}^2 \leq \sigma_2^{s}$ where $\sigma_2 : = \max\{|\lambda_2|,|\lambda_{n}|\}$ is the second largest eigenvalue in absolute value. Bringing everything together, the expected norm of $(\textbf{Pop. Cov. Error})^2$ can be written and bounded as follows:
\begin{align*}
& \E \bigg[\bigg\| \sum_{k=1}^{t} \eta_{k} \sum_{w_{t:k} \in V^{t-k+1}} 
	\bigg( 
	P_{v w_{t:k}} - \frac{1}{n^{t-k+1}}  \bigg)
	\mathcal{T}^{1/2}_{\rho}
	\Pi_{t:k+1}(\mathcal{T}_{\rho}) N_{k,w_{k}}
	\bigg\|_{H}^2 \bigg]\\
& = 
\sum_{k,k^{\prime}=1}^{t}
\eta_{k} \eta_{k^{\prime}}
\E
\langle 
\mathcal{T}^{1/2}_{\rho} \Pi_{t:k+1}(\mathcal{T}_{\rho}) N_{k,v}
,
\mathcal{T}^{1/2}_{\rho} \Pi_{t:k^{\prime}+1}(\mathcal{T}_{\rho}) N_{k^{\prime},v}
\rangle_{H} 
\bigg(P^{2t-k-k^{\prime}+2}_{vv} - \frac{1}{n}\bigg)\\
& \leq 
\sum_{k,k^{\prime}=1}^{t}
\eta_{k} \eta_{k^{\prime}}
\E
|\langle 
\mathcal{T}^{1/2}_{\rho} \Pi_{t:k+1}(T_{\rho}) N_{k,v}
,
\mathcal{T}^{1/2}_{\rho} \Pi_{t:k^{\prime}+1}(\mathcal{T}_{\rho}) N_{k^{\prime},v}
\rangle_{H} |\,
\bigg|\bigg(P^{2t-k-k^{\prime}+2}_{vv} - \frac{1}{n}\bigg)\bigg|\\
& \leq 
\sum_{k,k^{\prime}=1}^{t}
\eta_{k} \eta_{k^{\prime}}
\E
\big[
\|\mathcal{T}^{1/2}_{\rho} \Pi_{t:k+1}(\mathcal{T}_{\rho}) N_{k,v}\|_{H}
\|\mathcal{T}^{1/2}_{\rho} \Pi_{t:k^{\prime}+1}(\mathcal{T}_{\rho}) N_{k^{\prime},v}\|_{H}
\big]
\sigma_2^{2t-k-k^{\prime}+2} \\
& = 
\E
\bigg[
\bigg( 
\sum_{k=1}^{t} \eta_{k} \sigma_2^{t-k+1}
\|\mathcal{T}^{1/2}_{\rho} \Pi_{t:k+1}(\mathcal{T}_{\rho}) N_{k,v}\|_{H}
\bigg)^2
\bigg],
\end{align*}
where we used Jensen's inequality
and the Cauchy-Schwarz inequality.
\end{proof}
The following lemma presents the final bound for the  \textbf{Population Covariance Error}. This result is established by utilising the series bound in  Proposition \ref{Prop:DistributedError:Expectation} to split the error into well-mixed and poorly-mixed terms, i.e.\ for $k$ such that $t-k \gtrsim 1/(1-\sigma_2)$ and $t-k \lesssim 1/(1-\sigma_2)$. The well-mixed terms are controlled using that  $\sigma_2^{t-k+1}$ is small. The poorly-mixed terms (there are $\sim1/(1-\sigma_2)$ of them) are controlled using both the concentration of the error terms $\{N_{k,w}\}_{k \geq 1, w \in V}$ as well as the contractive nature of the gradient updates, i.e.\ the operator norm of $\{\mathcal{T}_{\rho}^{1/2}\Pi_{t:k+1}(\mathcal{T}_{\rho})\}$ in Lemma \ref{Lem:OperatorNorm}. 
The contractive terms arising from the gradient updates are decreasing in the step size: larger steps achieve a faster contraction. However, each term within the Network Error series is scaled by the step size $\{\eta_{k}\}_{k \geq 1}$, i.e. the Network Error takes the form $\sum_{k=1}^{t} \sigma_2^{t-k+1} \eta_k [\cdots]$ where $[\cdots]$ indicates the right most terms.
To exploit this trade-off we introduce two free parameters $\alpha \in [0,1/2]$ and $\gamma^{\prime} \in [1,\gamma]$, which describe the degree to which the contraction is utilised. Specifically, $\alpha = 0$ and $\gamma^{\prime} = \gamma$ is the large step regime and, $\alpha=1/2$ and $\gamma^{\prime} = 1$ is the small step regime.

\begin{lemma}
\label{lem:DistributedError:Terma}
Let Assumptions \ref{Assumption:Moments}, \ref{Assumption:Source}, \ref{Assumption:Capacity} hold with $r \geq 1/2$, $\eta_{t} = \eta t^{-\theta}$ for $t \in \mathbb{N}$  with $\eta \kappa^2 \leq 1$ and $\theta \in [0,1)$. The following holds for any $v \in V$, 
$t/2 \geq \lceil \frac{  (1+r) \log(t)}{1-\sigma_2 } \rceil  =:  t^{\star}$, $\alpha \in [0,1/2]$ and $\gamma^{\prime} \in [1,\gamma]$:
\begin{align*}
   &  \E \bigg[ \bigg\| \sum_{k=1}^{t} \eta_{k} \sum_{w_{t:k} \in V^{t-k+1}} 
	\bigg( 
	P_{v w_{t:k}} - \frac{1}{n^{t-k+1}}  \bigg)
	\mathcal{T}^{1/2}_{\rho}
	\Pi_{t:k+1}(\mathcal{T}_{\rho}) N_{k,w_{k}}
	\bigg\|_{H}^2 \bigg] \\
&  \leq
\frac{ \widetilde{a} \log^2(4n) \log^2(t^{\star})}{m} 
\Big(  \eta^2 t^{-2r} 
\vee (m^{-1} (\eta t^{\star})^{1 + 2\alpha})  \vee (\eta t^{\star})^{\gamma^{\prime} + 2 \alpha}
\Big),
\end{align*}
where \\
$\widetilde{a} = \frac{ 1152 
(R \kappa^{2r} + \sqrt{M})^2 (\kappa + \sqrt{2 \sqrt{\nu} c_{\gamma^{\prime}}})^2 
(\|\mathcal{T}_{\rho}\| \vee 1)^2}{ \|\mathcal{T}_{\rho}\| \wedge \|\mathcal{T}_{\rho}\|^{\gamma^{\prime}} }
\Big[
    6 \Big( \frac{  \|\mathcal{T}_{\rho}^{\alpha}\| t^{-\alpha \theta}}{\alpha} 
    \! \vee \! 
    \frac{  t^{-(\alpha + 1/2)\theta } \|\mathcal{T}^{\alpha}_{\rho}\|
    }{ 1/2 + \alpha}
    \! \vee \! t^{-\theta} \|\mathcal{T}_\rho\| \Big)\mathbbm{1}_{ \{ \alpha \not= 0 \} }
    \! + \!
    10 \Big]^2$.
\end{lemma}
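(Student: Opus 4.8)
The plan is to start from Proposition~\ref{Prop:DistributedError:Expectation}, which already reduces the quantity of interest to $\E\big[\big(\sum_{k=1}^{t}\sigma_2^{t-k+1}\eta_k\|\mathcal{T}^{1/2}_{\rho}\Pi_{t:k+1}(\mathcal{T}_{\rho})N_{k,v}\|_{H}\big)^2\big]$, and then decouple ``statistics'' (the random vectors $N_{k,v}$) from ``network topology + contraction'' (the scalars $\sigma_2^{t-k+1}\eta_k$ and the deterministic operators $\mathcal{T}_\rho^{1/2}\Pi_{t:k+1}(\mathcal{T}_\rho)$). First I would introduce a pseudo-regularisation parameter $\lambda>0$ and insert $(\mathcal{T}_\rho+\lambda I)^{1/2}(\mathcal{T}_\rho+\lambda I)^{-1/2}$, so that $\|\mathcal{T}^{1/2}_{\rho}\Pi_{t:k+1}(\mathcal{T}_{\rho})N_{k,v}\|_{H}\le \|\mathcal{T}^{1/2}_{\rho}\Pi_{t:k+1}(\mathcal{T}_{\rho})(\mathcal{T}_\rho+\lambda I)^{1/2}\|\cdot\|(\mathcal{T}_\rho+\lambda I)^{-1/2}N_{k,v}\|_{H}$. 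Pulling $\max_{1\le k\le t}\|(\mathcal{T}_\rho+\lambda I)^{-1/2}N_{k,v}\|_{H}^2$ out of the square bounds the whole expression by $\E\big[\max_{k}\|(\mathcal{T}_\rho+\lambda I)^{-1/2}N_{k,v}\|_{H}^2\big]\cdot\big(\sum_{k=1}^{t}\sigma_2^{t-k+1}\eta_k\|\mathcal{T}^{1/2}_{\rho}\Pi_{t:k+1}(\mathcal{T}_{\rho})(\mathcal{T}_\rho+\lambda I)^{1/2}\|\big)^2$.

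Next I would control the expected maximum using the high-probability concentration bound \eqref{equ:lem:DistributedError:1} of Lemma~\ref{lem:DistributedError:Bounds}, applied with the capacity exponent $\gamma'\in[1,\gamma]$ in place of $\gamma$ (valid since Assumption~\ref{Assumption:Capacity} with exponent $\gamma$ implies it with any $\gamma'\le\gamma$ up to a constant on a bounded spectrum), combined with Lemma~\ref{Lem:TailBoundToExpectation} to pass from the tail bound to the expectation. Since \eqref{equ:lem:DistributedError:1} holds simultaneously for all $k\in\mathbb N$, this gives $\E\big[\max_{k}\|(\mathcal{T}_\rho+\lambda I)^{-1/2}N_{k,v}\|_{H}^2\big]\lesssim \tfrac{1}{m^2\lambda}+\tfrac{1}{m\lambda^{\gamma'}}$, the polylogarithmic factors (in $n$, and later $t^\star$) coming from this step and from integrating $\int_0^1\log^2(c/\delta)\,d\delta$.

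Then I would split the deterministic series at $k=t-t^\star$. For the \emph{well-mixed} range $k\le t-t^\star$ I would use $\sigma_2^{t-k+1}\le\sigma_2^{t^\star}\le e^{-(1-\sigma_2)t^\star}\le t^{-(1+r)}$ by the definition of $t^\star$, together with the crude bounds $\eta_k\le\eta$ and $\|\mathcal{T}^{1/2}_{\rho}\Pi_{t:k+1}(\mathcal{T}_{\rho})(\mathcal{T}_\rho+\lambda I)^{1/2}\|\le\|\mathcal{T}_\rho\|+\sqrt{\lambda\|\mathcal{T}_\rho\|}\lesssim1$ (as $\lambda\le1$), and the fact that there are at most $t$ such terms; after multiplying by $\tfrac{1}{m}$ from the expected maximum and choosing $\lambda\simeq 1/(\eta t^\star)$, squaring produces the $\tfrac{1}{m}\eta^2 t^{-2r}$ contribution. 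For the \emph{poorly-mixed} range $k\ge t-t^\star$ (at most $t^\star$ terms, which sits below $t/2$ by hypothesis so $k\ge t/2$), I would bound $\|\mathcal{T}^{1/2}_{\rho}\Pi_{t:k+1}(\mathcal{T}_{\rho})(\mathcal{T}_\rho+\lambda I)^{1/2}\|$ by writing $(\mathcal{T}_\rho+\lambda I)^{1/2}=(\mathcal{T}_\rho+\lambda I)^{\alpha}(\mathcal{T}_\rho+\lambda I)^{1/2-\alpha}$, using $(\tau+\lambda)^s\le\tau^s+\lambda^s$ for $s\in[0,1]$ to reduce to operators of the form $\mathcal{T}_\rho^{a}\Pi_{t:k+1}(\mathcal{T}_\rho)$ with $a\in\{1,1-\alpha,\tfrac12+\alpha,\tfrac12\}$, each bounded by Lemma~\ref{Lem:OperatorNorm} via $\lesssim(\sum_{j=k+1}^{t}\eta_j)^{-a}$; summing $\eta_k$ against these over the window of length $t^\star$ yields a bound of the order $(\eta t^\star)^{1/2+\alpha}\lambda^{-\alpha}$ (plus lower-order pieces). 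Plugging $\lambda\simeq 1/(\eta t^\star)$ balances everything, so that the poorly-mixed contribution times $\tfrac{1}{m^2\lambda}+\tfrac{1}{m\lambda^{\gamma'}}$ becomes $\tfrac{1}{m}\big(m^{-1}(\eta t^\star)^{1+2\alpha}\vee(\eta t^\star)^{\gamma'+2\alpha}\big)$; one then checks this dominates the well-mixed part for the parameter ranges considered, takes the union bound of the two probabilistic events, and collects all constants into $\widetilde a$, the indicator $\mathbbm{1}_{\{\alpha\neq0\}}$ absorbing the interpolation terms (which are absent and replaced by the crude bound when $\alpha=0$).

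The hard part will be the bookkeeping in the poorly-mixed range: matching the precise exponents $1+2\alpha$ and $\gamma'+2\alpha$ requires carefully tracking how much of $(\mathcal{T}_\rho+\lambda I)^{1/2-\alpha}$ is absorbed into the contraction $\Pi_{t:k+1}(\mathcal{T}_\rho)$ through Lemma~\ref{Lem:OperatorNorm} versus estimated crudely by $\lambda^{1/2-\alpha}$, how $\sum_{j=k+1}^{t}\eta_j$ behaves for $k$ in the window $[t-t^\star,t]$ with the decaying step size $\eta_k=\eta k^{-\theta}$, and how the choice $\lambda\simeq 1/(\eta t^\star)$ interacts with the $1/(m^2\lambda)$ and $1/(m\lambda^{\gamma'})$ terms from concentration. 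The remaining ingredients—the well-mixed estimate, the tail-to-expectation passage, and the final comparison of the two regimes—are comparatively routine.
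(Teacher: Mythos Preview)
Your high-level plan coincides with the paper's proof: start from Proposition~\ref{Prop:DistributedError:Expectation}, insert a pseudo-regulariser, split the series at the mixing time $t^\star$, bound the well-mixed tail crudely via $\sigma_2^{t-k+1}\le t^{-(1+r)}$, bound the poorly-mixed window via Lemma~\ref{Lem:OperatorNorm}, and pass from the high-probability estimate of Lemma~\ref{lem:DistributedError:Bounds} to expectation through Lemma~\ref{Lem:TailBoundToExpectation}. One small difference: the paper uses \emph{two} regularisers, a constant $\lambda=\|\mathcal{T}_\rho\|$ on the well-mixed range and $\widetilde\lambda=\|\mathcal{T}_\rho\|/(\eta t^\star)$ on the poorly-mixed range, pulling out the corresponding $\max_k\|\mathcal{T}_{\rho,\cdot}^{-1/2}N_{k,v}\|_H$ only \emph{after} the split. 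Your single $\lambda\simeq 1/(\eta t^\star)$ makes the well-mixed part pick up an extra $(\eta t^\star)^{\gamma'}$; this is still dominated by the poorly-mixed term, so it works, but the paper's route gives the $\eta^2 t^{-2r}/m$ piece directly.

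The one place where your sketch goes wrong is the poorly-mixed operator bound. The paper does \emph{not} factor $(\mathcal{T}_\rho+\lambda I)^{1/2}=(\mathcal{T}_\rho+\lambda I)^{\alpha}(\mathcal{T}_\rho+\lambda I)^{1/2-\alpha}$; it first uses the spectral inequality
\[
\|\mathcal{T}_\rho^{1/2}\Pi_{t:k+1}(\mathcal{T}_\rho)\mathcal{T}_{\rho,\widetilde\lambda}^{1/2}\|
\le \|\mathcal{T}_\rho\,\Pi_{t:k+1}(\mathcal{T}_\rho)\| + \sqrt{\widetilde\lambda}\,\|\mathcal{T}_\rho^{1/2}\Pi_{t:k+1}(\mathcal{T}_\rho)\|,
\]
and then pulls out the bounded constant $\|\mathcal{T}_\rho^{\alpha}\|$, leaving only the two contraction exponents $1-\alpha$ and $1/2-\alpha$ (both strictly below $1$ when $\alpha>0$). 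Summing $\eta_k/(\sum_{j>k}\eta_j)^{\beta}$ over the window of length $t^\star$ gives $(\eta t^\star)^{1-\beta}$ for $\beta<1$, so the poorly-mixed series is of order $(\eta t^\star)^{\alpha}\vee\sqrt{\widetilde\lambda}\,(\eta t^\star)^{1/2+\alpha}$, not $(\eta t^\star)^{1/2+\alpha}\lambda^{-\alpha}$ as you wrote. Your expression, after $\lambda\simeq 1/(\eta t^\star)$, would yield $(\eta t^\star)^{1/2+2\alpha}$ and hence the wrong final exponents. The case $\alpha=0$ must be handled separately (then $\beta=1$ and the window sum is $O(\log t^\star)$); this is the origin of the $\log^2(t^\star)$ factor and of the indicator $\mathbbm{1}_{\{\alpha\neq 0\}}$ in $\widetilde a$.
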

\begin{proof}
Consider the bound of \textbf{Population Covariance Error} in Proposition \ref{Prop:DistributedError:Expectation}. Let $\|\mathcal{T}_{\rho}\| \geq \lambda \geq 0 $,\\
$\widetilde{\lambda} \geq 0$ and for $c > 0 $ introduce the cutoff  $t^{\star} = \lceil \frac{c \log(t)}{1-\sigma_2} \rceil$.
For  $k =1,\dots,t$ and $v \in V$ we have
\begin{align*}
    \|\mathcal{T}^{1/2}_{\rho} \Pi_{t:k+1}(\mathcal{T}_{\rho}) N_{k,v}\|_{H}
    & \leq 
    \|\mathcal{T}^{1/2}_{\rho} \Pi_{t:k+1}(\mathcal{T}_{\rho})\mathcal{T}_{\rho,\lambda}^{1/2} \| \|\mathcal{T}_{\rho,\lambda}^{-1/2} N_{k,v}\|_{H}\\
    & \leq \|\mathcal{T}^{1/2}_{\rho} \Pi_{t:k+1}(\mathcal{T}_{\rho})
    \mathcal{T}_{\rho,\lambda}^{1/2} \|
    \max_{k=1,\dots,t} 
    \Big\{
     \|\mathcal{T}_{\rho,\lambda}^{-1/2} N_{k,v}\|_{H}
    \Big\},
\end{align*}
and similarly for $\widetilde{\lambda}$.
Let us split the summation at $k \leq t - t^{\star}-1$ and $k \geq t-t^{\star}$ using the bound above to obtain
\begin{align*}
&     \bigg( 
\sum_{k=1}^{t} \sigma_2^{t-k+1} \eta_k 
\|\mathcal{T}^{1/2}_{\rho} \Pi_{t:k+1}(\mathcal{T}_{\rho}) N_{k,v} \|_{H}
\bigg)^2  \\
&  \leq 2 \bigg( 
\underbrace{ 
\sum_{k=1}^{t-t^{\star}-1}
\sigma_2^{t-k+1} \eta_k 
\|\mathcal{T}^{1/2}_{\rho} \Pi_{t:k+1}(\mathcal{T}_{\rho}) \mathcal{T}_{\rho,\lambda}^{1/2} \|
}_{\textbf{Well-Mixed Network Error}}
\bigg)^2
\max_{k=1,\dots,t} 
    \Big\{
     \|\mathcal{T}_{\rho,\lambda}^{-1/2} N_{k,v}\|_{H}^2
    \Big\}\\ 
& \quad + 
 2
 \bigg( 
 \underbrace{ 
\sum_{k=t-t^{\star}}^{t}
\sigma_2^{t-k+1} \eta_k 
\|\mathcal{T}^{1/2}_{\rho} \Pi_{t:k+1}(\mathcal{T}_{\rho}) \mathcal{T}_{\rho,\widetilde{\lambda}}^{1/2}\| 
}_{ \textbf{Poorly-Mixed Network Error}}
\bigg)^2
\max_{k=1,\dots,t} 
    \Big\{
     \|\mathcal{T}_{\rho,\widetilde{\lambda}}^{-1/2} N_{k,v}\|_{H}^2
    \Big\}.
\end{align*}

The \textbf{Well-Mixed Network Error}  is controlled through $\sigma_2^{t-k+1}$ being small for $k \leq t-t^{\star}$. From $\|\Pi_{t:k+1}(\mathcal{T}_{\rho})\| \leq 1$ and $\lambda \leq \|\mathcal{T}_{\rho}\|$ we have $\|\mathcal{T}^{1/2}_{\rho} \Pi_{t:k+1}(\mathcal{T}_{\rho}) \mathcal{T}_{\rho,\lambda}^{1/2}\|_{H}  \leq 2\|\mathcal{T}_{\rho}\|$, and from $1/\log(1/\sigma_2) \leq 1/(1-\sigma_2)$ we have  $t^{\star} \geq \frac{c \log(t)}{-\log(\sigma_2)}$. These two facts allow  the \textbf{Well-Mixed Network Error} to be bounded as follows:
\begin{align*}
    \textbf{Well-Mixed Network Error} 
    \leq 
    2\|\mathcal{T}_{\rho}\|
    \eta \sum_{k=1}^{t-t^{\star}}\sigma_2^{t-k+1}k^{-\theta}
    \leq 
    2\eta \|\mathcal{T}_{\rho}\| \sum_{k=1}^{t-t^{\star}} 
    \sigma_2^{ \frac{c \log(t)}{-\log(\sigma_2)}}
    \leq
    2\eta \|\mathcal{T}_{\rho}\|  t^{1-c}.
\end{align*}

For the \textbf{Poorly-Mixed Network Error} let us consider the two cases $\alpha \in (0,1/2]$ and $\alpha = 0$ separately. Consider  $\alpha \in (0,1/2]$ first. 
Using Lemma \ref{Lem:OperatorNorm}\footnote{
The operator norm can be bounded  
$\|\mathcal{T}^{1/2}_{\rho} \Pi_{t:k+1} (\mathcal{T}_{\rho}) \mathcal{T}^{1/2}_{\rho,\lambda} \| \leq \sup_{x \in (0,\kappa^2) } \big\{
x^{1/2} (x+ \lambda )^{1/2} 
\prod_{\ell=k+1}^{t}(1 - \eta_\ell x)
\big\} \leq
\sup_{x \in (0,\kappa^2) }\big\{
x \prod_{\ell=k+1}^{t}(1 - \eta_\ell x)
\big\} + \sqrt{\lambda} \sup_{x \in (0,\kappa^2) } \big\{
x^{1/2}\prod_{\ell=k+1}^{t}(1 - \eta_\ell x)
\big\}.
$ Using techniques used to prove \citep[Lemma 15]{lin2017optimal}, these terms can be bounded as shown.
} we have, for $ t-1 \geq  k \geq 1$,
\begin{align*}
\|\mathcal{T}^{1/2}_{\rho} \Pi_{t:k+1}(\mathcal{T}_{\rho}) \mathcal{T}_{\rho,\widetilde{\lambda}}^{1/2}\| 
& \leq 
\|\mathcal{T}_{\rho }\Pi_{t:k+1}(\mathcal{T}_{\rho}) \| +
\sqrt{\widetilde{\lambda}} 
\|\mathcal{T}_{\rho }^{1/2}\Pi_{t:k+1}(\mathcal{T}_{\rho}) \|\\
& \leq 
\|\mathcal{T}_{\rho}^{\alpha}\| 
\|\mathcal{T}_{\rho }^{1-\alpha}\Pi_{t:k+1}(\mathcal{T}_{\rho}) \| 
+
\sqrt{\widetilde{\lambda}}
\|\mathcal{T}_{\rho}^{\alpha}\| 
\|\mathcal{T}_{\rho }^{1/2 - \alpha}\Pi_{t:k+1}(\mathcal{T}_{\rho}) \|\\ 
& \leq 
\|\mathcal{T}_{\rho}^{\alpha}\|  
\bigg( \frac{ 1 - \alpha }{e \sum_{j=k+1}^{t} \eta_j } \bigg)^{1 -\alpha}
+ 
\sqrt{\widetilde{\lambda}}
\|\mathcal{T}_{\rho}^{\alpha}\| 
\bigg( \frac{1/2 - \alpha }{ e \sum_{j=k+1}^{t} \eta_{j}}\bigg)^{1/2- \alpha}.
\end{align*}
When plugging the above into the \textbf{Poorly-Mixed Network Error}, summations of the form \\
$\sum_{k=t-t^{\star}}^{t-1} \frac{ \eta_{k}}{ ( \sum_{j=k+1}^{t} \eta_{j} )^{\beta}}$ appear for $\beta = 1-\alpha$ and $\beta = 1/2-\alpha$.
To bound these consider the following for $\beta \in [0,1)$ and $t \geq 2 t^{\star}$:
\begin{align*}
    \sum_{k=t-t^{\star}}^{t-1} \frac{ \eta_{k}}{\big( \sum_{j=k+1}^{t} \eta_j\big)^{\beta} }
    & = 
    \eta^{1-\beta}
    \sum_{k=t-t^{\star}}^{t-1} 
    \frac{ k^{-\theta}}{\big( \sum_{j=k+1}^{t} j^{-\theta}\big)^{\beta} }\\
    & \leq 
    \eta^{1-\beta} t^{\theta \beta}
    \sum_{k=t-t^{\star}}^{t-1} 
    \frac{ k^{-\theta}}{\big( t- k \big)^{\beta} }\\
    & \leq
    \frac{ \eta^{1-\beta} t^{\theta \beta}}{ (t - t^{\star})^{\theta} }
    \sum_{k=t-t^{\star}}^{t-1} 
    \frac{ 1}{\big( t- k \big)^{\beta} }\\
    & = 
    \frac{ \eta^{1-\beta} t^{\theta \beta}}{ (t - t^{\star})^{\theta} }
    \sum_{k=1 }^{t^{\star}} 
    \frac{ 1}{ k^{\beta} }\\
    & \leq 2 \eta^{1-\beta} t^{\theta (\beta-1) }
    \frac{ (t^{\star})^{1-\beta}}{1-\beta},
\end{align*}
where the last inequality follows from an integral bound as well as using that
$\frac{ t^{\theta \beta}}{ (t - t^{\star})^{\theta} } 
= \frac{ t^{\theta (\beta-1)}}{ (1 - \frac{t^{\star}}{t} )^{\theta} }
\leq 
2t^{\theta (\beta-1)}$ from $t \geq 2 t^{\star}$.
Splitting the summation at $k=t$, plugging the above two bounds into the \textbf{Poorly-Mixed Network Error} term and using $(\eta t^{\star})^{\alpha} \geq \eta$ from $\eta \leq \kappa^{-2} \leq 1$
yields a bound for $\alpha \in (0,1/2]$:
\begin{align*}
    & \textbf{Poorly-Mixed Network Error}\\
    & \leq 
    \frac{ 2 \|\mathcal{T}_{\rho}^{\alpha}\| t^{-\alpha \theta}}{\alpha} 
    (\eta t^\star)^{\alpha}
    + 
    \frac{ 2 t^{-(\alpha + 1/2)\theta } 
    \|\mathcal{T}^{\alpha}_{\rho}\|}{1/2 + \alpha} 
    \sqrt{\widetilde{\lambda}} (\eta t^{\star})^{1/2 + \alpha} 
    + \sqrt{2} \eta t^{-\theta}  \|\mathcal{T}_{\rho}\|  \\
    & \leq 
    6 \bigg( \frac{  \|\mathcal{T}_{\rho}^{\alpha}\| t^{-\alpha \theta}}{\alpha}
    \vee 
    \frac{  t^{-(\alpha + 1/2)\theta } \|\mathcal{T}^{\alpha}_{\rho}\|
    }{ 1/2 + \alpha}
    \vee t^{-\theta} \|\mathcal{T}_{\rho}\|
    \bigg)
    ( (\eta t^\star)^{\alpha} \vee \sqrt{\widetilde{\lambda}} (\eta t^{\star})^{1/2 + \alpha}).
\end{align*}
Now consider the case  $\alpha =0$. The summation for $\beta = 1$ in this case is bounded following the previous steps  
\begin{align*}
    \sum_{k=t-t^{\star}}^{t-1} \frac{ \eta_{k}}{\big( \sum_{j=k+1}^{t} \eta_j\big) }
    \leq 
    \frac{t^{\theta} }{(t- t^{\star})^{\theta}}
    \sum_{k=t-t^{\star}}^{t-1} \frac{1}{(t-k)}
    \leq 
    2^{1+\theta}  \log(t^{\star}),
\end{align*}
leading to the \textbf{Poorly-Mixed Network Error} bounded as  for $\alpha = 0$ from $\eta \|\mathcal{T}_{\rho}\| \leq 1$:
\begin{align*}
    \textbf{Poorly-Mixed Network Error}
    & \leq 
    2^{1+\theta} \log(t^{\star})
     + 
     4 t^{-\theta/2 } 
    \sqrt{\widetilde{\lambda}} (\eta t^{\star})^{1/2} 
    +\sqrt{2} \eta t^{-\theta} \|\mathcal{T}_{\rho}\|\\
    & \leq
    10 \log(t^{\star}) 
    (1 \vee (\sqrt{\widetilde{\lambda}} (\eta t^{\star})^{1/2})).
\end{align*}
Combining the two bounds for $\alpha=0$ and $\alpha \in (0,1/2]$ gives 
\begin{align*}
    & \textbf{Poorly-Mixed Network Error}\\
    & \leq 
    \log(t^{\star}) \bigg[
    6 \bigg( \frac{  \|\mathcal{T}_{\rho}^{\alpha}\| t^{-\alpha \theta}}{\alpha} 
    \!\vee\!
    \frac{  t^{-(\alpha + 1/2)\theta } \|\mathcal{T}^{\alpha}_{\rho}\|
    }{ 1/2 + \alpha}
    \!\vee\! t^{-\theta} \|\mathcal{T}_{\rho}\|
    \bigg)\mathbbm{1}_{\{\alpha \not= 0\}}
    \! + \! 
    10
    \bigg] 
    ( (\eta t^\star)^{\alpha} \vee \sqrt{\widetilde{\lambda}} (\eta t^{\star})^{1/2 + \alpha}).
\end{align*}

We now consider the terms
$\max_{k=1,\dots,t} \{\|\mathcal{T}_{\rho,\lambda}^{-1/2} N_{k,v}\|_{H}^2\}$
for both $\lambda$ and $\widetilde{\lambda}$. We use the high-probability bounds of Lemma \ref{lem:DistributedError:Bounds} to uniformly control $\|\mathcal{T}^{-1/2}_{\rho,\lambda} N_{k,v}\|_{H}^2$ for all $k=1,\dots,t$ and $v \in V$. For $w \in V$, let $\delta_{w} = \frac{\delta}{n}$. With probability at least $1-\delta_{w}$ the following holds for all $k=1,\dots,t$ and $\gamma^{\prime} \in [1,\gamma]$:
\begin{align*}
\|\mathcal{T}^{-1/2}_{\rho,\lambda} N_{k,w}\|_{H}^2
\leq 
16(R \kappa^{2r} + \sqrt{M})^2
\bigg(
\frac{\kappa}{m \sqrt{\lambda}} 
+
\frac{\sqrt{2 \sqrt{\nu} c_{\gamma^{\prime}}}}{\sqrt{ m \lambda^{\gamma^{\prime}}}} 
\bigg)^2
\log^2 \frac{4n}{\delta}. 
\end{align*}
We note that if the capacity assumption holds for $\gamma$, then it also holds for all $\gamma^{\prime} \in [1,\gamma]$.  
Applying a union bound, we get that the above holds with probability at least $ 1- \sum_{v \in V} \delta_{v} = 1- \delta$ for all $w \in V$ and $k=1,\dots,t$. Using Lemma \ref{Lem:TailBoundToExpectation}, the expectation of the maximum can be bounded for any $v \in V$ and $\gamma^{\prime} \in [1,\gamma]$ as follows:
\begin{align*}
& \E \big[ \max_{k =1,\dots,t}
\big\{ 
\|\mathcal{T}^{-1/2}_{\rho,\lambda} N_{k,v}\|_{H}^2
\big\}
\big]\\
& \leq 
16 
 (R \kappa^{2r} + \sqrt{M})^2
\bigg(
\frac{\kappa}{m \sqrt{\lambda}} 
+
\frac{\sqrt{2 \sqrt{\nu} c_{\gamma^{\prime}}}}{\sqrt{ m \lambda^{\gamma^{\prime}}}} 
\bigg)^2 \int_{0}^{1} \log^2 \frac{4n }{\delta} d\delta\\
& \leq 
96
 (R \kappa^{2r} + \sqrt{M})^2
\bigg(
\frac{\kappa}{m \sqrt{\lambda}} 
+
\frac{\sqrt{2 \sqrt{\nu} c_{\gamma^{\prime}}}}{\sqrt{ m \lambda^{\gamma^{\prime}}}} 
\bigg)^2
\log^2 4n,
\end{align*}
where we used $\int_{0}^{1} \log^2 \frac{4n }{\delta} d\delta \leq 6 \log^2 4n$.

Bringing together the bounds for the \textbf{Poorly-} and \textbf{Well-Mixed Network Error} with the above bound for the quantity $\E \big[ \max_{k =1,\dots,t}
\big\{ 
\|\mathcal{T}^{-1/2}_{\rho,\lambda} N_{k,v}\|_{H}^2
\big\}
\big]$ yields 

\begin{align*}
    & \E \bigg[ \bigg( 
\sum_{k=1}^{t} \sigma_2^{t-k+1} \eta_k 
\|\mathcal{T}^{1/2}_{\rho} \Pi_{t:k+1}(\mathcal{T}_{\rho}) N_{k,v}\|_{H}
\bigg)^2 \bigg]\\
&  \leq
96\log^2(4n) \log^2(t^{\star})  
(R \kappa^{2r} + \sqrt{M})^2  \\
& \quad \times
\Bigg( 8 \|\mathcal{T}_{\rho}\|^2 
\Big(
\frac{\kappa}{m \sqrt{\lambda}} 
+
\frac{\sqrt{2 \sqrt{\nu} c_{\gamma}}}{\sqrt{ m \lambda^{\gamma}}} 
\Big)^2  \eta^2 t^{2(1-c)} \\
&  \quad\quad + 
2\Big[
    6 \Big( \frac{  \|\mathcal{T}_{\rho}^{\alpha}\| t^{-\alpha \theta}}{\alpha} 
    \vee 
    \frac{  t^{-(\alpha + 1/2)\theta } \|\mathcal{T}^{\alpha}_{\rho}\|
    }{ 1/2 + \alpha}
    \vee t^{-\theta} \|\mathcal{T}_{\rho}\| \Big)\mathbbm{1}_{\{\alpha \not= 0\}}
    \! + \!
    10
    \Big]^2 
    \Big(
\frac{\kappa}{m \sqrt{\widetilde{\lambda}}} 
+
\frac{\sqrt{2 \sqrt{\nu} c_{\gamma^{\prime}}}}{\sqrt{ m \widetilde{\lambda}^{\gamma^{\prime}}}} 
\Big)^2  \\
& \quad\quad \times \Big( (\eta t^{\star})^{2\alpha} \vee \widetilde{\lambda} (\eta t^{\star})^{1 + 2\alpha} \Big) 
\Bigg).
\end{align*}
Let $\lambda = \|\mathcal{T}_{\rho}\|$ and $\widetilde{\lambda} = \frac{\|\mathcal{T}_{\rho}\|}{\eta t^{\star}}$. The bound
\begin{align*}
\frac{1}{m \sqrt{\widetilde{\lambda}}} 
+
\frac{1}{\sqrt{ m \widetilde{\lambda}^{\gamma^{\prime}}}} 
& \leq 
 \frac{2}{\sqrt{m}}
 \bigg( \frac{1}{\sqrt{m \|\mathcal{T}_{\rho}\| (\eta t^\star)^{-1} }} 
 \vee 
 \frac{1}{ \|\mathcal{T}_{\rho}\|^{\gamma^{\prime}/2} (\eta t^\star)^{-\gamma^{\prime}/2} }
\bigg)\\
& \leq 
\frac{2}{\sqrt{m (\|\mathcal{T}_{\rho}\| \wedge \|\mathcal{T}_{\rho}\|^{\gamma^{\prime}}})}
 \Big( \sqrt{\eta t^{\star}/m} \vee (\eta t^{\star})^{\gamma^{\prime}/2} \Big)
\end{align*}
allows the expected squared series to be bounded as follows:
\begin{align*}
   & \E \bigg[ \bigg( 
\sum_{k=1}^{t} \sigma_2^{t-k+1} \eta_k 
\|\mathcal{T}^{1/2}_{\rho} \Pi_{t:k+1}(\mathcal{T}_{\rho}) N_{k,v}\|_{H}
\bigg)^2 \bigg]\\
&  \leq
\frac{\widetilde{a} \log^2(4n) \log^2 (t^{\star}) }{m} 
\Big( (\eta t^{1-c})^2 \vee 
 (m^{-1} (\eta t^{\star})^{1 + 2\alpha})  \vee (\eta t^{\star})^{\gamma^{\prime} + 2 \alpha}
\Big)  
\end{align*}
where \\
$\widetilde{a} = \frac{ 1152 
(R \kappa^{2r} + \sqrt{M})^2 (\kappa + \sqrt{2 \sqrt{\nu} c_{\gamma^{\prime}}})^2 
(\|\mathcal{T}_{\rho}\| \vee 1)^2}{ \|\mathcal{T}_{\rho}\| \wedge \|\mathcal{T}_{\rho}\|^{\gamma^{\prime}} }
\Big[
    6 \Big( \frac{  \|\mathcal{T}_{\rho}^{\alpha}\| t^{-\alpha \theta}}{\alpha} 
    \! \vee \! 
    \frac{  t^{-(\alpha + 1/2)\theta } \|\mathcal{T}^{\alpha}_{\rho}\|
    }{ 1/2 + \alpha}
    \! \vee \! t^{-\theta} \|\mathcal{T}_{\rho}\| \Big)\mathbbm{1}_{\{\alpha \not= 0\}}
    \!+\! 10
    \Big]^2$. 
    The choice $c = 1+ r$ yields the final result. 
\end{proof}

\subsubsection{Analysis of \textbf{Residual Empirical Covariance Error}}
\label{sec:DecentralisedError:Termb}

In this section we develop a bound for the \textbf{Residual Empirical Covariance Error} term in \eqref{equ:DistributedError:Breakdown}. The final result is presented in Lemma \ref{lem:HighProbBoundTermb}.

The following proposition writes the \textbf{Residual Empirical Covariance Error} in terms of a series of quantities that will be later controlled. 
\begin{proposition}
\label{prop:QuadraticExpansion}
Let $t \geq k+1$. For any $w_{t:k+1} \in V^{t-k}$ we have
\begin{align*}
& \Pi_{t:k+1}(\mathcal{T}_{\mathbf{x}_{w_{t:k+1}}})
 = 
\Pi_{t:k+1}(\mathcal{T}_{\rho}) + 
\sum_{j=k+1}^{t} \eta_{j} \Pi_{t:j+1}(
\mathcal{T}_{\rho})(\mathcal{T}_{\rho} - \mathcal{T}_{\mathbf{x}_{w_{j}}}) \Pi_{j-1:k+1}(\mathcal{T}_{\mathbf{x}_{w_{j-1:k+1}}}).
\end{align*}
\end{proposition}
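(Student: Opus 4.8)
The statement is a telescoping identity for the product of perturbed operators $\Pi_{t:k+1}(\mathcal{T}_{\mathbf{x}_{w_{t:k+1}}}) = (I-\eta_t \mathcal{T}_{\mathbf{x}_{w_t}})\cdots(I-\eta_{k+1}\mathcal{T}_{\mathbf{x}_{w_{k+1}}})$ in terms of the ``reference'' product $\Pi_{t:k+1}(\mathcal{T}_\rho)$ plus a sum of single-deviation corrections. The plan is to prove it by induction on $t$ (with $k$ fixed), which is the natural way to unwind a product of noncommuting factors.

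\textbf{Base case.} For $t = k+1$ the left side is $I-\eta_{k+1}\mathcal{T}_{\mathbf{x}_{w_{k+1}}}$ and the right side is $(I-\eta_{k+1}\mathcal{T}_\rho) + \eta_{k+1}\Pi_{k+1:k+2}(\mathcal{T}_\rho)(\mathcal{T}_\rho - \mathcal{T}_{\mathbf{x}_{w_{k+1}}})\Pi_{k:k+1}(\mathcal{T}_{\mathbf{x}_{w_{k:k+1}}})$. Using the conventions $\Pi_{k+1:k+2}(\mathcal{T}_\rho) = I$ and $\Pi_{k:k+1}(\cdot) = I$, the correction term is $\eta_{k+1}(\mathcal{T}_\rho - \mathcal{T}_{\mathbf{x}_{w_{k+1}}})$, and the two sides agree.

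\textbf{Inductive step.} Assume the identity holds for $t-1$. Write $\Pi_{t:k+1}(\mathcal{T}_{\mathbf{x}_{w_{t:k+1}}}) = (I-\eta_t \mathcal{T}_{\mathbf{x}_{w_t}})\,\Pi_{t-1:k+1}(\mathcal{T}_{\mathbf{x}_{w_{t-1:k+1}}})$. The key algebraic move is to split the leading factor as $I - \eta_t \mathcal{T}_{\mathbf{x}_{w_t}} = (I-\eta_t\mathcal{T}_\rho) + \eta_t(\mathcal{T}_\rho - \mathcal{T}_{\mathbf{x}_{w_t}})$, so that
\begin{align*}
\Pi_{t:k+1}(\mathcal{T}_{\mathbf{x}_{w_{t:k+1}}})
&= (I-\eta_t\mathcal{T}_\rho)\,\Pi_{t-1:k+1}(\mathcal{T}_{\mathbf{x}_{w_{t-1:k+1}}})
+ \eta_t(\mathcal{T}_\rho-\mathcal{T}_{\mathbf{x}_{w_t}})\,\Pi_{t-1:k+1}(\mathcal{T}_{\mathbf{x}_{w_{t-1:k+1}}}).
\end{align*}
Into the first term substitute the inductive hypothesis for $\Pi_{t-1:k+1}(\mathcal{T}_{\mathbf{x}_{w_{t-1:k+1}}})$ and distribute $(I-\eta_t\mathcal{T}_\rho)$: the ``reference'' part becomes $(I-\eta_t\mathcal{T}_\rho)\Pi_{t-1:k+1}(\mathcal{T}_\rho) = \Pi_{t:k+1}(\mathcal{T}_\rho)$, and each summand $\eta_j\Pi_{t-1:j+1}(\mathcal{T}_\rho)(\mathcal{T}_\rho-\mathcal{T}_{\mathbf{x}_{w_j}})\Pi_{j-1:k+1}(\cdots)$ for $k+1\le j\le t-1$ gets premultiplied by $(I-\eta_t\mathcal{T}_\rho)$, which merges with $\Pi_{t-1:j+1}(\mathcal{T}_\rho)$ to give $\Pi_{t:j+1}(\mathcal{T}_\rho)$. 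The leftover second term $\eta_t(\mathcal{T}_\rho-\mathcal{T}_{\mathbf{x}_{w_t}})\Pi_{t-1:k+1}(\mathcal{T}_{\mathbf{x}_{w_{t-1:k+1}}})$ is exactly the $j=t$ summand (since $\Pi_{t:t+1}(\mathcal{T}_\rho)=I$). Collecting everything reproduces the claimed formula with the sum now running from $j=k+1$ to $j=t$, completing the induction.

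\textbf{Main obstacle.} There is no real obstacle here — the result is essentially bookkeeping — but the one point requiring care is the correct handling of the index conventions at the boundaries ($\Pi_{t:t+1}=I$, $\Pi_{k:k+1}=I$, empty products) so that the extremal summands $j=k+1$ and $j=t$ come out right; getting the range of $j$ and the split points of the $\Pi$ factors to line up is where a slip would occur. One could alternatively phrase the proof as a single telescoping sum, writing $\Pi_{t:k+1}(\mathcal{T}_{\mathbf{x}}) - \Pi_{t:k+1}(\mathcal{T}_\rho) = \sum_{j=k+1}^{t}\big[A_j - A_{j-1}\big]$ where $A_j := \Pi_{t:j+1}(\mathcal{T}_\rho)\,\Pi_{j:k+1}(\mathcal{T}_{\mathbf{x}})$ interpolates between $A_t = \Pi_{t:k+1}(\mathcal{T}_\rho)$ — wait, one must order it so $A_k = \Pi_{t:k+1}(\mathcal{T}_{\mathbf{x}})$ and $A_t = \Pi_{t:k+1}(\mathcal{T}_\rho)$, and then $A_{j-1}-A_j = \Pi_{t:j+1}(\mathcal{T}_\rho)(\mathcal{T}_{\mathbf{x}_{w_j}}-\mathcal{T}_\rho)\eta_j\Pi_{j-1:k+1}(\mathcal{T}_{\mathbf{x}})$, matching the claim up to sign; either route works, and the inductive one is cleanest to write out.
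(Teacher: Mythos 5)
Your induction is correct and is essentially the same argument as the paper's: the paper adds and subtracts $(I-\eta_t\mathcal{T}_\rho)\Pi_{t-1:k+1}(\mathcal{T}_{\mathbf{x}_{w_{t-1:k+1}}})$ to obtain the one-step recursion $\Pi_{t:k+1}(\mathcal{T}_{\mathbf{x}})-\Pi_{t:k+1}(\mathcal{T}_\rho)=\eta_t(\mathcal{T}_\rho-\mathcal{T}_{\mathbf{x}_{w_t}})\Pi_{t-1:k+1}(\mathcal{T}_{\mathbf{x}})+(I-\eta_t\mathcal{T}_\rho)\bigl[\Pi_{t-1:k+1}(\mathcal{T}_{\mathbf{x}})-\Pi_{t-1:k+1}(\mathcal{T}_\rho)\bigr]$ and then unravels it, which is exactly your inductive step written as a recursion. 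Your boundary checks ($\Pi_{t:t+1}=I$, $\Pi_{k:k+1}=I$) and the $j=t$ summand all line up with the claimed formula, so nothing is missing.
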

\begin{proof}
Adding and subtracting $(I - \eta_{t} \mathcal{T}_{\rho}) 
\Pi_{t-1: k+1}(\mathcal{T}_{\mathbf{x}_{w_{t-1:k+1}}})$ and unravelling yields the following:
\begin{align*}
& \Pi_{t:k+1}(\mathcal{T}_{\mathbf{x}_{w_{t:k+1}}}) 
- 
\Pi_{t:k+1}(\mathcal{T}_{\rho}) \\
& = 
(I - \eta_t \mathcal{T}_{\mathbf{x}_{w_{t}}}) 
\Pi_{t-1:k+1}(\mathcal{T}_{\mathbf{x}_{w_{t-1:k+1}}})
- 
(I - \eta_{t} \mathcal{T}_{\rho}) \Pi_{t-1:k+1}(\mathcal{T}_{\rho})\\
& = 
(I - \eta_t \mathcal{T}_{\mathbf{x}_{w_{t}}}) 
\Pi_{t-1:k+1}(\mathcal{T}_{\mathbf{x}_{w_{t-1:k+1}}})
-
(I - \eta_{t} \mathcal{T}_{\rho}) 
\Pi_{t-1:k+1}(\mathcal{T}_{\mathbf{x}_{w_{t-1:k+1}}})\\
& \quad\ + 
(I - \eta_{t} \mathcal{T}_{\rho}) 
\Pi_{t-1:k+1}(\mathcal{T}_{\mathbf{x}_{w_{t-1:k+1}}})
- 
(I - \eta_{t} \mathcal{T}_{\rho}) \Pi_{t-1:k+1}(\mathcal{T}_{\rho})\\
& = \eta_{t}(\mathcal{T}_{\rho} - \mathcal{T}_{\mathbf{x}_{w_{t}}})
\Pi_{t-1:k+1}(\mathcal{T}_{\mathbf{x}_{w_{t-1:k+1}}})
+
(I - \eta_{t} \mathcal{T}_{\rho}) 
\big[
 \Pi_{t-1:k+1}(\mathcal{T}_{\mathbf{x}_{w_{t-1:k+1}}})
-\Pi_{t-1:k+1}(\mathcal{T}_{\rho})
 \big] 
 \\
 & = 
 \sum_{j=k+1}^{t} \eta_{j} \Pi_{t:j+1}(\mathcal{T}_{\rho})(
 \mathcal{T}_{\rho} - \mathcal{T}_{\mathbf{x}_{w_{j}}}) \Pi_{j-1:k+1}(\mathcal{T}_{\mathbf{x}_{w_{j-1:k+1}}}).
\end{align*}
\end{proof}
Applying Proposition \ref{prop:QuadraticExpansion} to the  \textbf{Residual Empirical Covariance Error} term, using the triangle equality, yields
\begin{align}
& \bigg\| \sum_{k=1}^{t} \eta_{k} \sum_{w_{t:k} \in V^{t-k+1}} 
	\Delta(w_{t:k})
	\mathcal{T}^{1/2}_{\rho}
	\big( 
	\Pi_{t:k+1}(\mathcal{T}_{\mathbf{x}_{w_{t:k+1}}})
	- \Pi_{t:k+1}(\mathcal{T}_{\rho})
	\big)N_{k,w_{k}} \bigg\|_{H} \nonumber \\
& \leq 
\sum_{k=1}^{t-1} \eta_{k}
	\sum_{w_{t:k} \in V^{t-k+1}} 
	|\Delta(w_{t:k})|
	\sum_{j=k+1}^{t} \eta_{j}\nonumber\\
	& \quad\ \times 
	\| \mathcal{T}_{\rho}^{1/2} \Pi_{t:j+1}(\mathcal{T}_{\rho})
	(\mathcal{T}_{\rho} - \mathcal{T}_{\mathbf{x}_{w_{j}}})
	\Pi_{j-1:k+1}(\mathcal{T}_{\mathbf{x}_{w_{j-1:k+1}}})
	N_{k,w_{k}}\|_{H},
\label{prop:equ:FirstOrderError}
\end{align}
where the quantity is zero in the case $k=t$.
For $j \in \{2,\dots,t-1\}$ the above includes  the quantity $\Pi_{t:j+1}(\mathcal{T}_{\rho})$. This can be  interpreted in a similar manner to the filter function  associated for gradient descent, see for instance \cite[Example 2]{lin2018optimal}. In this context it is used to control the growth of the above error term, which is absent in the case $j=t$. This yields the following proposition. 
\begin{proposition}
\label{prop:DistributedError:Bound1_new}
Let Assumptions \ref{Assumption:Moments}, \ref{Assumption:Source}, \ref{Assumption:Capacity} hold  with $r \geq 1/2$ and $\eta_t = \eta t^{-\theta}$ for $t \in \mathbb{N}$ with $\eta \kappa^2 \leq 1$, $\theta \in (0,1)$. Fix $\lambda,\widetilde{\lambda} > 0$ and $\delta \in (0,1)$. With probability at least $1-\delta$ the following hold: for any $t-1 \geq j  \geq k+1$ and path $w_{t:k} \in V^{t-k+1}$ we have
\begin{align}
 &\| \mathcal{T}_{\rho}^{1/2} \Pi_{t:j+1}(\mathcal{T}_{\rho})
	(\mathcal{T}_{\rho} - \mathcal{T}_{\mathbf{x}_{w_{j}}})
	\Pi_{j-1:k+1}(\mathcal{T}_{\mathbf{x}_{w_{j-1:k+1}}})
	N_{k,w_{k}}\|_{H}
	\nonumber \\
	\nonumber
	 & \leq 
  2 \kappa \|\mathcal{T}^{1/2}_{\rho,\widetilde{\lambda}}\| 
\bigg( 
\frac{1}{ \sum_{i=j+1}^{t} \eta_i } + \bigg( \frac{\lambda}{  \sum_{i=j+1}^{t} \eta_i } \bigg)^{1/2} \bigg)   
\bigg( 
\frac{2 \kappa}{m \sqrt{ \lambda} } 
+ \frac{\sqrt{c_{\gamma}}}{\sqrt{m \lambda^{\gamma}}} 
\bigg)
\log \bigg(  \frac{4 n}{\delta} \bigg) \\
& \quad\ \times \max_{w \in V} \big\{ 
\|\mathcal{T}^{-1/2}_{\rho,\widetilde{\lambda}} N_{k,w} \|_{H}
\big\}\label{equ:DistributedError:Bound1_new:1},
\end{align}
for any $t-1 \geq k \geq 1$ and nodes $w_{t},w_{k} \in V$ 
\begin{align}
& \| \mathcal{T}_{\rho}^{1/2} (\mathcal{T}_{\rho} - \mathcal{T}_{\mathbf{x}_{w_{t}}})
N_{k,w_{k}}\|_{H}
\nonumber\\
\label{equ:DistributedError:Bound1_new:2}
& \leq 
2 \kappa 
\|\mathcal{T}_{\rho}^{1/2} 
\mathcal{T}_{\rho,\lambda}^{1/2} \|
\|\mathcal{T}_{\rho,\widetilde{\lambda}}^{1/2} \|
\bigg( 
\frac{2 \kappa}{m \sqrt{ \lambda} } 
+ \frac{\sqrt{c_{\gamma}}}{\sqrt{m \lambda^{\gamma}}} 
\bigg) \log \frac{4 n }{\delta}
\max_{w \in V}\big\{ 
\|\mathcal{T}^{-1/2}_{\rho,\widetilde{\lambda}}N_{k,w}\|_{H}\big\}.
\end{align}
\end{proposition}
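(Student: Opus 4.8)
The plan is to prove both inequalities by the same device: insert resolvents into the operator product so that (i) each fluctuation $\mathcal{T}_\rho-\mathcal{T}_{\mathbf{x}_w}$ is preceded by $\mathcal{T}_{\rho,\lambda}^{-1/2}$, enabling the Hilbert--Schmidt concentration estimate \eqref{equ:lem:DistributedError:2}; (ii) the noise vector $N_{k,w_k}$ is preceded by $\mathcal{T}_{\rho,\widetilde\lambda}^{-1/2}$, producing the factor $\|\mathcal{T}_{\rho,\widetilde\lambda}^{-1/2}N_{k,w_k}\|_H$; and (iii) the polynomial $\Pi_{t:j+1}(\mathcal{T}_\rho)$ in \eqref{equ:DistributedError:Bound1_new:1} is paired with powers of $\mathcal{T}_\rho$, so that the contraction bound of Lemma \ref{Lem:OperatorNorm} applies. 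Everything then follows from submultiplicativity of the operator norm together with three elementary facts: $\|T\|\le\|T\|_{HS}$; $\|\Pi_{j-1:k+1}(\mathcal{T}_{\mathbf{x}_{w_{j-1:k+1}}})\|\le 1$, valid since $\|\mathcal{T}_{\mathbf{x}_w}\|\le\kappa^2$ under Assumption \ref{Assumption:BoundedProduct} and $\eta_s\le\eta\le\kappa^{-2}$ imply $0\preceq I-\eta_s\mathcal{T}_{\mathbf{x}_w}\preceq I$; and $\|\mathcal{T}_{\rho,\widetilde\lambda}^{-1/2}N_{k,w_k}\|_H\le\max_{w\in V}\|\mathcal{T}_{\rho,\widetilde\lambda}^{-1/2}N_{k,w}\|_H$.

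I would first fix $\lambda,\widetilde\lambda>0$ and $\delta\in(0,1)$, apply \eqref{equ:lem:DistributedError:2} with confidence level $\delta/n$ to each agent, and union bound over the $n$ agents. This produces an event of probability at least $1-\delta$ on which, simultaneously for every $w\in V$,
\[
\|\mathcal{T}_{\rho,\lambda}^{-1/2}(\mathcal{T}_\rho-\mathcal{T}_{\mathbf{x}_w})\|_{HS}\ \le\ 2\kappa\Big(\frac{2\kappa}{m\sqrt\lambda}+\frac{\sqrt{c_\gamma}}{\sqrt{m\lambda^\gamma}}\Big)\log\frac{4n}{\delta},
\]
and all subsequent estimates are carried out on this event; this is the only randomness used, since the $N$-terms are kept as a maximum rather than bounded.

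For \eqref{equ:DistributedError:Bound1_new:1} I would rewrite the operator product as $\big(\mathcal{T}_\rho^{1/2}\Pi_{t:j+1}(\mathcal{T}_\rho)\mathcal{T}_{\rho,\lambda}^{1/2}\big)\big(\mathcal{T}_{\rho,\lambda}^{-1/2}(\mathcal{T}_\rho-\mathcal{T}_{\mathbf{x}_{w_j}})\big)\big(\Pi_{j-1:k+1}(\mathcal{T}_{\mathbf{x}_{w_{j-1:k+1}}})\mathcal{T}_{\rho,\widetilde\lambda}^{1/2}\big)\big(\mathcal{T}_{\rho,\widetilde\lambda}^{-1/2}N_{k,w_k}\big)$ and take operator/vector norms factor by factor. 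The first factor is bounded, via $(x+\lambda)^{1/2}\le x^{1/2}+\lambda^{1/2}$ in the functional calculus followed by Lemma \ref{Lem:OperatorNorm} with $a=1$ and $a=\tfrac12$ (and $e\ge1$, $(\tfrac12)^{1/2}\le1$), by $\big(\sum_{i=j+1}^t\eta_i\big)^{-1}+\big(\lambda/\sum_{i=j+1}^t\eta_i\big)^{1/2}$; the second by its Hilbert--Schmidt norm, i.e. by the bound displayed above; the third by $\|\mathcal{T}_{\rho,\widetilde\lambda}^{1/2}\|$ (using $\|\Pi_{j-1:k+1}\|\le1$); and the fourth by $\max_{w}\|\mathcal{T}_{\rho,\widetilde\lambda}^{-1/2}N_{k,w}\|_H$. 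The product of these four quantities is precisely the right-hand side of \eqref{equ:DistributedError:Bound1_new:1}. Inequality \eqref{equ:DistributedError:Bound1_new:2} is the same argument with both $\Pi$-polynomials absent — the $j=t$ term of \eqref{prop:equ:FirstOrderError} after discarding $\Pi_{t-1:k+1}(\mathcal{T}_{\mathbf{x}_{w_{t-1:k+1}}})$ by its norm being $\le1$ — so one writes $\mathcal{T}_\rho^{1/2}(\mathcal{T}_\rho-\mathcal{T}_{\mathbf{x}_{w_t}})N_{k,w_k}=\big(\mathcal{T}_\rho^{1/2}\mathcal{T}_{\rho,\lambda}^{1/2}\big)\big(\mathcal{T}_{\rho,\lambda}^{-1/2}(\mathcal{T}_\rho-\mathcal{T}_{\mathbf{x}_{w_t}})\big)\big(\mathcal{T}_{\rho,\widetilde\lambda}^{1/2}\big)\big(\mathcal{T}_{\rho,\widetilde\lambda}^{-1/2}N_{k,w_k}\big)$ and bounds each factor as before.

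The computation is essentially bookkeeping once the splitting is fixed; the only genuine point of care is that $\mathcal{T}_\rho$ and its resolvents do not commute with the empirical operators $\mathcal{T}_{\mathbf{x}_w}$, so the product must be broken exactly at the slots where a resolvent can legitimately be inserted, and one has to accept the coarse estimate $\|\Pi_{j-1:k+1}(\mathcal{T}_{\mathbf{x}_{w_{j-1:k+1}}})\mathcal{T}_{\rho,\widetilde\lambda}^{1/2}\|\le\|\mathcal{T}_{\rho,\widetilde\lambda}^{1/2}\|$ — harmless here because the contraction in $j$ has already been pulled out of the leftmost factor and $\widetilde\lambda$ is chosen later. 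The remaining minor point is that the union bound in the first step is what turns $\log(4/\delta)$ into $\log(4n/\delta)$.
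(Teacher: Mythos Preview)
Your proposal is correct and matches the paper's own proof essentially line for line: the same resolvent insertions $\mathcal{T}_{\rho,\lambda}^{1/2}\mathcal{T}_{\rho,\lambda}^{-1/2}$ and $\mathcal{T}_{\rho,\widetilde\lambda}^{1/2}\mathcal{T}_{\rho,\widetilde\lambda}^{-1/2}$, the same factor-by-factor submultiplicativity, the same contraction bound $\|\Pi_{j-1:k+1}(\mathcal{T}_{\mathbf{x}_{w_{j-1:k+1}}})\|\le 1$, the same spectral-calculus estimate on $\|\mathcal{T}_\rho^{1/2}\Pi_{t:j+1}(\mathcal{T}_\rho)\mathcal{T}_{\rho,\lambda}^{1/2}\|$ via Lemma~\ref{Lem:OperatorNorm} with $a=1$ and $a=\tfrac12$, and the same union bound over agents turning $\log(4/\delta)$ into $\log(4n/\delta)$. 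The only cosmetic difference is that you set up the high-probability event first and then carry out the deterministic bounds, whereas the paper interleaves the two; the content is identical.
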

\begin{proof}
Fix $t-1 \geq j  \geq k+1$ and $w_{t:k} \in V^{t-k+1}$. Begin by proving \eqref{equ:DistributedError:Bound1_new:1}. Expanding the norm,
\begin{align*}
& \| \mathcal{T}_{\rho}^{1/2} \Pi_{t:j+1}(\mathcal{T}_{\rho})
	(\mathcal{T}_{\rho} - \mathcal{T}_{\mathbf{x}_{w_{j}}})
	\Pi_{j-1:k+1}(\mathcal{T}_{\mathbf{x}_{w_{j-1:k+1}}})
	N_{k,w_{k}}\|_{H}\\
& 
=
\| \mathcal{T}_{\rho}^{1/2} \Pi_{t:j+1}(\mathcal{T}_{\rho})
\mathcal{T}_{\rho,\lambda}^{1/2}
\mathcal{T}_{\rho,\lambda}^{-1/2}
	(\mathcal{T}_{\rho} - \mathcal{T}_{\mathbf{x}_{w_{j}}})
	\Pi_{j-1:k+1}(\mathcal{T}_{\mathbf{x}_{w_{j-1:k+1}}})
	\mathcal{T}_{\rho,\widetilde{\lambda}}^{1/2}
\mathcal{T}_{\rho,\widetilde{\lambda}}^{-1/2}
	N_{k,w_{k}}\|_{H}\\
& \leq 
\| \mathcal{T}_{\rho}^{1/2} \Pi_{t:j+1}(\mathcal{T}_{\rho})
\mathcal{T}_{\rho,\lambda}^{1/2}\|
\|\mathcal{T}_{\rho,\lambda}^{-1/2}
	(\mathcal{T}_{\rho} - \mathcal{T}_{\mathbf{x}_{w_{j}}})\|
\|\Pi_{j-1:k+1}(\mathcal{T}_{\mathbf{x}_{w_{j-1:k+1}}})\|
\|\mathcal{T}_{\rho,\widetilde{\lambda}}^{1/2}\|
\|\mathcal{T}_{\rho,\widetilde{\lambda}}^{-1/2}
	N_{k,w_{k}}\|_{H}\\
	& \leq 
	\| \mathcal{T}_{\rho}^{1/2} \Pi_{t:j+1}(\mathcal{T}_{\rho})
\mathcal{T}_{\rho,\lambda}^{1/2}\|
\|\mathcal{T}_{\rho,\lambda}^{-1/2}
	(\mathcal{T}_{\rho} - \mathcal{T}_{\mathbf{x}_{w_{j}}})\|
\|\mathcal{T}_{\rho,\widetilde{\lambda}}^{1/2}\|
\|\mathcal{T}_{\rho,\widetilde{\lambda}}^{-1/2}
	N_{k,w_{k}}\|_{H}\\
	& \leq 
		\| \mathcal{T}_{\rho}^{1/2} \Pi_{t:j+1}(\mathcal{T}_{\rho})
\mathcal{T}_{\rho,\lambda}^{1/2}\|
\|\mathcal{T}_{\rho,\lambda}^{-1/2}
	(\mathcal{T}_{\rho} - \mathcal{T}_{\mathbf{x}_{w_{j}}})\|
\|\mathcal{T}_{\rho,\widetilde{\lambda}}^{1/2}\|
\max_{w \in V} \big\{ 
\|\mathcal{T}_{\rho,\widetilde{\lambda}}^{-1/2}N_{k,w}\|_{H}\big\},
\end{align*}
where we used, from $\eta \kappa^2 \leq 1$ and $\eta \|\mathcal{T}_{\mathbf{x}_{v}}\| \leq 1$ for any $v \in V$, that $\|\Pi_{j-1:k+1}(\mathcal{T}_{\mathbf{x}_{w_{j-1:k+1}}})\| \leq 1$  for $j \geq k+2$. 
The first operator norm is bounded  as follows by using techniques similar to those used to prove Lemma \ref{Lem:OperatorNorm}:
\begin{align}
\|\mathcal{T}^{1/2}_{\rho}
 \Pi_{t:j+1}(\mathcal{T}_{\rho}) 
 \mathcal{T}^{1/2}_{\rho,\lambda}\| 
 & \leq 
 \bigg( 
\frac{1}{e \sum_{i=j+1}^{t} \eta_i } + \bigg( \frac{\lambda}{ 2e \sum_{i=j+1}^{t} \eta_i } \bigg)^{1/2} 
\bigg) \nonumber \\
\label{equ:NormOperatorLambdaBound:1}
& \leq 
\bigg( 
\frac{1}{\sum_{i=j+1}^{t} \eta_i } + \bigg( \frac{\lambda}{ \sum_{i=j+1}^{t} \eta_i } \bigg)^{1/2} 
\bigg).
\end{align}
We proceed to construct a high-probability bound for the quantity $\| (\mathcal{T}_{\rho} + \lambda I )^{-1/2}(\mathcal{T}_{\rho} - \mathcal{T}_{\mathbf{x}_{w_{j}}})
\|$, for any  $w_{j} \in V$. For $v \in V$, let $\delta_{v} = \frac{\delta}{n}$ and apply \eqref{equ:lem:DistributedError:2} from Lemma \ref{lem:DistributedError:Bounds} to obtain the following\footnote{
  For an operator $L$ note that $\|L\| = \|L L^{\star}\|^{1/2}$ where $L^{\star}$ is the adjoint of $L$. The Hilbert-Schmidt norm bounds the operator norm as we have $\|L\|^{2} = \|L L^{\star}\| \leq \trace\big( L L^{\star}\big) = \|L\|^2_{HS}$.
} with probability at least $1-\delta_{v}$:
\begin{align*}
\| (\mathcal{T}_{\rho} + \lambda I )^{-1/2}(\mathcal{T}_{\rho} - \mathcal{T}_{\mathbf{x}_{v}})
\|
& \leq 
\| (\mathcal{T}_{\rho} + \lambda  I )^{-1/2}(\mathcal{T}_{\rho} - \mathcal{T}_{\mathbf{x}_{v}})
\|_{HS}
 \leq 
2 \kappa 
\bigg( 
\frac{2 \kappa}{m \sqrt{ \lambda} } 
+ \frac{\sqrt{c_{\gamma}}}{\sqrt{m \lambda^{\gamma}}} 
\bigg) \log \frac{4 n }{\delta}.
\end{align*}
Applying a union bound yields the following with probability at least $1-\sum_{v \in V} \delta_{v} = 1-\delta$:
\begin{align}
\label{equ:UnionHighProbability}
\| (\mathcal{T}_{\rho} + \lambda I )^{-1/2}(\mathcal{T}_{\rho} - \mathcal{T}_{\mathbf{x}_{v}})
\| 
\leq 2 \kappa 
\bigg( 
\frac{2 \kappa}{m \sqrt{ \lambda} } 
+ \frac{\sqrt{c_{\gamma}}}{\sqrt{m \lambda^{\gamma}}} 
\bigg) \log \frac{4 n }{\delta}
\quad 
\forall v \in V.
\end{align}
The result \eqref{equ:DistributedError:Bound1_new:1} then comes from plugging \eqref{equ:NormOperatorLambdaBound:1} and \eqref{equ:UnionHighProbability} into the expanded quantity at the start of the proof.

To prove \eqref{equ:DistributedError:Bound1_new:2}, fix $t-1 \geq k \geq 1$ and $w_t, w_{k} \in V$.
Expanding the norm we get 
\begin{align*}
\| \mathcal{T}_{\rho}^{1/2} (\mathcal{T}_{\rho} - \mathcal{T}_{\mathbf{x}_{w_{t}}})
N_{k,w_{k}}\|_{H}
& = 
\| \mathcal{T}_{\rho}^{1/2} 
\mathcal{T}_{\rho,\lambda}^{1/2} 
\mathcal{T}_{\rho,\lambda}^{-1/2}
(\mathcal{T}_{\rho} - \mathcal{T}_{\mathbf{x}_{w_{t}}})
\mathcal{T}_{\rho,\widetilde{\lambda}}^{1/2} 
\mathcal{T}_{\rho,\widetilde{\lambda}}^{-1/2}
N_{k,w_{k}}\|_{H}\\
& \leq 
\|\mathcal{T}_{\rho}^{1/2} 
\mathcal{T}_{\rho,\lambda}^{1/2} \|
\|\mathcal{T}_{\rho,\lambda}^{-1/2}
(\mathcal{T}_{\rho} - \mathcal{T}_{\mathbf{x}_{w_{t}}})\|
\|\mathcal{T}_{\rho,\widetilde{\lambda}}^{1/2} \|
\|\mathcal{T}_{\rho,\widetilde{\lambda}}^{-1/2}
N_{k,w_{k}}\|_{H}\\
& \leq 
\|\mathcal{T}_{\rho}^{1/2} 
\mathcal{T}_{\rho,\lambda}^{1/2} \|
\|\mathcal{T}_{\rho,\lambda}^{-1/2}
(\mathcal{T}_{\rho} - \mathcal{T}_{\mathbf{x}_{w_{t}}})\|
\|\mathcal{T}_{\rho,\widetilde{\lambda}}^{1/2} \|
\max_{w \in V}\big\{ \|\mathcal{T}^{-1/2}_{\rho,\widetilde{\lambda}}N_{k,w}\|_{H}\big\}.
\end{align*}
The result follows by using \eqref{equ:UnionHighProbability} to bound $\|\mathcal{T}_{\rho,\lambda}^{-1/2}
(\mathcal{T}_{\rho} - \mathcal{T}_{\mathbf{x}_{w_{t}}})\|$.
\end{proof}
The following proposition utilise the previous proposition to bound the summation \eqref{prop:equ:FirstOrderError}.
\begin{proposition}
\label{prop:DistributedError:Bound2_new}
Let the assumptions of Proposition \ref{prop:DistributedError:Bound1_new} hold. For any $v \in V$, with probability at least $1-\delta$ we have
\begin{align*}
\textbf{Resid. Emp. Cov. Error}
\leq 
8\kappa \bigg( \frac{2 \kappa}{m \sqrt{ \lambda} } 
+ \frac{\sqrt{c_{\gamma}}}{\sqrt{m \lambda^{\gamma}}} 
\bigg) \log \frac{4n}{\delta}
\bigg[ \textbf{B}_{1} + \textbf{B}_{2} \bigg],
\end{align*}
where
\begin{align*}
\textbf{B}_1 & = 
\|\mathcal{T}^{1/2}_{\rho} \mathcal{T}^{1/2}_{\rho,\lambda}\|
\|\mathcal{T}^{1/2}_{\rho,\widetilde{\lambda}}\| 
\eta_{t} \sum_{k=1}^{t-1} \eta_k
\max_{w \in V}\big\{ \|\mathcal{T}^{-1/2}_{\rho,\widetilde{\lambda}}N_{k,w}\|_{H}\big\},\\
\textbf{B}_{2}
& = \|\mathcal{T}^{1/2}_{\rho,\widetilde{\lambda}}\|
\sum_{k=1}^{t-2}\eta_{k} \sum_{j=k+1}^{t-1} \eta_{j}
\bigg( 
\frac{1}{ \sum_{i=j+1}^{t} \eta_i } + \bigg( \frac{\lambda}{  \sum_{i=j+1}^{t} \eta_i } \bigg)^{1/2} \bigg) 
\max_{w \in V}\big\{ \|\mathcal{T}^{-1/2}_{\rho,\widetilde{\lambda}}N_{k,w}\|_{H}\big\}.
\end{align*}
\end{proposition}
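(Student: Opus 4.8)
The plan is to start from the bound \eqref{prop:equ:FirstOrderError}, which has already reduced $\textbf{Resid. Emp. Cov. Error}$ to a triple sum over the iteration index $k$, the paths $w_{t:k}\in V^{t-k+1}$, and the intermediate index $j\in\{k+1,\dots,t\}$, of the norms $\| \mathcal{T}_{\rho}^{1/2} \Pi_{t:j+1}(\mathcal{T}_{\rho}) (\mathcal{T}_{\rho} - \mathcal{T}_{\mathbf{x}_{w_{j}}}) \Pi_{j-1:k+1}(\mathcal{T}_{\mathbf{x}_{w_{j-1:k+1}}}) N_{k,w_{k}}\|_{H}$, weighted by $\eta_{k}\eta_{j}|\Delta(w_{t:k})|$. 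First I would split the inner sum over $j$ into the boundary term $j=t$ and the remaining terms $j\le t-1$, since the filter factor $\Pi_{t:j+1}(\mathcal{T}_{\rho})$ equals the identity when $j=t$ and contracts otherwise; this split is exactly what produces the two pieces $\textbf{B}_{1}$ (from $j=t$) and $\textbf{B}_{2}$ (from $j\le t-1$).

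For the terms with $j\le t-1$ I would apply \eqref{equ:DistributedError:Bound1_new:1} directly, while for the $j=t$ term I would first note that $\|\Pi_{t-1:k+1}(\mathcal{T}_{\mathbf{x}_{w_{t-1:k+1}}})\|\le 1$ (each factor $I-\eta_{\ell}\mathcal{T}_{\mathbf{x}_{w_{\ell}}}$ is positive with operator norm at most one because $\eta\kappa^{2}\le 1$), peel it off, and then invoke \eqref{equ:DistributedError:Bound1_new:2}. The key observation is that the resulting bounds depend on the path $w_{t:k}$ only through $\max_{w\in V}\|\mathcal{T}^{-1/2}_{\rho,\widetilde{\lambda}}N_{k,w}\|_{H}$, which no longer involves $w_{k}$, so the whole path-dependence of each summand is confined to the scalar $|\Delta(w_{t:k})|$. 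I can therefore pull the path-independent factors out of $\sum_{w_{t:k}\in V^{t-k+1}}$ and use $\sum_{w_{t:k}\in V^{t-k+1}}|\Delta(w_{t:k})|=\sum_{w_{t:k}}|P_{vw_{t:k}}-n^{-(t-k+1)}|\le\sum_{w_{t:k}}P_{vw_{t:k}}+\sum_{w_{t:k}}n^{-(t-k+1)}=2$, which holds because $P$ is row-stochastic so the path probabilities sum to one and there are $n^{t-k+1}$ paths of the given length.

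After this reorganisation the $j=t$ contribution assembles into $\textbf{B}_{1}$ — the sum $\sum_{j=k+1}^{t}$ collapses to the single factor $\eta_{t}$, the summand vanishes for $k=t$ (empty $j$-range), and the factor $\|\mathcal{T}^{1/2}_{\rho}\mathcal{T}^{1/2}_{\rho,\lambda}\|\|\mathcal{T}^{1/2}_{\rho,\widetilde{\lambda}}\|$ is the one supplied by \eqref{equ:DistributedError:Bound1_new:2} — while the $j\le t-1$ contribution assembles into $\textbf{B}_{2}$, the $j$-sum running to $t-1$ (empty unless $k\le t-2$) and the factor $(\tfrac{1}{\sum_{i=j+1}^{t}\eta_{i}}+(\tfrac{\lambda}{\sum_{i=j+1}^{t}\eta_{i}})^{1/2})\|\mathcal{T}^{1/2}_{\rho,\widetilde{\lambda}}\|$ coming from \eqref{equ:DistributedError:Bound1_new:1}. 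Since all the high-probability statements in Proposition \ref{prop:DistributedError:Bound1_new} rest on the single event \eqref{equ:UnionHighProbability}, which holds uniformly over $v\in V$ with probability at least $1-\delta$, no further union bound is needed and the $\max_{w}\|\mathcal{T}^{-1/2}_{\rho,\widetilde{\lambda}}N_{k,w}\|_{H}$ terms are carried through unevaluated. Collecting the numerical constants — the $2\kappa(\tfrac{2\kappa}{m\sqrt{\lambda}}+\tfrac{\sqrt{c_{\gamma}}}{\sqrt{m\lambda^{\gamma}}})\log\tfrac{4n}{\delta}$ common to \eqref{equ:DistributedError:Bound1_new:1}--\eqref{equ:DistributedError:Bound1_new:2}, the factor two from $\sum_{w_{t:k}}|\Delta(w_{t:k})|\le 2$, and one further (conservative) factor two — yields the stated prefactor $8\kappa(\tfrac{2\kappa}{m\sqrt{\lambda}}+\tfrac{\sqrt{c_{\gamma}}}{\sqrt{m\lambda^{\gamma}}})\log\tfrac{4n}{\delta}$.

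I expect the main obstacle to be the careful handling of the boundary index $j=t$: unlike the generic $j\le t-1$ case that \eqref{equ:DistributedError:Bound1_new:1} covers verbatim (its proof already absorbs the trailing $\Pi_{j-1:k+1}(\mathcal{T}_{\mathbf{x}})$ factor into the bound $\le 1$), the $j=t$ summand in \eqref{prop:equ:FirstOrderError} still carries a factor $\Pi_{t-1:k+1}(\mathcal{T}_{\mathbf{x}_{w_{t-1:k+1}}})$ sitting between $(\mathcal{T}_{\rho}-\mathcal{T}_{\mathbf{x}_{w_{t}}})$ and $N_{k,w_{k}}$ that is not present in the statement of \eqref{equ:DistributedError:Bound1_new:2}; one has to insert $\mathcal{T}^{1/2}_{\rho,\widetilde{\lambda}}\mathcal{T}^{-1/2}_{\rho,\widetilde{\lambda}}$ just before $N_{k,w_{k}}$ and use that $\|\Pi_{t-1:k+1}(\mathcal{T}_{\mathbf{x}})\mathcal{T}^{1/2}_{\rho,\widetilde{\lambda}}\|\le\|\mathcal{T}^{1/2}_{\rho,\widetilde{\lambda}}\|$ so that nothing is lost. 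Apart from this, and the mild degenerate-index bookkeeping ($k=t$, $k=t-1$, $j=k+1$), the argument is a routine reorganisation of the triple sum.
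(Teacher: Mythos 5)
Your proposal is correct and follows essentially the same route as the paper's proof: split the $j$-sum in \eqref{prop:equ:FirstOrderError} at $j=t$, apply \eqref{equ:DistributedError:Bound1_new:2} and \eqref{equ:DistributedError:Bound1_new:1} respectively, pull the path-independent factors out, and bound $\sum_{w_{t:k}}|\Delta(w_{t:k})|$ by a constant (the paper uses the looser bound $4$, which is where its prefactor $8\kappa = 2\kappa\cdot 4$ comes from, whereas your tight bound of $2$ plus a padding factor lands on the same constant). Your explicit treatment of the residual $\Pi_{t-1:k+1}(\mathcal{T}_{\mathbf{x}_{w_{t-1:k+1}}})$ factor in the $j=t$ term---inserting $\mathcal{T}^{1/2}_{\rho,\widetilde{\lambda}}\mathcal{T}^{-1/2}_{\rho,\widetilde{\lambda}}$ and using $\|\Pi_{t-1:k+1}(\mathcal{T}_{\mathbf{x}})\|\le 1$---is actually more careful than the paper, which applies \eqref{equ:DistributedError:Bound1_new:2} directly even though that bound is stated without this factor.
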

\begin{proof}
Splitting  the sum in \eqref{prop:equ:FirstOrderError} at $j=t$ and otherwise, directly applying \eqref{equ:DistributedError:Bound1_new:1} and \eqref{equ:DistributedError:Bound1_new:2}
from Proposition \ref{prop:DistributedError:Bound1_new} allows $\textbf{Resid. Emp. Cov. Error}$  to be bounded as follows:
\begin{align*}
& \textbf{Resid. Emp. Cov. Error} \\
& \leq
\eta_{t} \sum_{k=1}^{t-1} \eta_{k} \sum_{w_{t:k} \in V^{t-k+1}} 
	|\Delta(w_{t:k})| 
	\| \mathcal{T}_{\rho}^{1/2} (\mathcal{T}_{\rho} - \mathcal{T}_{\mathbf{x}_{w_{t}}})
	\Pi_{t-1:k+1}(\mathcal{T}_{\mathbf{x}_{w_{t-1:k+1}}})
	N_{k,w_{k}}\|_{H}
\\
& \quad\ +\!
	\sum_{k=1}^{t-2} \eta_{k}\!\!\!\!
	\sum_{w_{t:k} \in V^{t-k+1}} \!\!\!\!\!\!
	|\Delta(w_{t:k})|\!
	\sum_{j=k+1}^{t-1} \!\!\!\eta_{j}
	\| \mathcal{T}_{\rho}^{1/2} \Pi_{t:j+1}(\mathcal{T}_{\rho})
	(\mathcal{T}_{\rho} \!-\! \mathcal{T}_{\mathbf{x}_{w_{j}}})
	\Pi_{j-1:k+1}(\mathcal{T}_{\mathbf{x}_{w_{j-1:k+1}}})
	N_{k,w_{k}}\|_{H}\\
& \leq  
2 \kappa 
\bigg( 
\frac{2 \kappa}{m \sqrt{ \lambda} } 
+ \frac{\sqrt{c_{\gamma}}}{\sqrt{m \lambda^{\gamma}}} 
\bigg) \log \frac{4 n }{\delta} \\
& \quad \times 
\bigg [
\underbrace{ 
\|\mathcal{T}^{1/2}_{\rho} \mathcal{T}^{1/2}_{\rho,\lambda}\|
\|\mathcal{T}^{1/2}_{\rho,\widetilde{\lambda}}\| 
\eta_{t} \sum_{k=1}^{t-1} \eta_{k} 
\max_{w \in V}\big\{ \|\mathcal{T}^{-1/2}_{\rho,\widetilde{\lambda}}N_{k,w}\|_{H}\big\}
}_{\textbf{B}_{1}}
\sum_{w_{t:k} \in V^{t-k+1}} 
	|\Delta(w_{t:k})| \\
	& \quad\quad \
	+
	\underbrace{ 
\|\mathcal{T}^{1/2}_{\rho,\widetilde{\lambda}}\| 
	\sum_{k=1}^{t-2} \eta_{k}
	\sum_{j=k+1}^{t-1} \eta_{j}
	\bigg( 
\frac{1}{ \sum_{i=j+1}^{t} \eta_i } + \bigg( \frac{\lambda}{  \sum_{i=j+1}^{t} \eta_i } \bigg)^{1/2} \bigg) 
\max_{w \in V}\big\{ \|\mathcal{T}^{-1/2}_{\rho,\widetilde{\lambda}}N_{k,w}\|_{H}\big\}
}_{\textbf{B}_{2}}
	\\
	& 
\hspace{9cm}
	\times 
	\sum_{w_{t:k} \in V^{t-k+1}} 
	|\Delta(w_{t:k})| \bigg]
	\nonumber.
\end{align*}
The result  is then arrived at by applying the following bound for the summation $\sum_{w_{t:k} \in V^{t-k+1}} |\Delta(w_{t:k})|$  for each $k \leq t$:
\begin{align*}
& \sum_{w_{t:k} \in V^{t-k+1}} |\Delta(w_{t:k})| = 
\sum_{w_{t:k} \in V^{t-k+1}} \bigg|P_{vw_{t:k}} - \frac{1}{n^{t-k+1}}\bigg|\\
&= 
\sum_{\substack{ w_{t:k} \in V^{t-k+1} \\
P_{vw_{t:k}} \geq n^{-(t-k+1)} }}
\bigg( 
P_{vw_{t:k}} - \frac{1}{n^{t-k+1}}\bigg) 
- 
\sum_{\substack{ w_{t:k} \in V^{t-k+1} \\
P_{vw_{t:k}} < n^{-(t-k+1)} }}
\bigg( P_{vw_{t:k}} - \frac{1}{n^{t-k+1}} \bigg) \leq 4.
\end{align*}
\end{proof}

Given Proposition \ref{prop:DistributedError:Bound2_new} we can now plug in a high-probability bound for $\max_{w \in V}\big\{ \|\mathcal{T}^{-1/2}_{\rho,\widetilde{\lambda}}
N_{k,w}\|_{H}\big\} $
and bound the resulting summations. This is summarised in the following lemma.
\begin{lemma}
\label{lem:HighProbBoundTermb}
Let the assumptions of Proposition  \ref{prop:DistributedError:Bound1_new} hold with $0 \leq \theta \leq 3/4$, $0 \leq \lambda \leq \|\mathcal{T}_{\rho}\|$ and 
$ 0 \leq \widetilde{\lambda} \leq \|\mathcal{T}_{\rho}\|$. Given $\delta \in (0,1)$, the following holds with probability  at least $1-\delta$:
\begin{align*}
& \textbf{Resid. Emp. Cov. Error}\\
& \leq
\widetilde{b}_1
\frac{ 
\log^2 \frac{8n}{\delta} \log(t) 
}{ m \sqrt{ \big( (m \lambda) \wedge \lambda^{\gamma}  \big) 
\big(  (m \widetilde{\lambda} ) \wedge \widetilde{ \lambda} ^{\gamma} \big) }   
} ( 1\vee (\eta t^{1-\theta}) \vee \sqrt{ \lambda } (\eta t^{1-\theta})^{3/2} \vee (t^{-1}(\eta t^{1-\theta})^{2}) ),
\end{align*}
where $\widetilde{b}_1 =  
\frac{128 \kappa (R \kappa^{2r} + \sqrt{M})(2 \kappa + \sqrt{2 \sqrt{\nu} c_{\gamma}})^2 
\|\mathcal{T}_{\rho}\|^{1/2}  (4 + \|\mathcal{T}_{\rho}\| )
}{(1-\theta)} 
$.
\end{lemma}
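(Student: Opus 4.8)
The plan is to start from Proposition~\ref{prop:DistributedError:Bound2_new}, which already reduces the Residual Empirical Covariance Error to the product of the prefactor $8\kappa\big(\tfrac{2\kappa}{m\sqrt\lambda}+\tfrac{\sqrt{c_\gamma}}{\sqrt{m\lambda^\gamma}}\big)\log\tfrac{4n}{\delta}$ and $\mathbf{B}_1+\mathbf{B}_2$, and to control the remaining pieces one at a time. The only randomness still present in $\mathbf{B}_1,\mathbf{B}_2$ is $\max_{w\in V}\{\|\mathcal{T}^{-1/2}_{\rho,\widetilde\lambda}N_{k,w}\|_H\}$, sitting inside a sum over $k$. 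First I would make this uniform in both $k$ and $w$: applying \eqref{equ:lem:DistributedError:1} of Lemma~\ref{lem:DistributedError:Bounds} at confidence $\delta/(2n)$ for each agent and taking a union bound over $V$ gives, with probability at least $1-\delta/2$, for all $k\in\mathbb N$ and $w\in V$,
\[
\|\mathcal{T}^{-1/2}_{\rho,\widetilde\lambda}N_{k,w}\|_H
\;\le\;
\frac{4(R\kappa^{2r}+\sqrt M)(\kappa+\sqrt{2\sqrt\nu c_\gamma})}{\sqrt m\,\sqrt{(m\widetilde\lambda)\wedge\widetilde\lambda^\gamma}}\,\log\frac{8n}{\delta},
\]
where I used $\tfrac{\kappa}{m\sqrt{\widetilde\lambda}}+\tfrac{\sqrt{2\sqrt\nu c_\gamma}}{\sqrt{m\widetilde\lambda^\gamma}}\le\tfrac{\kappa+\sqrt{2\sqrt\nu c_\gamma}}{\sqrt m\,\sqrt{(m\widetilde\lambda)\wedge\widetilde\lambda^\gamma}}$. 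Intersecting this event with the one from Proposition~\ref{prop:DistributedError:Bound2_new} (taken at confidence $\delta/2$) keeps the whole bound at confidence $1-\delta$ and already contributes the factor $\tfrac1{\sqrt m\,\sqrt{(m\widetilde\lambda)\wedge\widetilde\lambda^\gamma}}$ and one of the two $\log$-squares in the statement.

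Next I would bound the deterministic quantities. From $\lambda,\widetilde\lambda\le\|\mathcal{T}_\rho\|$ we get $\|\mathcal{T}^{1/2}_\rho\mathcal{T}^{1/2}_{\rho,\lambda}\|\le\sqrt2\,\|\mathcal{T}_\rho\|$ and $\|\mathcal{T}^{1/2}_{\rho,\widetilde\lambda}\|\le\sqrt2\,\|\mathcal{T}_\rho\|^{1/2}$. For $\mathbf{B}_1$, pulling the uniform bound out of the sum and using $\eta_t\sum_{k=1}^{t-1}\eta_k\le\eta_t\cdot\tfrac{\eta t^{1-\theta}}{1-\theta}=\tfrac1{1-\theta}\,t^{-1}(\eta t^{1-\theta})^2$ yields exactly the last term inside the max. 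For $\mathbf{B}_2$, I again pull out the uniform bound and split the geometric-series factor, so that it remains to bound
\[
\sum_{k=1}^{t-2}\eta_k\sum_{j=k+1}^{t-1}\frac{\eta_j}{\sum_{i=j+1}^t\eta_i}
\qquad\text{and}\qquad
\sqrt\lambda\sum_{k=1}^{t-2}\eta_k\sum_{j=k+1}^{t-1}\frac{\eta_j}{\big(\sum_{i=j+1}^t\eta_i\big)^{1/2}}.
\]
The first inner sum is handled by the standard estimate $\sum_{j=k+1}^{t-1}\tfrac{\eta_j}{\sum_{i=j+1}^t\eta_i}\le 2\log(t)$ already used for the Sample Variance term (following \cite{lin2017optimal}), giving $\le\tfrac{2\log(t)}{1-\theta}(\eta t^{1-\theta})$, the second term in the max. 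For the second, the bound $\sum_{i=j+1}^t\eta_i\ge(t-j)\eta_t$ together with a split of the range into a bulk part $j\le t/2$ and a tail part $j>t/2$ gives $\sum_{j=k+1}^{t-1}\tfrac{\eta_j}{(\sum_{i=j+1}^t\eta_i)^{1/2}}\lesssim\tfrac1{1-\theta}(\eta t^{1-\theta})^{1/2}$, so this sum is $\lesssim\tfrac1{1-\theta}\sqrt\lambda(\eta t^{1-\theta})^{3/2}$, the third term; the restriction $\theta\le 3/4$ (purely cosmetic, as the paper notes) is used here to collapse the residual boundary powers of $t$ and $\eta$ into the leading $1$, so that nothing beyond $1\vee(\eta t^{1-\theta})\vee\sqrt\lambda(\eta t^{1-\theta})^{3/2}\vee t^{-1}(\eta t^{1-\theta})^2$ appears. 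Collecting, $\mathbf{B}_1+\mathbf{B}_2\lesssim\tfrac{\|\mathcal{T}_\rho\|^{1/2}(4+\|\mathcal{T}_\rho\|)\log(t)}{1-\theta}\big(1\vee(\eta t^{1-\theta})\vee\sqrt\lambda(\eta t^{1-\theta})^{3/2}\vee t^{-1}(\eta t^{1-\theta})^2\big)\times(\text{uniform bound})$, where $4+\|\mathcal{T}_\rho\|$ comes from adding the pure-constant $\mathbf{B}_2$-contribution to the $\|\mathcal{T}_\rho\|$-contribution of $\mathbf{B}_1$.

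Finally I would multiply by the prefactor of Proposition~\ref{prop:DistributedError:Bound2_new}, simplified as $8\kappa\big(\tfrac{2\kappa}{m\sqrt\lambda}+\tfrac{\sqrt{c_\gamma}}{\sqrt{m\lambda^\gamma}}\big)\log\tfrac{4n}{\delta}\le\tfrac{8\kappa(2\kappa+\sqrt{c_\gamma})}{\sqrt m\,\sqrt{(m\lambda)\wedge\lambda^\gamma}}\log\tfrac{8n}{\delta}$, and substitute the uniform bound on $\max_{k,w}\|\mathcal{T}^{-1/2}_{\rho,\widetilde\lambda}N_{k,w}\|_H$. The two $1/\sqrt m$ factors merge into $1/m$, the two square-root denominators into $\sqrt{((m\lambda)\wedge\lambda^\gamma)((m\widetilde\lambda)\wedge\widetilde\lambda^\gamma)}$, the two logarithms into $\log^2\tfrac{8n}{\delta}$, and the remaining numerical constants together with $\|\mathcal{T}_\rho\|^{1/2}(4+\|\mathcal{T}_\rho\|)$ and $(2\kappa+\sqrt{c_\gamma})(\kappa+\sqrt{2\sqrt\nu c_\gamma})\le(2\kappa+\sqrt{2\sqrt\nu c_\gamma})^2$ into $\widetilde b_1$, giving the claimed inequality. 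The main obstacle is the second inner sum, $\sum_{j=k+1}^{t-1}\tfrac{\eta_j}{(\sum_{i=j+1}^t\eta_i)^{1/2}}$: its summand is singular as $j\to t$ because $\sum_{i=j+1}^t\eta_i\to0$, so the terms near $j=t$ must be estimated separately from the bulk and matched against the $(\eta t^{1-\theta})^{1/2}$ behaviour, and it is this boundary analysis where the cosmetic $\theta\le3/4$ condition enters; the rest is bookkeeping—propagating constants through the operator-norm estimates and the two union bounds.
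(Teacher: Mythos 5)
Your proposal follows the paper's proof almost step for step: the same application of Proposition~\ref{prop:DistributedError:Bound2_new} at confidence $\delta/2$, the same union bound over agents to make $\max_{w\in V}\{\|\mathcal{T}^{-1/2}_{\rho,\widetilde\lambda}N_{k,w}\|_H\}$ uniform at confidence $\delta/2$, the same treatment of $\mathbf{B}_1$ via $\eta_t\sum_{k=1}^{t-1}\eta_k\le\frac{1}{1-\theta}t^{-1}(\eta t^{1-\theta})^2$, and the same final merging of the two $1/\sqrt{m}$ factors, the two regularised denominators and the two logarithms. The one place where you genuinely deviate is the double sums in $\mathbf{B}_2$: the paper switches the order of summation, bounds the inner partial sum $\sum_{k=1}^{j-1}k^{-\theta}$ by $(j-1)^{1-\theta}/(1-\theta)$, and then invokes Lemmas~\ref{lem:SeriesBound} and~\ref{lem:SeriesBoundHalf} on the resulting single series in $j$ (this is exactly where $\max(3(1-\theta)/2,\theta/2)=3(1-\theta)/2$ forces $\theta\le 3/4$); you instead bound the inner sum over $j$ uniformly in $k$ by a direct bulk/tail split at $j=t/2$ and multiply by $\sum_k\eta_k$. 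I checked your split: both the bulk ($\sum_{i>j}\eta_i\gtrsim\eta t^{1-\theta}$) and the tail ($\eta_j\lesssim\eta t^{-\theta}$, $\sum_{i>j}\eta_i\ge\eta(t-j)t^{-\theta}$) contributions of the half-power sum come out as $O((\eta t^{1-\theta})^{1/2}/(1-\theta))$ for every $\theta\in[0,1)$, so your route actually does not need $\theta\le 3/4$ at that point at all (it only costs you an extra factor $(1-\theta)^{-1}$ relative to the stated $\widetilde b_1$, which is harmless since $(1-\theta)^{-1}\le 4$ under the hypothesis). Both routes give the same orders $\eta t^{1-\theta}\log t$ and $\sqrt\lambda(\eta t^{1-\theta})^{3/2}$, so the proof is correct; the paper's order-switching is slightly sharper in constants, while your decoupled estimate is more elementary and makes the $\theta\le 3/4$ restriction even more clearly cosmetic.
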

\begin{proof}
Consider Proposition \ref{prop:DistributedError:Bound2_new} with $\frac{\delta }{2}$, so the following holds with probability at least  $1-\frac{\delta}{2}$
\\
\begin{align*}
\textbf{Resid. Emp. Cov. Error}
& \leq 
8\kappa \bigg( \frac{2 \kappa}{m \sqrt{ \lambda } } 
+ \frac{\sqrt{c_{\gamma}}}{\sqrt{m \lambda^{\gamma}}} 
\bigg) \log \frac{8n}{\delta}
( \textbf{B}_{1} + \textbf{B}_{2})  \\
& \leq 
\frac{  8\kappa (2\kappa + \sqrt{2 \sqrt{\nu} c_{\gamma}})
}{
\sqrt{ (m \lambda ) \wedge \lambda^{\gamma} } 
} \frac{ \log \frac{8n}{\delta}}{\sqrt{m}}
( \textbf{B}_{1} + \textbf{B}_{2} ),
\end{align*}
where we used that $\nu \geq 1$.
Proceed to bound both $\textbf{B}_{1}$ and $\textbf{B}_{2}$. Start by  constructing a high-probability bound for the term $\max_{w \in V}\big\{ \|\mathcal{T}^{-1/2}_{\rho,\widetilde{\lambda}}
N_{k,w}\|_{H}\big\}$ $k=1,\dots,t$. 
For $v \in V$, let $\delta_{v}^{\prime} = \frac{\delta}{2n}$. Lemma \ref{lem:DistributedError:Bounds} states with probability at least $1-\delta_{v}^{\prime}$ the following holds for any $k \in \mathbb{N}$:
\begin{align*}
\|\mathcal{T}^{-1/2}_{\rho,\widetilde{\lambda}}N_{k,v}\| \leq 
4(R \kappa^{2r} + \sqrt{M})
\bigg(
\frac{\kappa}{m \sqrt{\widetilde{\lambda}}} 
+
\frac{\sqrt{2 \sqrt{\nu} c_{\gamma}}}{\sqrt{ m \widetilde{\lambda}^{\gamma}}} 
\bigg)
\log\frac{8n}{\delta}.
\end{align*}
Applying a union bound so the following holds  with probability at least  $1-\sum_{v \in V} \delta^{\prime}_{v} = 1-\frac{\delta}{2}$  for any $k \in \mathbb{N}$:
\begin{align}
\max_{w \in V}\big\{ \|\mathcal{T}^{-1/2}_{\rho,\widetilde{\lambda}}
N_{k,w}\|_{H}\big\}
& \leq 
4(R \kappa^{2r} + \sqrt{M})
\bigg(
\frac{\kappa}{m \sqrt{\widetilde{\lambda} }} 
+
\frac{\sqrt{2 \sqrt{\nu} c_{\gamma}}}{\sqrt{ m \widetilde{\lambda}^{\gamma}}} 
\bigg)
\log\frac{8n}{\delta} \nonumber \\
& \leq 
\label{equ:DistributedError:MaximumProbBound}
\frac{ 4(R \kappa^{2r} + \sqrt{M})(2\kappa + \sqrt{2 \sqrt{\nu} c_{\gamma}})}{
\sqrt{ (m \widetilde{\lambda} ) \wedge \widetilde{\lambda}^{\gamma} }} \frac{\log \frac{8n}{\delta}}{\sqrt{ m}},
\end{align}
where we used that $\kappa \geq 1$. The terms $\textbf{B}_1$ and $\textbf{B}_2$ are now bounded in the following two paragraphs. 

\paragraph{Term $\textbf{B}_1$}
Using the high-probability bound \eqref{equ:DistributedError:MaximumProbBound}, the following holds with probability at least $1-\frac{\delta}{2}$:
\begin{align*}
\textbf{B}_{1} 
& \leq 
\|\mathcal{T}^{1/2}_{\rho} \mathcal{T}^{1/2}_{\rho,\lambda}\|
\|\mathcal{T}^{1/2}_{\rho,\widetilde{\lambda}}\|  
\frac{ 4(R \kappa^{2r} + \sqrt{M})(2\kappa + \sqrt{2 \sqrt{\nu} c_{\gamma}})}{
\sqrt{ (m \widetilde{\lambda}) \wedge \widetilde{\lambda}^{\gamma} }  
} \frac{\log \frac{8n}{\delta}}{\sqrt{ m}}
\eta_{t} \sum_{k=1}^{t-1} \eta_{k}\\
& \leq 
\|\mathcal{T}^{1/2}_{\rho} \mathcal{T}^{1/2}_{\rho,\lambda}\|
\|\mathcal{T}^{1/2}_{\rho,\widetilde{\lambda}}\|  
\frac{ 4(R \kappa^{2r} + \sqrt{M})(2\kappa + \sqrt{2 \sqrt{\nu} c_{\gamma}})}{
\sqrt{ (m \widetilde{\lambda} ) \wedge \widetilde{\lambda}^{\gamma} }  (1-\theta) 
} \frac{\log \frac{8n}{\delta}}{\sqrt{ m}}t^{-1} (\eta t^{1-\theta})^2,
\end{align*}
where we have applied the integral bound t $\sum_{k=1}^{t-1} k^{-\theta} \leq \frac{t^{1-\theta}}{1-\theta}$, see for instance \cite[Lemma 12]{lin2017optimal},  on the following summation:
\begin{align*}
\eta_{t} \sum_{k=1}^{t-1} \eta_{k}
 & = \eta^2 t^{-\theta} \sum_{k=1}^{t-1} k^{-\theta}
 \leq \frac{ \eta^2 }{1-\theta} t^{1-2\theta}
 = \frac{ t^{-1} (\eta t^{1-\theta})^2}{1-\theta}.
\end{align*}

\paragraph{Term $\textbf{B}_2$}
Similarly, using the high-probability bound \eqref{equ:DistributedError:MaximumProbBound}, the following holds with probability at least $1-\frac{\delta}{2}$:
\begin{align*}
\textbf{B}_2  
& \leq 
\|\mathcal{T}^{1/2}_{\rho,\widetilde{\lambda}}\| 
\frac{ 4(R \kappa^{2r} + \sqrt{M})(2\kappa + \sqrt{2 \sqrt{\nu} c_{\gamma}})}{
\sqrt{ (m \widetilde{\lambda} ) \wedge \widetilde{\lambda} ^{\gamma} }  } \frac{\log \frac{8n}{\delta}}{\sqrt{ m}} \\
& \quad\ \times 
\sum_{k=1}^{t-2}\eta_{k} \sum_{j=k+1}^{t-1} \eta_{j}
\bigg( 
\frac{1}{ \sum_{i=j+1}^{t} \eta_i } + \bigg( \frac{ \lambda}{  \sum_{i=j+1}^{t} \eta_i } \bigg)^{1/2} \bigg).
\end{align*} 
We proceed to bound the remaining terms by utilising results from Section \ref{sec:UsefulIneq}. Firstly, switching the order of sums and applying an integral bound yields 
\begin{align}
\sum_{k=1}^{t-2} \eta_{k} \sum_{j=k+1}^{t-1} 
\frac{\eta_{j} }{ \sum_{i=j+1}^{t} \eta_i }
& = \eta \sum_{k=1}^{t-2} k^{-\theta} 
\sum_{j=k+1}^{t-1} 
\frac{ j^{-\theta} }{ \sum_{i=j+1}^{t} i^{-\theta} }
\nonumber \\
&= \eta \sum_{j=2}^{t-1}
\frac{ j^{-\theta} }{ \sum_{i=j+1}^{t} i^{-\theta} }
\sum_{k=1}^{j-1}
k^{-\theta}\nonumber \\
& \leq \label{equ:PartialSumBound}
\frac{ \eta }{1-\theta}
\sum_{j=2}^{t-1}
\frac{ j^{-\theta}(j-1)^{1-\theta} }{ \sum_{i=j+1}^{t} i^{-\theta} }.
\end{align}
At this point use $\sum_{i=j+1}^{t} i^{-\theta} \geq t^{-\theta}(t-j)$ as well as Lemma \ref{lem:SeriesBound} to obtain
\begin{align*}
\sum_{j=2}^{t-1}
\frac{ j^{-\theta}(j-1)^{1-\theta} }{ \sum_{i=j+1}^{t} i^{-\theta} }
\leq 
t^{\theta} \sum_{j=2}^{t-2} 
\frac{(j-1)^{1-2\theta}}{t-j}
\leq 4 t^{\theta} t^{-\min(2\theta-1,1)} \log(t)
= 
4 t^{1-\theta} \log(t).
\end{align*}
For the second term follow the steps to \eqref{equ:PartialSumBound} and use Lemma \ref{lem:SeriesBoundHalf} as follows:
\begin{align*}
\sum_{k=1}^{t-2} \eta_{k} \sum_{j=k+1}^{t-1} 
\frac{\eta_{j} }{ \big( \sum_{i=j+1}^{t} \eta_i \big)^{1/2} }
& \leq 
\frac{\eta^{3/2} t^{\theta/2} }{1-\theta}
\sum_{j=2}^{t-1}\frac{(j-1)^{1-2\theta}}{(t-j)^{1/2}}\\
& \leq 
\frac{4 \eta^{3/2} t^{\theta/2} }{1-\theta} t^{\max(3/2 - 2\theta,0)}\\
& = 
\frac{4 \eta^{3/2} }{1-\theta} t^{\max(3(1-\theta)/2,\theta/2)}.
\end{align*}
This results in the following bound for $\textbf{B}_{2}$, which holds with probability at least $1-\frac{\delta}{2}$:
\begin{align*}
\textbf{B}_2  
& \leq 
\|\mathcal{T}^{1/2}_{\rho,\widetilde{\lambda}}\| 
\frac{ 4(R \kappa^{2r} + \sqrt{M})(2\kappa + \sqrt{2 \sqrt{\nu} c_{\gamma}})}{
\sqrt{ (m \widetilde{\lambda} ) \wedge \widetilde{\lambda} ^{\gamma} }  (1-\theta)} 
\frac{\log \frac{8n}{\delta} \log(t) }{\sqrt{ m}} 
\big( 
4 \eta t^{1-\theta} + 4\sqrt{ \lambda } \big(\eta t^{ \max(1-\theta,\theta/3)}\big)^{3/2}
\big).
\end{align*} 

The final bound arises by bringing everything together with a union bound implying it holds with probability at least  $1-\frac{\delta}{2} - \frac{\delta}{2} = 1-\delta$. Constants are then cleaned up using $\lambda \leq \|\mathcal{T}_{\rho}\|$ as well as $\widetilde{\lambda} \leq \|\mathcal{T}_{\rho}\|$ to say
$\|\mathcal{T}^{1/2}_{\rho} \mathcal{T}^{1/2}_{\rho,\lambda}\|\|\mathcal{T}^{1/2}_{\rho,\widetilde{\lambda}}\| \leq 4\|\mathcal{T}_{\rho}\|^{3/2}$ and 
$\|\mathcal{T}_{\rho,\widetilde{\lambda}}^{1/2}\|  \leq 2 \|\mathcal{T}_{\rho}\|^{1/2}$. 
\end{proof}

\subsubsection{Network Error bound}
\label{Sec:Network Error bound}
In this section we bring together the bounds developed in the previous two sections for the \textbf{Population Covariance Error} term and \textbf{Residual Empirical Covariance Error} term to construct the final bound on the Network Term as presented in the following theorem.
\begin{theorem}
\label{thm:DecentralisedErrorBound}
Let Assumptions \ref{Assumption:Moments}, \ref{Assumption:Source}, \ref{Assumption:Capacity} hold with $r \geq 1/2$, and $\eta_t = \eta t^{-\theta}$ for $t \in \mathbb{N}$ with $\eta \kappa^2 \leq 1$ and $\theta \in (0,3/4)$. Assume $t/2 \geq  \lceil \frac{(r+1)\log(t)}{1-\sigma_2}\rceil  =: t^{\star}$ 
The following bound holds for any $v \in V$, $\alpha \in [0,1/2]$ and $\gamma^{\prime} \in [1,\gamma]$:
\begin{align*}
& \E[
\| \mathcal{S}_{\rho} ( \omega_{t+1,v}  - \xi_{t+1,v}) \|_{\rho}^2]
 \leq 
2 
\frac{ \widetilde{a} \log^2(4n) \log^2(t^{\star})}{m} 
\Big(  \eta^2 t^{-2r} 
\vee (m^{-1} (\eta t^{\star})^{1 + 2\alpha})  \vee (\eta t^{\star})^{\gamma^{\prime} + 2 \alpha}
\Big)\\
& + 
 2 \widetilde{b}_2
\frac{ \log^4 (8n) \log^2 (t) }{m^2 } 
\Big( 1 \vee (\eta t^{1-\theta})^{2 } \vee (t^{-2}(\eta t^{1-\theta})^{4 }) \Big)
\Big( (m^{-1}\eta t^{1-\theta}) \vee (\eta t^{1-\theta})^{\gamma} \Big),
\end{align*}
where $\widetilde{b}_2 = 
64 \frac{(\|\mathcal{T}_{\rho}\| + 1)^2}{ 
(  \|\mathcal{T}_{\rho}\|  \wedge \|\mathcal{T}_{\rho}\|^{\gamma} 
 )^2
} \widetilde{b}^2_1$ with $\widetilde{b}_{1}$ defined as in Theorem \ref{lem:HighProbBoundTermb}
and $\widetilde{a}$ defined as in Lemma \ref{lem:DistributedError:Terma}.
\end{theorem}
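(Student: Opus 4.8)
The plan is to assemble the bound for Theorem~\ref{thm:DecentralisedErrorBound} from the two pieces already established in Sections~\ref{sec:DecentralisedError:Terma} and~\ref{sec:DecentralisedError:Termb}, via the decomposition \eqref{equ:DistributedError:Breakdown}, which states
\[
\| \mathcal{S}_{\rho} ( \omega_{t+1,v}  - \xi_{t+1,v}) \|_{\rho}^2
\le 2\,(\textbf{Pop.\ Cov.\ Error})^2 + 2\,(\textbf{Resid.\ Emp.\ Cov.\ Error})^2 .
\]
Taking expectations and using linearity, it suffices to bound $\E[(\textbf{Pop.\ Cov.\ Error})^2]$ and $\E[(\textbf{Resid.\ Emp.\ Cov.\ Error})^2]$ separately. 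For the first term I would invoke Lemma~\ref{lem:DistributedError:Terma} directly (after first passing through Proposition~\ref{Prop:DistributedError:Expectation}, which is already folded into that lemma's statement): this yields exactly the first line of the claimed bound, with constant $\widetilde a$ and the factor $\log^2(4n)\log^2(t^\star)/m$ times $(\eta^2 t^{-2r} \vee m^{-1}(\eta t^\star)^{1+2\alpha} \vee (\eta t^\star)^{\gamma'+2\alpha})$. The hypotheses match: $r\ge 1/2$, $\eta_t=\eta t^{-\theta}$ with $\eta\kappa^2\le 1$, $\theta\in(0,3/4)\subset[0,1)$, and $t/2\ge t^\star=\lceil (r+1)\log(t)/(1-\sigma_2)\rceil$.

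For the \textbf{Residual Empirical Covariance Error} term, the work of Section~\ref{sec:DecentralisedError:Termb} gives a high-probability bound (Lemma~\ref{lem:HighProbBoundTermb}) rather than a bound in expectation, so the next step is to square it and integrate the tail. I would apply Lemma~\ref{Lem:TailBoundToExpectation} to the non-negative random variable $V = (\textbf{Resid.\ Emp.\ Cov.\ Error})^2$, using the fact that Lemma~\ref{lem:HighProbBoundTermb} controls $\textbf{Resid.\ Emp.\ Cov.\ Error}$ with probability $1-\delta$ by a quantity of the form $\widetilde b_1 \log^2(8n/\delta)\log(t)/(m\sqrt{((m\lambda)\wedge\lambda^\gamma)((m\widetilde\lambda)\wedge\widetilde\lambda^\gamma)})$ times a step-size/iteration factor; squaring turns the $\log^2(8n/\delta)$ into $\log^4(8n/\delta)$, and $\int_0^1 \log^4(8n/\delta)\,d\delta \lesssim \log^4(8n)$ up to an absolute constant (as used in the proof of Theorem~\ref{thm:SampleVarianceBound}). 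The remaining task is to choose the free pseudo-regularisation parameters $\lambda$ and $\widetilde\lambda$ to make the prefactor explicit; following the pattern of Lemma~\ref{lem:DistributedError:Terma} I would take $\lambda = \|\mathcal{T}_\rho\|$ and $\widetilde\lambda = \|\mathcal{T}_\rho\|/(\eta t^{1-\theta})$ (or a comparable choice), so that $(m\lambda)\wedge\lambda^\gamma \simeq \|\mathcal{T}_\rho\|\wedge\|\mathcal{T}_\rho\|^\gamma$ and $(m\widetilde\lambda)\wedge\widetilde\lambda^\gamma$ produces the factor $(m^{-1}\eta t^{1-\theta})\vee(\eta t^{1-\theta})^\gamma$ appearing in the statement, while the term $\sqrt{\lambda}(\eta t^{1-\theta})^{3/2}$ inside the step-size factor of Lemma~\ref{lem:HighProbBoundTermb} gets absorbed (using $\eta\|\mathcal{T}_\rho\|\le 1$) into the range $1\vee(\eta t^{1-\theta})\vee t^{-1}(\eta t^{1-\theta})^2$, whose square is $1\vee(\eta t^{1-\theta})^2\vee t^{-2}(\eta t^{1-\theta})^4$. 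Collecting the absolute constants into $\widetilde b_2 = 64\,\widetilde b_1^2\,(\|\mathcal{T}_\rho\|+1)^2/(\|\mathcal{T}_\rho\|\wedge\|\mathcal{T}_\rho\|^\gamma)^2$ then yields the second line.

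Adding the two contributions (each multiplied by the factor $2$ from \eqref{equ:DistributedError:Breakdown}) gives the stated bound, and the proof is essentially an assembly step. The main obstacle I anticipate is purely bookkeeping: verifying that the substitution $\widetilde\lambda = \|\mathcal{T}_\rho\|/(\eta t^{1-\theta})$ (or whichever choice is made) stays within the admissible range $0\le\widetilde\lambda\le\|\mathcal{T}_\rho\|$ required by Lemma~\ref{lem:HighProbBoundTermb} — this needs $\eta t^{1-\theta}\ge 1$, which should follow from the running assumption $t/2\ge t^\star\gtrsim (1-\sigma_2)^{-1}\log t$ together with $\eta$ being not-too-small, but one must check the edge cases or else split into $\eta t^{1-\theta}\ge 1$ and $\eta t^{1-\theta}<1$ and handle the latter by monotonicity — and that the various $\max$/$\min$ and $\wedge$/$\vee$ manipulations correctly reproduce the somewhat baroque final factor. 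No genuinely new estimate is needed; the substance is already in Lemmas~\ref{lem:DistributedError:Terma} and~\ref{lem:HighProbBoundTermb}.
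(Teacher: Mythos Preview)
Your approach is essentially the same as the paper's: use the decomposition \eqref{equ:DistributedError:Breakdown}, invoke Lemma~\ref{lem:DistributedError:Terma} for the Population Covariance Error, and convert the high-probability bound of Lemma~\ref{lem:HighProbBoundTermb} into an expectation via Lemma~\ref{Lem:TailBoundToExpectation} and $\int_0^1 \log^4(8n/\delta)\,d\delta \le 64\log^4(8n)$.

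There is one concrete slip in the bookkeeping: the roles of $\lambda$ and $\widetilde\lambda$ must be swapped relative to what you wrote. The paper takes $\lambda = \|\mathcal{T}_\rho\|(\eta t^{1-\theta})^{-1}$ and $\widetilde\lambda = \|\mathcal{T}_\rho\|$, not the other way round. The reason is that $\lambda$ (and only $\lambda$) appears in the step-size factor of Lemma~\ref{lem:HighProbBoundTermb} through $\sqrt{\lambda}\,(\eta t^{1-\theta})^{3/2}$. With $\lambda \propto (\eta t^{1-\theta})^{-1}$ this term becomes $\sqrt{\|\mathcal{T}_\rho\|}\,(\eta t^{1-\theta})$ and is absorbed into the displayed maximum; with your choice $\lambda = \|\mathcal{T}_\rho\|$ it would remain $\sqrt{\|\mathcal{T}_\rho\|}\,(\eta t^{1-\theta})^{3/2}$, whose square $(\eta t^{1-\theta})^{3}$ is \emph{not} dominated by $1\vee(\eta t^{1-\theta})^2\vee t^{-2}(\eta t^{1-\theta})^4$ in general, so the claimed final factor would fail. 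The denominator $\sqrt{((m\lambda)\wedge\lambda^\gamma)((m\widetilde\lambda)\wedge\widetilde\lambda^\gamma)}$ is symmetric in $\lambda,\widetilde\lambda$, so swapping them leaves that part of your argument intact. With this correction the proof goes through exactly as you outline (and as the paper does).
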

\begin{proof}
Use decomposition \eqref{equ:DistributedError:Breakdown}. Taking the expectation, note that the first term $\E[ (\textbf{Pop. Cov. Error})^2 ]$ is controlled by Lemma \ref{lem:DistributedError:Terma}. We now proceed to control the term $\E[ (\textbf{Resid. Emp. Cov. Error})^2 ]$.

Begin by using the high-probability bound for $\textbf{Resid. Emp. Cov. Error}$ in Lemma \ref{lem:HighProbBoundTermb}, with $\widetilde{\lambda}  = \|\mathcal{T}_{\rho}\|$ and $\lambda = \|\mathcal{T}_{\rho}\| (\eta t^{1-\theta})^{-1}$.
The following upper bound holds for the quantity that appears in Lemma \ref{lem:HighProbBoundTermb}:
\begin{align*}
\frac{1}{ (m \lambda) \wedge \lambda^{\gamma} }
& = 
\frac{1}{ (\|\mathcal{T}_{\rho}\| m  (\eta t^{1-\theta})^{-1}) \wedge 
( \|\mathcal{T}_{\rho}\|^{\gamma}(\eta t^{1-\theta})^{-\gamma}) }\\
& \leq 
\frac{1}{\|\mathcal{T}_{\rho}\| \wedge \|\mathcal{T}_{\rho}\|^{\gamma}} 
\Big( ( m^{-1} (\eta t^{1-\theta})) \vee (\eta t^{1-\theta})^{\gamma} \Big).
\end{align*}
Plugging the above into Lemma \ref{lem:HighProbBoundTermb}  for the \textbf{Resid. Emp. Cov. Error} allows  the expectation to be bounded with Lemma \ref{Lem:TailBoundToExpectation}:
 \begin{align*}
 & \E[ (\textbf{Resid. Emp. Cov. Error} )^{2} ] \\
 & \leq 
 \widetilde{b}_{1}^{2} \frac{(\|\mathcal{T}_{\rho}\| + 1)^2}{ 
 \big( \|\mathcal{T}_{\rho}\| \wedge \|\mathcal{T}_{\rho}\|^{\gamma} \big)^2 }
 \frac{ \log^2 (t) }{m^2}
\Big( 1 \vee (\eta t^{1-\theta})^{2} \vee (t^{-2}(\eta t^{1-\theta})^{4}) \Big)
 \Big( (m^{-1} \eta t^{1-\theta}) \vee (\eta t^{1-\theta})^{\gamma} \Big)\\
 &\quad\times \int_{0}^{1} \log^4 \frac{8n}{\delta} d\delta.
 \end{align*}
 The result is arrived at by using $ \int_{0}^{1} \log^4 \frac{8n}{\delta} d\delta \leq 64 \log^4(8n)$ and bringing together the two bounds for
 $\E[ (\textbf{Pop. Cov. Error})^2 ]$ and $\E[ (\textbf{Resid. Emp. Cov. Error})^2 ]$.
 \end{proof}

\subsection{Final Bound}
\label{sec:Appendix:ConstructingFinalBound}
In this section we bring together the bounds from the previous sections to construct the final bounds in Theorem \ref{thm:MainResult} and Theorem \ref{Cor:Main} in the main body of the work. The main result is the following.

\begin{theorem}
\label{thm:Appendix:Main}
Let Assumptions \ref{Assumption:Moments}, \ref{Assumption:Source}, \ref{Assumption:Capacity} hold with $r \geq 1/2$ and $\eta_{t} = \eta t^{-\theta}$ for all $t  \in \mathbb{N}$  with $\eta \kappa^2 \leq 1$ $\theta \in (0,3/4)$. 
The following holds for all $ t/2  \geq  
\lceil \frac{ (r+1) \log(t) }{1-\sigma_2}\rceil =:  t^{\star} $, any $v \in V$, $\alpha \in [0,1/2]$ and $\gamma^{\prime} \in [1,\gamma]$:
\begin{align*}
& \E[\mathcal{E}(\omega_{t+1,v})]
- \inf_{\omega \in H}\mathcal{E}(\omega) \leq
2 R^2  (\eta t^{1-\theta})^{-2r}\\
& + 
\widetilde{d}_{4} 
(nm)^{-2r/(2r+\gamma)}
\Big( 
1 \vee (nm)^{-2/(2r+\gamma)} (\eta t^{1-\theta} )^2 \vee t^{-2}(\eta t^{1-\theta})^{2} 
\Big) \log^2(t) \\
& + 
8
\frac{ \widetilde{a} \log^2(4n) \log^2(t^{\star})}{m} 
\Big(  \eta^2 t^{-2r} 
\vee (m^{-1} (\eta t^{\star})^{1 + 2\alpha})  \vee (\eta t^{\star})^{\gamma^{\prime} + 2 \alpha})
\Big)\\
& + 
8 \frac{\widetilde{b}_{2} \log^{4}(8n) \log^2(t) }{m^2}
\Big(
 1 \vee (\eta t^{1-\theta})^{2} \vee t^{-2}(\eta t^{1-\theta})^{4}
 \Big)
  \Big( (m^{-1} \eta t^{1-\theta}) \vee (\eta t^{1-\theta})^{\gamma}\Big),
\end{align*}
where $\widetilde{d}_{4} = 4 \big( \frac{2r+\gamma}{2r+\gamma - 1}\big)^{2} \widetilde{d}_{3}^2$ with $\widetilde{d}_{3}$ defined as in Theorem \ref{thm:SampleVarianceBound}. 
\end{theorem}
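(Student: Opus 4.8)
The plan is to combine the three-term error decomposition of Proposition~\ref{Prop:ErrorDecomp} with the bounds already proved for each term. Taking expectations in Proposition~\ref{Prop:ErrorDecomp} gives, for every $v\in V$ and $t\ge 1$,
\[
\E[\mathcal{E}(\omega_{t+1,v})]-\inf_{\omega\in H}\mathcal{E}(\omega)\le 2\,\E[(\textbf{Bias})^2]+4\,\E[\textbf{Sample Variance}]+4\,\E[\textbf{Network Error}],
\]
so the statement reduces to substituting the three existing bounds and collecting terms. Note that the step-size hypotheses ($\eta\kappa^2\le1$, $\theta\in(0,3/4)$) and the mixing condition $t/2\ge t^\star$ are precisely what is needed to invoke the corresponding results.

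First I would dispatch the bias term, which is deterministic, so Proposition~\ref{Prop:BiasBound} applies verbatim with $\eta_s=\eta s^{-\theta}$: it yields $\|\mathcal{S}_\rho\mu_{t+1}-f_H\|_\rho\le Rr^r\eta^{-r}(t+1)^{r(\theta-1)}\le Rr^r(\eta t^{1-\theta})^{-r}$, and squaring and multiplying by the leading factor $2$ produces the first term $2R^2(\eta t^{1-\theta})^{-2r}$ (with the benign $r$-dependent factor $r^{2r}$ absorbed). Second, I would bound the sample-variance term via Theorem~\ref{thm:SampleVarianceBound} with the specific choice $p=1/(2r+\gamma)$. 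This is the only genuinely ``tuned'' step: with this $p$ one has $(nm)^{-(1-p\gamma)}=(nm)^{-2r/(2r+\gamma)}$ and $(nm)^{-p}=(nm)^{-1/(2r+\gamma)}$, so the factor $1\vee((nm)^{-p}\eta t^{1-\theta})^2\vee t^{-2}(\eta t^{1-\theta})^2$ becomes exactly the one displayed in the statement. Since $r\ge1/2$ forces $2r+\gamma>1$, we have $p\in(0,1)$ and $(1-p)_+=(2r+\gamma-1)/(2r+\gamma)$, hence $\min\!\big(\tfrac1{e(1-p)_+},\log nm\big)\le\tfrac{2r+\gamma}{2r+\gamma-1}$; squaring this, multiplying by the decomposition factor $4$, and renaming constants gives the term with constant $\widetilde{d}_4$.

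Finally, the network-error term is bounded directly by Theorem~\ref{thm:DecentralisedErrorBound}, whose hypotheses coincide with ours (in particular $t/2\ge t^\star$ and $\theta\in(0,3/4)$): multiplying its two pieces by the decomposition factor $4$ turns the leading $2$'s there into the $8$'s in the statement, the constants $\widetilde a$ (from Lemma~\ref{lem:DistributedError:Terma}) and $\widetilde b_2$ (from Theorem~\ref{thm:DecentralisedErrorBound}) are carried through unchanged, and the free parameters $\alpha\in[0,1/2]$, $\gamma'\in[1,\gamma]$ are inherited verbatim. Summing the three contributions yields the claimed inequality.

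I do not expect any hard step here: the argument is bookkeeping built on results already in hand. The one point requiring care is making the single substitution $p=1/(2r+\gamma)$ so that the sample-variance rate aligns with the optimal statistical rate $(nm)^{-2r/(2r+\gamma)}$, and then checking that the various ``$1\vee\cdots$'' correction factors emerging from the three separate bounds are all expressed in the common variables $\eta t^{1-\theta}$ (and $(nm)^{1/(2r+\gamma)}$, $\eta t^\star$) so that they can be displayed side by side without any loss.
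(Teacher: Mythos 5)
Your proposal is correct and follows exactly the paper's own proof: take expectations in Proposition~\ref{Prop:ErrorDecomp}, then substitute Proposition~\ref{Prop:BiasBound} for the bias, Theorem~\ref{thm:SampleVarianceBound} with $p=1/(2r+\gamma)$ for the sample variance, and Theorem~\ref{thm:DecentralisedErrorBound} for the network error, carrying the decomposition factors $2$ and $4$ through to produce the stated constants.
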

\begin{proof}
Begin with the decomposition in Proposition \ref{Prop:ErrorDecomp} and take the expectation  $\E[\,\cdot\,]$. Plug in the bounds for each term proven in the previous sections, i.e.\ 
Proposition \ref{Prop:BiasBound} for the Bias, Theorem \ref{thm:SampleVarianceBound} with $p = 1/(2r+\gamma)$ for the Sample Variance term and Theorem \ref{thm:DecentralisedErrorBound} for the Network Error term. 
\end{proof}
Theorem \ref{thm:MainResult} follows directly from Theorem \ref{thm:Appendix:Main}. 
\begin{proof}[Proof of Theorem \ref{thm:MainResult}]
Consider Theorem \ref{thm:Appendix:Main} with constants  
\begin{align*}
    q_1 & = 2 R^2 \\
    q_2 & = \widetilde{d}_4 \\
    q_3 & = 16 \widetilde{a}( \log^2(4) + 1) \\
    q_4 & = 24 \widetilde{b}_2 (\log^2(8) + 1)^2,
\end{align*}
where the sample variance constant $\widetilde{d}_{4}$ is defined in Theorem \ref{thm:Appendix:Main}, the first network error constant $\widetilde{a}$ is defined in Lemma \ref{lem:DistributedError:Terma}, and the second network error constant $\widetilde{b}_{2}$ is defined in Theorem \ref{thm:DecentralisedErrorBound}.
\end{proof}

We now go on to prove Theorem \ref{Cor:Main}. 
\begin{proof}[Proof of Theorem \ref{Cor:Main}]
Consider the setting of Theorem \ref{thm:Appendix:Main}  with  $\theta = 0$.
Begin by setting
$$t = \Big\lceil (nm)^{1/(2r+\gamma)}
\Big[ 
\frac{1}{1-\sigma_2}
\Big( \frac{n^{r}}{m^{r+\gamma}}\Big)^{2/((1+2\alpha)(2r+\gamma))}  \vee 
\frac{1}{1-\sigma_2}
\Big( \frac{n^{2r}}{m^{\gamma}}\Big)^{1/((\gamma^{\prime}+2\alpha)(2r+\gamma))}
\vee 1 \Big] 
\Big \rceil
$$ 
and
$ \eta = \kappa^{-2} (nm)^{1/(2r+\gamma)}/t $.
It is clear that $\eta t = \kappa^{-2} (nm)^{1/(2r+\gamma)}$.
We  proceed to show that this choice of iterations $t$ and step size $\eta$  ensures each of the terms in the bound of Theorem \ref{thm:Appendix:Main} are of order $\widetilde{O}((nm)^{-2r/(2r+\gamma)})$.

The Bias term is 
\begin{align*}
2 R^2  (\eta t)^{-2r} =  2R^2 \kappa^{4r} (nm)^{-2r/(2r+\gamma)} .
\end{align*}
The Sample Variance term is bounded as follows:
\begin{align*}
& \widetilde{d}_{4} 
(nm)^{-2r/(2r+\gamma)}
\Big( 
1 \vee (nm)^{-2/(2r+\gamma)} (\eta t )^2 \vee t^{-2}(\eta t)^{2} 
\Big) \log^2(t) \\
&  \leq 4 \kappa^{-4} \widetilde{d}_{4} 
(nm)^{-2r/(2r+\gamma)}  \log^2(t).
\end{align*}
The first Network Error term is bounded in three parts aligning with the three terms within the quantity $m^{-1} (  \eta^2 t^{-2r} 
\vee (m^{-1} (\eta t^{\star})^{1 + 2\alpha})  \vee (\eta t^{\star})^{\gamma^{\prime} + 2 \alpha}))$. 
Firstly, as $t \geq (nm)^{1/(2r+\gamma)}$ and $\eta \leq 1/\kappa^2$ we get 
$\eta^2 t^{-2r} \leq \kappa^{-4} (nm)^{-2r/(2r+\gamma)}$. 
Secondly, from $t \geq (nm)^{1/(2r+\gamma)} \frac{1}{1-\sigma_2} \Big( \frac{n^{r}}{m^{r+\gamma}} \Big)^{2/((1+2\alpha)(2r+\gamma))}$
ensuring
$\eta \leq \kappa^{-2} (1-\sigma_2)  \Big( \frac{m^{r+\gamma}}{n^{r}}\Big)^{2/((1+2\alpha)(2r+\gamma))}$ we get 
\begin{align*}
    \frac{ (\eta t^{\star})^{1+2\alpha} }{m^2} 
    &\leq 
    (\kappa^{-2} 2(r+1) \log(t))^{1+2\alpha}
    \frac{ m^{2(r+\gamma)/(2r+\gamma) - 2}}{n^{2r/(2r+\gamma)}}\\
    &= 
    (\kappa^{-2} 2(r+1) \log(t))^{1+2\alpha}
    (nm)^{-2r/(2r+\gamma)}.
\end{align*}
Thirdly, from $t \geq (nm)^{1/(2r+\gamma)} \frac{1}{1-\sigma_2} \Big( \frac{n^{2r}}{m^{\gamma}} \Big)^{1/((\gamma^{\prime} + 2 \alpha)(2r+\gamma))}$ we have \\
$\eta \leq \kappa^{-2} (1-\sigma_2) \Big( \frac{m^{\gamma}}{n^{2r}}\Big)^{1/((\gamma^{\prime}+2\alpha)(2r+\gamma))} $
and so
\begin{align*}
    \frac{ (\eta t^{\star})^{\gamma^{\prime}+2\alpha} }{m}
    &\leq 
    (\kappa^{-2} 2(r+1) \log(t))^{\gamma^{\prime}+2\alpha}
    \frac{m^{\gamma/(2r+\gamma) - 1}}{n^{2r/(2r+\gamma)}}\\
    &= 
    (\kappa^{-2} 2(r+1) \log(t))^{\gamma^{\prime}+2\alpha}(nm)^{-2r/(2r+\gamma)}.
\end{align*}
Using the above three bounds we arrive at the first Network term being $\widetilde{O}((nm)^{-2r/(2r+\gamma)})$.

Now consider the second Network Error term. Since $\eta t = \kappa^{-2} (nm)^{1/(2r+\gamma)}$ and $m \geq n^{\frac{2r + 2 + \gamma}{2r+\gamma - 2}} \geq n^{\frac{1-\gamma}{2(r+\gamma) -1}}$ we have
\begin{align*}
     \Big(
 1 \vee (\eta t )^{2} \vee t^{-2}(\eta t)^{4}
 \Big) 
 \Big( \big( m^{-1} (\eta t)\big) \vee (\eta t)^{\gamma}\Big)
 \leq \Big(
 1 \vee (\eta t)^{2+\gamma} \vee t^{-2}(\eta t)^{4+\gamma}
 \Big).
\end{align*}
The second Network Error term then becomes, due to $t \geq (nm)^{1/ (2r+\gamma)}$,
\begin{align*}
& 8 \frac{\widetilde{b}_{2} \log^{4}(8n) \log^2(t) }{m^2}
\Big(
 1 \vee (\eta t)^{2+\gamma} \vee t^{-2}(\eta t)^{4+\gamma}
 \Big)\\
&   \leq
 8 (\kappa^{-2})^{2+\gamma} \widetilde{b}_{2} \log^{4}(8n) \log^2(t) 
  \frac{ (nm)^{(2+\gamma)/(2r+\gamma) }} {m^2}.
\end{align*}
For this quantity to be $\widetilde{O}((nm)^{-2r/(2r+\gamma)})$ we require $\frac{ (nm)^{(2+\gamma)/(2r+\gamma) }} {m^2} \leq 
  (nm)^{-2r/(2r+\gamma)}$ 
which is satisfied for 
$
m \geq n^{(2r+\gamma + 2)/(2r+\gamma -2)}
$.
Now ensure $\frac{t}{\log(t)} \geq 2 \frac{(1+r)}{1-\sigma_2}$. Note the previous requirements on the iterations $t$ imply 
$$
	t \geq \frac{ (nm)^{1/(2r+\gamma)}}{1-\sigma_2}
	\frac{ n^{2r/(2r+\gamma)} }{ m^{\gamma/(2r+\gamma)} } 
	\geq \frac{n^{(2r+1)/2r+\gamma}}{1-\sigma_2} \geq \frac{ n}{1-\sigma_2}  .
$$
And since $x \rightarrow x/(\log(x))$ is increasing for $x \geq 1$, the requirement $t \geq 2 \frac{ (1+r)\log(t)}{(1-\sigma_2)}$ is satisfied by  
$\frac{n }{\log ( \frac{n}{1-\sigma_2} ) } \geq 2(1+r)$.  

Now, consider choosing $\gamma^{\prime} \in [1,\gamma]$ and $\alpha \in [0,1/2]$  to minimise the number of iterations $t$. Consider the two cases   $m \geq n^{2r/\gamma}$ and $m \leq n^{2r/\gamma}$. When $m \geq n^{2r/\gamma}$  we have both $\frac{n^{2r}}{m^{\gamma}} \leq 1$ and $\frac{n^{r}}{m^{r+\gamma}} \leq 1$ so the number of iterations $t$ required is minimised by picking $\gamma^{\prime} = \gamma$  and $\alpha = 0$. Since   $2(r+\gamma) \geq 1 $
we get $\frac{ n^{2r} }{ m^{2(r+\gamma)}} \leq \frac{n^{2r/\gamma}}{m}$ and the number of iterations becomes  \\
$t = (nm)^{1/(2r+\gamma)} 
\Big[ \Big( \frac{1}{1-\sigma_2} 
\Big( \frac{n^{2r/\gamma}}{m} \Big)^{1/(2r+\gamma)}\Big) \vee 1 \Big] = (nm)^{1/(2r+\gamma)} \Big[ 
\Big( \frac{(nm)^{2r/(2r+\gamma)}}{m (1-\sigma_2)^{\gamma}} \Big)^{1/\gamma} \vee 1\Big]$. When $\frac{n^{2r}}{m^{\gamma}} \geq 1$, the number of iterations $t$ required is minimised by: setting $\gamma^{\prime} = 1$, noting $\frac{n^{2r}}{ m^{2(r+\gamma)}} \leq \frac{n^{2r}}{m^{\gamma}}$ and further picking $\alpha = 1/2$. It is clear in this case that  the number of iterations required becomes $t  = (nm)^{1/{2r+\gamma}} \frac{1}{1-\sigma_2} \Big( \frac{ n^{r}}{m^{\gamma/2}} \Big)^{1/(2r+\gamma)} =  (nm)^{1/(2r+\gamma)} \frac{ (nm)^{r/(2r+\gamma)}}{\sqrt{m}(1-\sigma_2)}$.

\end{proof}

\subsection{Useful inequalities}
\label{sec:UsefulIneq}
In this section we collect useful inequalities used within the proofs. 
\begin{lemma}
\label{lem:SeriesBound}
The following holds for $q \in \mathbb{R}$ and $t \in \mathbb{N}$ with $t \geq 3$:
\begin{align*}
\sum_{k-1}^{t-1} \frac{1}{t-k} k^{-q} \leq 2 t^{-\min(q,1)}(1+\log(t)).
\end{align*}
\end{lemma}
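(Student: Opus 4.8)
The plan is to prove the bound $\sum_{k=1}^{t-1} \frac{1}{t-k} k^{-q} \leq 2 t^{-\min(q,1)}(1+\log(t))$ by splitting the sum at the midpoint $k = \lceil t/2 \rceil$ and treating the two halves separately, since the factor $\frac{1}{t-k}$ is well-behaved on the lower half while $k^{-q}$ is well-behaved (roughly constant, of size $t^{-q}$) on the upper half.

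First I would handle the lower half $1 \leq k \leq \lfloor t/2 \rfloor$. Here $t - k \geq t/2$, so $\frac{1}{t-k} \leq \frac{2}{t}$, and therefore this part is at most $\frac{2}{t}\sum_{k=1}^{\lfloor t/2\rfloor} k^{-q}$. If $q \geq 1$ then $\sum_{k=1}^{\infty} k^{-q}$ could be large when $q$ is close to $1$, so instead I would bound $\sum_{k=1}^{\lfloor t/2 \rfloor} k^{-q} \leq 1 + \int_1^{t} x^{-q}\,dx \leq 1 + \log t$ when $q = 1$, and more generally $\sum_{k=1}^{\lfloor t/2\rfloor} k^{-q} \leq 1 + \int_1^{t}x^{-q}dx$, which for $q \le 1$ is $\le 1 + \frac{t^{1-q}-1}{1-q} \le \frac{t^{1-q}}{1-q}$-type expressions — this needs care. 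A cleaner route: for any $q$, $\sum_{k=1}^{\lfloor t/2\rfloor}k^{-q} \le t^{1-\min(q,1)}(1+\log t)$ (since for $q\ge 1$ it is $\le 1+\log t = t^0(1+\log t)$ as $1-\min(q,1)=0$, and for $q<1$ it is $\le \frac{t^{1-q}}{1-q}$, but to avoid the $\frac{1}{1-q}$ blow-up one notes the statement is only claimed for the regime where it is used; alternatively absorb via $\sum k^{-q}\le t^{1-q}\cdot(\text{number of terms weighting})$). Multiplying by $\frac{2}{t}$ gives $\le 2 t^{-\min(q,1)}(1+\log t)$, which is within the target.

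Next I would handle the upper half $\lfloor t/2\rfloor < k \leq t-1$. Here $k \geq t/2$, so $k^{-q} \leq (t/2)^{-q} = 2^{q} t^{-q}$ when $q \geq 0$, or $k^{-q} \le t^{-q}$ when $q < 0$ (since $k \le t$); in all cases $k^{-q} \le \max(1,2^q) t^{-q} \le 2 t^{-\min(q,1)}\cdot(\text{adjustment})$ — here I'd want to be slightly careful about the $\min(q,1)$ versus $q$ discrepancy when $q>1$, using $k^{-q}\le k^{-1}\cdot k^{-(q-1)}\le k^{-1}(t/2)^{-(q-1)}$ type manipulation if needed, but most cleanly just use $k^{-q} \le (t/2)^{-q}$ and note $t^{-q}\le t^{-\min(q,1)}$ for $t\ge 1$. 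Then this part is at most $2^{q}t^{-q}\sum_{k=\lfloor t/2\rfloor+1}^{t-1}\frac{1}{t-k} \le 2^q t^{-q} \sum_{j=1}^{t}\frac{1}{j} \le 2^q t^{-q}(1+\log t)$. Combining with the lower half gives the claimed bound up to the constants, and the factor $2$ in front absorbs the constants after a short verification (the two halves each contribute something bounded by $t^{-\min(q,1)}(1+\log t)$ up to a constant, and the stated constant $2$ should be checked against these — if it is tight one may need to be slightly more careful in the upper-half estimate, e.g. using $t - k \ge 1$ and a sharper integral comparison $\sum_{j=1}^{t-1}\frac1j \le 1+\log(t-1) \le 1+\log t$).

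The main obstacle I anticipate is bookkeeping the constant to land exactly on the factor $2$ rather than a larger constant, and correctly handling the edge behavior in $q$: specifically the transition at $q = 1$ (where the harmonic-type $\log$ appears on the lower half) and the case $q < 0$ or $q$ large. The $\min(q,1)$ in the exponent is designed precisely to make the bound uniform across these regimes, so the proof must verify that in the regime $q > 1$ the dominant contribution is the upper half (of order $t^{-q}(1+\log t) \le t^{-1}(1+\log t)$, consistent with exponent $\min(q,1)=1$), while for $q \le 1$ both halves are comparable. A careful split with explicit integral bounds $\sum_{k=1}^{N}k^{-q} \le 1 + \int_1^N x^{-q}dx$ and $\sum_{j=1}^{N}j^{-1}\le 1+\log N$ should make everything go through; the rest is routine. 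I expect the author's proof to follow essentially this midpoint-splitting argument.
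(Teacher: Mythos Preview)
The paper does not actually prove this lemma; its entire proof is the sentence ``See Lemma 14 in \cite{lin2017optimal}.'' Your midpoint-split strategy is the standard way to obtain the correct order $t^{-\min(q,1)}(1+\log t)$, and you correctly anticipate that the constant is the sticking point. Indeed it is a genuine gap: the split you describe yields constant $4$, not $2$. Already in the case $q=1$ each half contributes roughly $\tfrac{2}{t}(1+\log(t/2))$, and their sum $\tfrac{4}{t}(1+\log t - \log 2)$ exceeds $\tfrac{2}{t}(1+\log t)$ for every $t\ge 2$. The extra factor $2^q$ you pick up on the upper half when $q>0$ is a further obstruction that you do not resolve.

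The clean argument (presumably the one in \cite{lin2017optimal}) avoids the midpoint split entirely by first reducing to $q=1$: for $q\ge 1$ one has $k^{-q}\le k^{-1}$ since $k\ge 1$, while for $q\le 1$ one has $k^{-q}=k^{1-q}\cdot k^{-1}\le t^{1-q}k^{-1}$ since $k<t$ and $1-q\ge 0$. In either case
\[
\sum_{k=1}^{t-1}\frac{k^{-q}}{t-k}\ \le\ t^{\,1-\min(q,1)}\sum_{k=1}^{t-1}\frac{1}{k(t-k)}.
\]
The remaining sum is handled exactly by partial fractions, $\tfrac{1}{k(t-k)}=\tfrac{1}{t}\bigl(\tfrac{1}{k}+\tfrac{1}{t-k}\bigr)$, giving $\tfrac{2}{t}\sum_{k=1}^{t-1}\tfrac{1}{k}\le \tfrac{2}{t}(1+\log t)$, so the constant $2$ falls out with no slack and no case analysis on the sign or size of $q$ beyond the single threshold at $q=1$.
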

\begin{proof}
See Lemma 14 in \cite{lin2017optimal}.
\end{proof}
\begin{lemma}
\label{lem:SeriesBoundHalf}
The following holds for $q \in \mathbb{R}$ and $t \in \mathbb{N}$ with $t \geq 3$:
\begin{align*}
\sum_{k-1}^{t-1} \frac{1}{(t-k)^{1/2}} k^{-q} \leq 
4 t^{\max(1/2-q,0)}.
\end{align*}
\end{lemma}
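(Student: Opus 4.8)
\textbf{Proof proposal for Lemma \ref{lem:SeriesBoundHalf}.}
The plan is to reduce the whole estimate to the single symmetric bound
\[
S_t \;:=\; \sum_{k=1}^{t-1} \frac{1}{\sqrt{k(t-k)}} \;\le\; \pi \qquad (t\ge 3),
\]
after which the lemma follows from a one-line exponent-splitting trick. First I would handle the algebra: for every integer $1\le k\le t-1$ write $k^{-q}=k^{-1/2}\,k^{1/2-q}$. If $1/2-q\ge 0$ then $k^{1/2-q}\le (t-1)^{1/2-q}\le t^{1/2-q}$, and if $1/2-q<0$ then $k^{1/2-q}\le 1=t^{0}$; in both cases $k^{-q}\le k^{-1/2}\,t^{\max(1/2-q,0)}$. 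Substituting this into the sum gives
\[
\sum_{k=1}^{t-1}\frac{k^{-q}}{(t-k)^{1/2}}
\;\le\; t^{\max(1/2-q,0)}\sum_{k=1}^{t-1}\frac{1}{\sqrt{k(t-k)}}
\;=\; t^{\max(1/2-q,0)}\,S_t,
\]
so the claim reduces to $S_t\le 4$, and in fact $S_t\le\pi$ suffices.

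Next I would prove $S_t\le\pi$ by comparing the sum with the Beta-type integral $\int_0^t \frac{dx}{\sqrt{x(t-x)}}=\pi$ (via $x=t\sin^2\phi$, which turns the integrand into a constant). The only subtlety is that $f(x)=\big(x(t-x)\big)^{-1/2}$ is not monotone: it is decreasing on $(0,t/2)$, increasing on $(t/2,t)$, with integrable singularities at both endpoints. On the decreasing part, $f(k)\le\int_{k-1}^{k}f$ for $1\le k\le\lfloor t/2\rfloor$, so $\sum_{k=1}^{\lfloor t/2\rfloor}f(k)\le\int_0^{t/2}f=\pi/2$; on the increasing part, $f(k)\le\int_{k}^{k+1}f$ for $\lceil t/2\rceil\le k\le t-1$, so $\sum_{k=\lceil t/2\rceil}^{t-1}f(k)\le\int_{t/2}^{t}f=\pi/2$. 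Since $\{1,\dots,\lfloor t/2\rfloor\}\cup\{\lceil t/2\rceil,\dots,t-1\}=\{1,\dots,t-1\}$ (the two index sets overlap only possibly at $t/2$ when $t$ is even, which only helps an upper bound), adding the two halves yields $S_t\le\pi<4$. Combining with the display above gives $\sum_{k=1}^{t-1}(t-k)^{-1/2}k^{-q}\le \pi\, t^{\max(1/2-q,0)}\le 4\,t^{\max(1/2-q,0)}$, as required.

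The routine parts here are the two integral comparisons and the evaluation $\int_0^t f=\pi$; I expect the only genuine point requiring care to be the non-monotonicity of $f$ — one must split the sum exactly at $t/2$ and orient the rectangle-vs-integral inequality in the correct direction on each half so that the improper integrals near $0$ and $t$ are the ones that appear (they converge, being of $x^{-1/2}$ type). An entirely analogous but more wasteful argument, splitting at $t/2$ and bounding $(t-k)^{-1/2}\le (\lfloor t/2\rfloor)^{-1/2}$ together with $\sum k^{-1/2}\le 2\sqrt t$, would give a worse constant; using the Beta integral is what keeps the final constant at $4$ uniformly in $q\in\mathbb{R}$ and $t\ge 3$.
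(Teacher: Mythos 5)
Your proof is correct, and its first step—factoring $k^{-q}\le k^{-1/2}t^{\max(1/2-q,0)}$ to reduce everything to the symmetric sum $S_t=\sum_{k=1}^{t-1}(k(t-k))^{-1/2}$—is exactly the reduction the paper uses. Where you diverge is in bounding $S_t$: the paper splits the sum at $t/2$ and crudely bounds $(t-k)^{-1/2}\le\sqrt{2/t}$ on the first half (and symmetrically on the second), then controls $\sum k^{-1/2}$ by an integral, obtaining $S_t\le 4$; you instead compare $S_t$ directly to the arcsine integral $\int_0^t\bigl(x(t-x)\bigr)^{-1/2}\,dx=\pi$, orienting the rectangle-versus-integral inequality separately on the decreasing and increasing halves of the integrand. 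Your handling of the non-monotonicity and of the overlap of the two index sets at $k=t/2$ (for even $t$) is sound, and the argument yields the sharper constant $\pi$ in place of $4$; the paper's version is more elementary but wasteful by roughly a factor of $4/\pi$. Either route establishes the stated bound.
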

\begin{proof}
Begin with 
\begin{align*}
\sum_{k=1}^{t-1} \frac{1}{(t-k)^{1/2}} k^{-q}
& \leq 
t^{\max(1/2-q,0)} \sum_{k=1}^{t-1} \frac{1}{(t-k)^{1/2} k^{1/2}}.
\end{align*}
Suppose $t$ is even. The bound arises by splitting the sum and using the integral bounds 
\begin{align*}
\sum_{k=1}^{t/2} \frac{1}{(t-k)^{1/2} k^{1/2}}
\leq
\frac{\sqrt{2}}{t^{1/2}} 
\sum_{k=1}^{t/2} \frac{1}{k^{1/2}}
\leq 
\frac{\sqrt{2}}{t^{1/2}}  \bigg[ 1 + \int_{1}^{t/2} \!\! x^{-1/2} dx\bigg]
= 
\frac{\sqrt{2}}{t^{1/2}}  \bigg[ 1 + 2\bigg(\sqrt{\frac{t}{2}} - 1\bigg) \bigg]
\leq 2,
\end{align*}
 and
\begin{align*}
\sum_{k=t/2+1}^{t-1} \frac{1}{(t-k)^{1/2} k^{1/2}}
 &\leq 
\sqrt{\frac{2}{t}}
\sum_{k=t/2+1}^{t-1} \frac{1}{(t-k)^{1/2}}
 \leq  
 \sqrt{\frac{2}{t}}
 \bigg[ 1+ \int_{t/2+1}^{t-1} (t-x)^{-1/2} dx \bigg] \\
 & = 
 \sqrt{\frac{2}{t}}\bigg[1  + 2 \bigg(  \sqrt{\frac{t}{2} -1 } - 1\bigg) \bigg] \leq 2.
\end{align*}
If $t$ is odd, follow the steps above and split the sum at $k=(t-1)/2$ and $k=(t-1)/2 + 1$.
\end{proof}

\end{document}